\newcommand{\iclr}[1]{\iftoggle{iclr}{#1}{}}
\newcommand{\arxiv}[1]{\iftoggle{iclr}{}{#1}}
\newcommand{\ict}{\textstyle}
\newcommand{\ict}{}
\newcommand{\hide}[1]{}
\def\thm@space@setup{%
  \thm@preskip=8pt plus 2pt minus 4pt
  \thm@postskip=\thm@preskip
}
\Crefname{assumption}{Assumption}{Assumptions}
    \let\Cref\crtCref
    \let\cref\crtcref
\newtheorem{theorem}{Theorem}
\newtheorem{lemma}[theorem]{Lemma}
\newtheorem{assumption}{Assumption}
\newtheorem{proposition}[theorem]{Proposition}
\theoremstyle{definition}
\newcolumntype{H}{>{\setbox0=\hbox\bgroup}c<{\egroup}@{}}
\newcolumntype{Z}{>{\setbox0=\hbox\bgroup}c<{\egroup}@{\hspace*{-\tabcolsep}}}
\setlist[itemize]{leftmargin=*}
\newcommand{\B}{\mathbf{B}}
\newcommand{\E}{\mathbb{E}}
\newcommand{\R}{\mathbb{R}} %
\newcommand{\hth}{\widehat{\theta}}
\newcommand{\lsim}{{\;\raise0.3ex\hbox{$<$\kern-0.75em\raise-1.1ex\hbox{$\sim$}}\;}}
\newcommand{\gsim}{{\;\raise0.3ex\hbox{$>$\kern-0.75em\raise-1.1ex\hbox{$\sim$}}\;}}
\newcommand{\RNum}[1]{\uppercase\expandafter{\romannumeral #1\relax}}
\DeclareMathOperator*{\argmin}{arg\,min}
\DeclareMathOperator*{\argmax}{arg\,max}
\newcommand{\KL}{D_{\mathrm{KL}}}
\newcommand{\cA}{\mathcal{A}}
\newcommand{\cD}{\mathcal{D}}
\newcommand{\cL}{\mathcal{L}}
\newcommand{\cX}{\mathcal{X}}
\newcommand{\cY}{\mathcal{Y}}
\DeclareFontFamily{U}{mathx}{\hyphenchar\font45}
\DeclareFontShape{U}{mathx}{m}{n}{<-> mathx10}{}
\DeclareSymbolFont{mathx}{U}{mathx}{m}{n}
\DeclareMathAccent{\widebar}{0}{mathx}{"73}
\renewcommand{\d}{\textnormal{d}}
\DeclarePairedDelimiter{\brk}{[}{]}
\DeclarePairedDelimiter{\crl}{\{}{\}}
\DeclarePairedDelimiter{\prn}{(}{)}
\DeclarePairedDelimiter{\set}{\{}{\}}
\def\medskip{\vskip 10 pt}
\def\bigskip{\vskip 15 pt}
\def\texitem#1{\par\vspace{5pt}
\noindent\hangindent 20pt
\hbox to 20pt {\hss #1 ~}\ignorespaces}
\newcommand{\id}{\mathbf{I}}
\newcommand{\EE}{\mathbb{E}}
\newcommand{\VV}{\mathbb{V}}
\newcounter{cnt}
\xdef \csname c\Alph{cnt}\endcsname {\noexpand\mathcal{\Alph{cnt}}}%
\xdef \csname b\Alph{cnt}\endcsname {\noexpand\mathbb{\Alph{cnt}}}%
\newcommand{\leqsim}{\lesssim}
\newcommand{\nrm}[1]{\left\|#1\right\|}
\DeclarePairedDelimiterX{\ddiv}[2]{(}{)}{%
  #1\;\delimsize\|\;#2%
}
\newcommand{\KLd}{\KL\ddiv}
\newcommand{\chis}{D_{\chi^2}\ddiv}
\renewcommand{\DH}[1]{D_{\mathrm{H}}^2\left(#1\right)}
\newcommand{\defeq}{\mathrel{\mathop:}=}
\newcommand{\paren}[1]{{\left( #1 \right)}}
\newcommand{\brac}[1]{{\left[ #1 \right]}}
\newcommand{\normal}[1]{\mathsf{N}\paren{#1}}
\newcommand{\bOmega}{\overline{\Omega}}
\newcommandx{\xto}[1][1=]{x_{#1}[\Omega]}
\newcommandx{\xtc}[1][1=]{x_{#1}[\bOmega]}
\newcommandx{\cond}[2]{\EE\left[#1\left|#2\right.\!\right]}
\newcommandx{\condx}[3][1={}]{\EE_{#1}\left[#2\left|#3\right.\!\right]}
\let\oldparagraph\paragraph
\renewcommand{\paragraph}[1]{\oldparagraph{#1.}}
\newcommand{\sid}[1]{^{[#1]}}
\newcommand{\Aop}{\cA}
\newcommand{\Q}{\mathbf{Q}}
\newcommand{\Qxy}[2]{\mathbf{Q}(#1|#2)}
\newcommand{\Qd}[1]{\mathbf{Q}(\cdot|#1)}
\newcommand{\Qp}[1]{\mathbf{Q}_{\#}#1}
\newcommand{\Ps}{P^\star}
\newcommand{\Pxs}{P_X^\star}
\newcommand{\Pys}{P_Y^\star}
\newcommand{\PA}{P_A}
\renewcommand{\Ps}{P^\star}
\newcommand{\sigy}{\sigma_Y}
\newcommand{\dx}{{d_x}}
\newcommand{\dy}{{d_y}}
\newcommand{\qmk}[1]{\qm_{\theta\ind{#1}}}
\newcommand{\piy}{P_Y}
\newcommand{\tpi}{\widetilde{\pi}}
\newcommand{\scf}{\mathbf{s}}
\renewcommand{\d}{\mathrm{d}}
\newcommand{\tenpow}[1]{\times 10^{#1}}
\title{DiffEM: Learning from Corrupted Data with Diffusion Models via Expectation Maximization}
\author{
\setlength{\tabcolsep}{20pt}
\begin{tabular}{ccc}
    Danial Hosseintabar & Fan Chen & Giannis Daras \\
    {\small \texttt{danialht@mit.edu}}  & {\small \texttt{fanchen@mit.edu}} & {\small \texttt{gdaras@mit.edu}}
\end{tabular}
\vspace{1em}
\\
\setlength{\tabcolsep}{20pt}
\begin{tabular}{cc}
    Antonio Torralba & Constantinos Daskalakis \\
    {\small \texttt{torralba@mit.edu} }  & {\small \texttt{costis@csail.mit.edu}}
\end{tabular}
}
\begin{document}

\maketitle

\begin{abstract}
    Diffusion models have emerged as powerful generative priors for high-dimensional inverse problems, yet learning them when only corrupted or noisy observations are available remains challenging. In this work, we propose a new method for training diffusion models with Expectation-Maximization (EM) from corrupted data. Our proposed method, DiffEM, utilizes conditional diffusion models to reconstruct clean data from observations in the E-step, and then uses the reconstructed data to refine the conditional diffusion model in the M-step. Theoretically, we provide monotonic convergence guarantees for the DiffEM iteration, assuming appropriate statistical conditions. We demonstrate the effectiveness of our approach through experiments on various image reconstruction tasks.
\end{abstract}

\section{Introduction}

Diffusion models~\citep{song2019generative,ho2020denoising,song2020score} have emerged as powerful tools for learning high-dimensional distributions, achieving remarkable success across a broad range of generative tasks. Their effectiveness as learned priors has led to significant advances in solving inverse problems~\citep{kawar2021snips,choi2021ilvr,saharia2022palette}, including image inpainting, denoising, and super-resolution. However, in many real-world scenarios, acquiring clean training data remains difficult or costly, and can raise significant concerns, as training on clean data might lead to memorization~\citep{somepalli2023diffusion,carlini2023extracting,somepalli2023understanding,shah2025does}, posing privacy and copyright risks. While data with mild or moderate corruption is often more readily available, particularly in domains like medical imaging~\citep{Wang2016Accelerating,Zbontar2018fastMRI} and compressive sensing, training diffusion models effectively using only corrupted or noisy observations presents substantial technical challenges.

The fundamental difficulty lies in the fact that standard techniques for training diffusion models are designed for settings with access to clean data from the data distribution. When only corrupted or noisy observations are available, these techniques become inapplicable, and training diffusion models effectively reduces to learning a latent variable model from corrupted observations, a problem well-known for its theoretical and practical challenges.

Recent work \citep{rozet2024learning,bai2024expectation} has proposed addressing this challenge by applying the Expectation-Maximization (EM) method with diffusion models as priors. However, this approach faces a critical difficulty: in each E-step, the algorithm must sample from the posterior distribution given the corrupted observations, whereas it only has access to the score function of the diffusion prior. To overcome this, these works adopt ad hoc posterior sampling schemes that rely on various approximations of the posterior score function that explicitly incorporate the likelihood function. Such approximation schemes, however, are based on implicit structural assumptions about the true data distribution and the likelihood function, making their approximation errors difficult to quantify.

In this work, we propose a new method that combines diffusion models with the EM framework. Our key insight is that instead of learning a diffusion prior and then performing approximate sampling, we can directly model the posterior distribution using a conditional diffusion model \citep{saharia2022palette,daras2024survey}. The primary advantage of our approach is its independence from specific approximate posterior sampling schemes. Notably, it can handle any likelihood function, as it makes no assumptions about the data distribution and likelihood function beyond requiring that the posterior score function can be expressed by the denoiser network. Furthermore, we provide theoretical analysis of the proposed EM iteration, demonstrating its convergence under appropriate conditions on the approximation error of the denoiser network.
We validate our approach through extensive experiments on both synthetic and real-world datasets with various types of corruption, including low-dimensional manifold learning and \textcolor{black}{unconditional generation} on CIFAR-10 and CelebA.

\iclr{
\paragraph{Related work} Due to space limitations, we discuss further related work in \cref{sec:related-work}, and provide more detailed discussions of the closest works to ours in the next couple of sections.
}

\begin{figure}[t]
    \centering
    \includegraphics[width=0.9\linewidth]{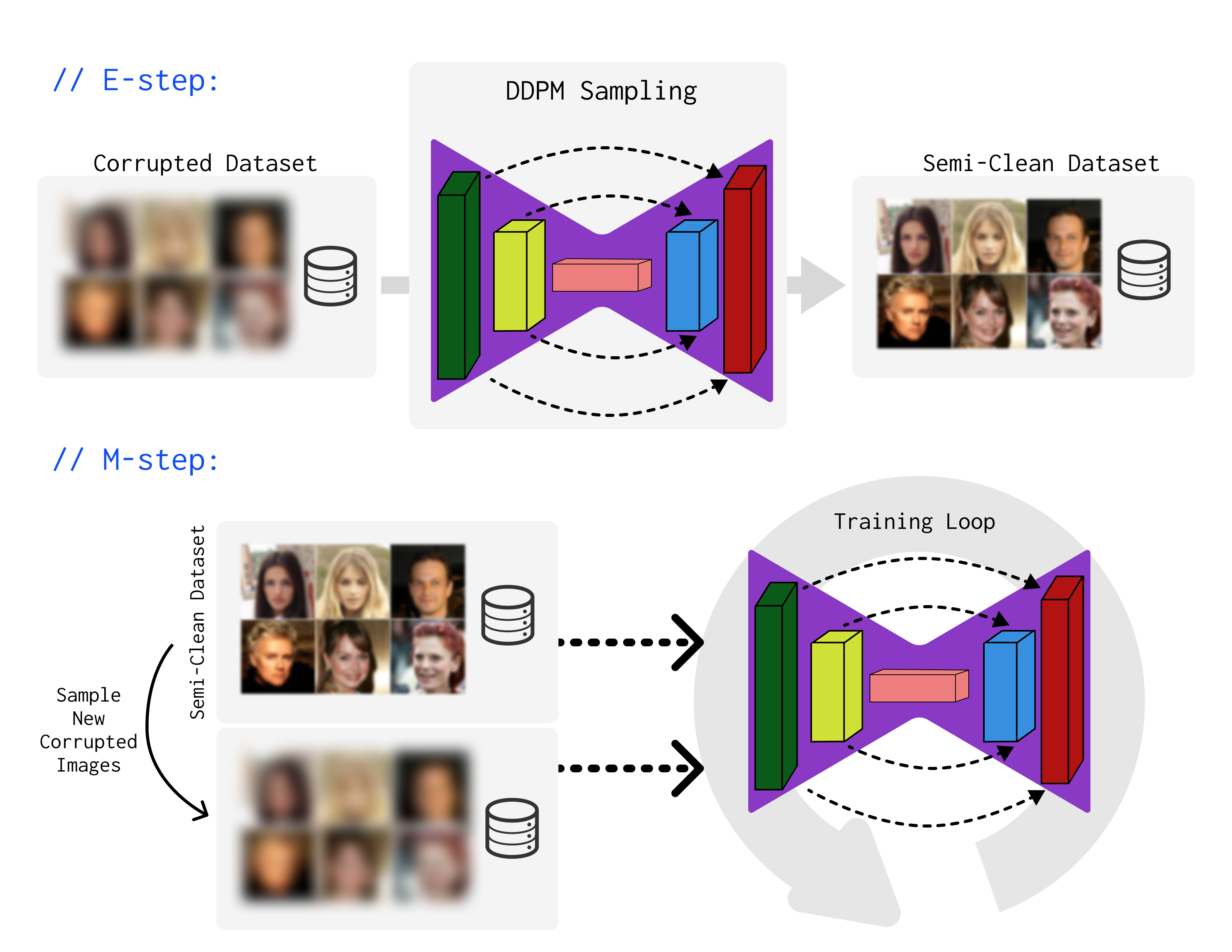}
    \caption{
    \textcolor{black}{Illustration of how the algorithm runs each EM iteration, on the top there is the Expectation step where the conditional diffusion model generates samples and in the bottom is the Maximization step where the diffusion model is trained using the generated data.}
    }
    \label{fig:celeba_data}
\end{figure}

\subsection{Preliminaries}\label{sec:prelim}

\paragraph{Problem setup}
Formally, we consider the following setup. The \emph{data distribution} $\Pxs$ is a distribution over the space $\cX$ of latent variables, and the \emph{likelihood} $\Qd{X}$ maps each point $X\in\cX$ to a distribution over the observation space $\cY$. The observation is generated as
\begin{align}\label{eq:Y-dist}
    \ict
    Y\sim \Qd{X}, \quad \text{where }X\sim 
\Pxs,
\end{align}
and we denote $\Ps$ to be the joint distribution of $(X,Y)$ and $\Pys$ to be the marginal distribution of $Y$.
This formulation encompasses classical inverse problems by specifying $\Qd{X}=\normal{\Aop(X),\sigy^2\id}$, where $\Aop:\cX\to\R^d$ is a known forward operator. 

In our setting, the learner only has access to a dataset $\set{Y\sid{1},\cdots,Y\sid{N}}$ consisting of i.i.d. observations from $\Pys$, and $\Q$ is assumed to be known. The goal is two-fold:
\begin{itemize}
    \item \textbf{Unconditional generation}: to generate new samples from the ground-truth data distribution $\Pxs$ approximately.
    \item \textbf{Posterior sampling}: to sample $X\sim \Ps(\cdot|Y)$ given an observation $Y$.
\end{itemize}
With this setup, the primary focus of recent work~\citep{daras2023ambient,daras2023consistent,rozet2024learning,bai2024expectation,daras2024consistent} has been on \textbf{reconstruction} under a special class of likelihood functions. In such settings, the latent space is $\cX=\R^{\dx}$ (consisting of ``clean images''), and there is a known distribution $\PA$ of corruption matrices $A\in\R^{d\times\dx}$. The observation is drawn as
\begin{align}\label{eq:linear-corr}
    \ict
    Y=(AX+\epsilon,A), \quad \text{where }X\sim \Pxs, A\sim \PA, \epsilon\sim \normal{0,\sigy^2\id},
\end{align}
i.e., the observation $Y \in \R^{d} \times \R^{d \times d_x}$ is a (corrupted image, corruption matrix) pair, with the image corrupted by the matrix $A\sim \PA$ and the additive Gaussian noise $\epsilon$. By choosing different distributions $\PA$ for the corruption matrix, \eqref{eq:linear-corr} can model problems including random masking~\citep{daras2023ambient,rozet2024learning,bai2024expectation} and blurring~\citep{bai2024expectation}.

\paragraph{Diffusion models}
Given samples from a data distribution $p_0$ over $\R^d$, diffusion models aim to learn how to generate new samples from $p_0$. Following \citet{song2020score}, we consider the diffusion process $(X_t)_{t\in[0,1]}$ with $X_0\sim p_0$, and $X_t|X_0\sim \normal{X_0, \sigma_t^2\id}$. Formally, the diffusion process can be described by the following stochastic differential equation (SDE):
\begin{equation}\label{eqn:diffusion-process}
    \ict
    \d X_t = g(t)\d \B_t, \qquad X_0 \sim p_0,
\end{equation}
where $g(t)^2=\frac{d\sigma_t^2}{dt}$, and $(\B_t)_{t\in[0,1]}$ is the standard Brownian motion. Let $p_t(x)$ be the density function of $X_t\in\R^d$. It is well-known that the reverse of process \eqref{eqn:diffusion-process} can be described by the following reverse-time diffusion process:
\begin{equation}\label{eqn:diffusion-process-reverse}
    \ict
    \d X_t = - g(t)^2 \nabla_x p_t(X_t)  \d t + g(t) \d \B_t, \qquad X_1\sim p_1.
\end{equation}

With $\sigma_1$ being sufficiently large, we have $p_1\approx \normal{0,\sigma_1^2\id}$. The score function $(x,t)\mapsto \nabla_x \log p_t(x)$ is typically parametrized by a neural network $\scf_\theta(x,t)$. By Tweedie's formula, $\nabla_x \log p_t(x)=\frac{\EE[X_0|X_t=x] - x}{\sigma_t^2}$,
where the expectation is taken with respect to the diffusion process \eqref{eqn:diffusion-process}. Hence, $\scf_\theta(x,t)$ can be learned by optimizing the score-matching loss.

\arxiv{
\subsection{Related work}\label{sec:related-work}

\paragraph{Learning diffusion models with corrupted datasets}
Recent advances in diffusion models~\citep{ho2020denoising,song2020score} have demonstrated remarkable success in learning high-dimensional distributions. However, training diffusion models with corrupted data presents significant challenges, as most existing techniques are designed for clean datasets, and learning latent variable models is known to be theoretically and practically difficult. Several approaches have been proposed to address this challenge using diffusion models. For linear corruption $Y\sim \normal{AX,\sigy^2\id}$ (cf. Eq. \eqref{eq:linear-corr}), methods such as SURE-score~\citep{aali2023solving}, GSURE~\citep{kawar2023gsure}, and Ambient-Diffusion~\citep{daras2023ambient,aali2025ambient,daras2025how} train the denoiser network using a surrogate loss function. Specializing to Gaussian corruption $Y\sim \normal{X,\sigy^2\id}$, \citet{daras2023consistent,daras2024consistent} propose enforcing \emph{consistency} of the diffusion model to enable generalization to unseen noise levels, while \citet{lu2025sfbd} develop an iterative scheme to refine the diffusion prior. Recent work \citep{rozet2024learning,bai2024expectation} identifies the Expectation-Maximization (EM) method as a promising framework for training diffusion priors with linearly corrupted observations. However, as these EM approaches employ diffusion models as \emph{priors}, they rely heavily on approximation schemes for posterior sampling (detailed discussion in \cref{sec:EM-diffusion-discuss}). 

\paragraph{Solving inverse problems with diffusion models} 
Diffusion models have also been shown as powerful priors for a wide range of inverse problems in computer vision and medical imaging. A line of work—including SNIPS~\citep{kawar2021snips}, ILVR~\citep{choi2021ilvr}, DDRM~\citep{kawar2022denoising}, Palette~\citep{saharia2022palette}, and DPS~\citep{chung2022diffusion}, among others—has demonstrated the effectiveness of both unconditional and conditional diffusion models in addressing various tasks, such as super-resolution, inpainting, deblurring, and compressed sensing. As surveyed by \citet{daras2024survey}, many of these approaches leverage learned diffusion priors and perform posterior sampling through approximations of the posterior score function, and the previous work on EM~\citep{rozet2024learning,bai2024expectation} also follows this approach. Independently of our work, ~\citep{modi2025generativemodelingblackboxcorruptions} propose an EM-based method that leverages stochastic interpolants and trains a model to learn a transport map from the corrupted data distribution to a clean prior. Their approach employs an expectation–maximization procedure that is conceptually similar to ours.

}

\section{Expectation-Maximization Approach}\label{sec:EM}

\newcommand{\ELBO}[1]{\cL_{\mathrm{ELBO},#1}}
\newcommand{\thhat}{{\widehat{\theta}}}
\newcommand{\ind}[1]{^{\scriptscriptstyle (#1)}}
\newcommand{\qm}{q}

\textcolor{black}{ When applied to our setup, the Expectation-Maximization (EM) method optimizes over a class of parameterized latent variable models $\set{\qm_\theta(x,y)}_{\theta}$ that aims to represent the joint ground-truth distribution of $(X,Y)$. Here, $\qm_\theta(x,y): \mathcal{X}\times\mathcal{Y}\rightarrow\R_{\ge 0}$ is the probability density function associated with the model parametrized by parameter $\theta$, and we denote $\qm_\theta(y): \mathcal{Y}\rightarrow \R_{\ge 0}$ to be the probability density function of the marginal distribution of the observable $Y$.} EM seeks a parameter $\theta$ that maximizes the population log-likelihood of the observable variable:
\begin{align*}
    \ict
    \max_{\theta} \cL(\theta)\defeq \EE_{Y\sim\Pys} \log \qm_\theta(Y).
\end{align*}
This optimization problem is equivalent to minimizing the KL divergence between $\Pys$ and $\qm_\theta(y)$. However, direct optimization is computationally intractable for most problems. To overcome this computational challenge, each step of the EM method optimizes the following ELBO lower bound with a parameter $\thhat$:
\begin{align*}
    \ict
    \cL(\theta)\geq \EE_{Y\sim \Pys}\EE_{X\sim \qm_\thhat(X|Y)} \log \frac{\qm_\theta(X,Y)}{\qm_\thhat(X|Y)}.
\end{align*}
In particular, the EM algorithm can be succinctly written as: Starting from an initial point $\theta^{(0)}$, iterate
\begin{align*}
    \theta\ind{k+1}=\argmax_{\theta} \EE_{Y\sim \Pys}\EE_{X\sim \qm_{\theta\ind{k}}(X|Y)} \log \qm_\theta(X,Y).
\end{align*}
In our setting, since the likelihood $\Q$ is known and simple, the parametrized model should satisfy $\qm_\theta(x,y)=\Q(Y=y|X=x)\qm_\theta(x)$. In this case, the EM iterations reduce to
\begin{align}\label{eq:latent-EM}
    \ict
    \theta\ind{k+1}=\argmax_{\theta} \EE_{Y\sim \Pys}\EE_{X\sim \qm_{\theta\ind{k}}(X|Y)} \log \qm_\theta(X).
\end{align}
This specialization of EM has been studied in \citep{aubin2022mirror,rozet2024learning,bai2024expectation}, and it is also the basis of our framework.
To simplify the notation, we consider the {\em mixture posterior distribution} $\pi\ind{k}$ with density $\pi\ind{k}(x)=\EE_{Y\sim \Pys}\brac{\qm_{\theta\ind{k}}(x|Y)}$, which is a mixture with respect to the observation distribution $\Pys$ of the posteriors $\qm_{\theta\ind{k}}(X|Y)$ ~\citep{rozet2024learning}. Then, the EM update \eqref{eq:latent-EM} can be rewritten as
\begin{align}\label{eq:latent-EM-prior}
    \ict
    \theta\ind{k+1}=\argmin_{\theta} \KLd{\pi\ind{k}(x)}{ \qm_\theta(x)},
\end{align}
i.e., the model $q_{\theta\ind{k+1}}$ minimizes the distance to the mixture posterior distribution $\pi\ind{k}$.
Crucially, to implement this update, we need to be able to sample from the conditional distribution $\qm_{\theta\ind{k}}(X|Y)$. 

\subsection{Prior approach: EM with diffusion priors}\label{sec:EM-diffusion-discuss}
In this section, we briefly review how prior work \citep{rozet2024learning,bai2024expectation} performs posterior sampling with diffusion models as priors. Their methods are restricted to the \emph{linear corruption model} \cref{eq:linear-corr}, where the observation is $Y=(AX+\epsilon,A)$, where $\epsilon\sim \normal{0,\sigy^2\id}$ is the noise and $A\sim \PA$ is a random corruption matrix. For simplicity, to describe these results, we focus on the case where $A$ is fixed, i.e.~$\Qd{X}=\Q_A(\cdot|{X})=\normal{AX,\sigy^2\id}$.

In the EM approach of \citet{rozet2024learning,bai2024expectation}, the latent variable models are described by diffusion models. More precisely, each $\theta$ parametrizes a score function $\scf_\theta(x,t)$, and $\qm_\theta(x)$ corresponds to the distribution of $X_0$ obtained by running the backward diffusion process with the score function $\scf_\theta$. However, to sample from $\qm_{\theta}(X|Y)$, one needs to approximate the conditional score function $\nabla_{x} \log \qm_\theta(X_t=x|Y=y)$. Following previous work on posterior sampling with diffusion priors~\citep[etc.]{chung2022diffusion}, the conditional score is decomposed according to Bayes' rule:
\begin{align*}
    \ict
    \nabla_{x} \log \qm_\theta(X_t=x|Y)=\nabla_x \log \qm_\theta(Y|X_t=x)+\nabla_x \log \qm_\theta(X_t=x).
\end{align*}
The second term is given by the score function $\scf_\theta(x,t)$. To approximate the first term, %
\citet{rozet2024learning} applies a Gaussian approximation $\qm_\theta(X=\cdot|X_t=x)\approx \normal{\EE_\theta[X|X_t=x], \VV_\theta[X|X_t=x]}$. Consequently, the conditional distribution of $Y$ is approximately
\begin{align*}
    \ict
    \qm_\theta(Y=\cdot|X_t=x)\approx \normal{A\EE_\theta[X|X_t=x], \sigy^2\id+A\VV_\theta[X|X_t=x]A^\top}.
\end{align*}
Then, to calculate $\nabla_x \log \qm_\theta(Y|X_t=x)$, \citet{rozet2024learning} introduces moment matching techniques to approximate the variance function $\VV_\theta[X|X_t=x]$. Alternatively, \citet{bai2024expectation} applies a simpler approximation $\qm_\theta(Y=\cdot|X_t=x)\approx \normal{A\EE_\theta[X|X_t=x], \sigy^2\id}$.

However, these approximations all rely on the assumption that $\qm_\theta(X_0=\cdot|X_t=x)$ is close to a Gaussian distribution. This assumption may not hold for general diffusion priors, which are highly multi-modal. Therefore, errors in these approximation schemes can be difficult to control. Furthermore, even when the learned diffusion prior $\qm_\theta$ is close to the ground truth, the posterior distribution of $X|Y$ (obtained by approximating the score $\nabla_{x} \log \qm_\theta(X_t=x|Y)$) might not accurately represent the true conditional distribution $\qm_\theta(X|Y)$ under the diffusion prior $q_\theta(X)$. 

Additionally, the moment matching techniques of \citet{rozet2024learning} are rather sophisticated and specialized to \cref{eq:linear-corr}. For general likelihood with non-linear transformations, calculating the score $\nabla_x \log \qm_\theta(Y|X_t=x)$ can be challenging even under the Gaussian approximation assumption.

\subsection{Our Approach: EM with conditional diffusion model}\label{sec:cond-DM}

Instead of parametrizing the data distribution $\qm_{\theta}(x)$ using a diffusion model, we directly model the posterior distribution $\qm_{\theta}(x|y)$ through a conditional score function network $\scf_\theta(x,t|y)$. Below, we describe the corresponding conditional diffusion process for generating posterior samples.

\paragraph{Conditional diffusion process}
Given a latent variable model $\qm$, we consider the diffusion process
\begin{align}\label{eqn:cond-diffusion-process}
    \ict
    (X_0,Y)\sim \qm, \qquad
    \d X_t=g(t) \d\B_t.
\end{align}
Let $p$ be the joint distribution of $(\crl*{X_t}_{t\in[0,1]},Y)$.
To sample from $\qm(X_0|Y)$, we consider the following reverse-time process:
\begin{equation}\label{eqn:cond-diffusion-process-rev}
    \ict
    \d X_t = - g(t)^2 \scf_\theta (X_t, t| Y)  \d t + g(t) \d \B_t, \qquad X_1\sim q_1(\cdot|Y),
\end{equation}
where the network $\scf_\theta$ directly approximates the true conditional score function
\begin{align}
\ict
    \scf_\theta(x,t|y)\approx \nabla_x \log p(X_t=x|y)=\frac{\cond{X_0}{ X_t=x, Y=y } - x}{\sigma_t^2},
\end{align}
and the expectation is taken over the process \eqref{eqn:cond-diffusion-process} (see e.g. \citep{daras2024survey}). For a given parameter $\theta$ that parameterizes the conditional denoiser network $\scf_\theta$, we let $q_\theta(X=\cdot|Y)$ be the distribution of $X_0$ generated by \cref{eqn:cond-diffusion-process-rev}. In particular, when $\scf_\theta(x,t|y)=\nabla_x \log p(X_t=x|y)$, the reverse process \eqref{eqn:cond-diffusion-process-rev} indeed generates $X_0\sim q(\cdot|Y)$, i.e., $q_\theta(\cdot|Y)=q(\cdot|Y)$.

\newcommand{\LSM}[1]{L_{\mathrm{SM},#1}}

\paragraph{EM with conditional diffusion models}
Based on the conditional diffusion process, we propose the EM procedure \cref{alg:EM}, using a conditional diffusion model to learn the \emph{posterior} directly.

\newcommand{\algcomment}[1]{\textcolor{blue!70!black}{\small{\texttt{\textbf{//\hspace{2pt}#1}}}}}
\iclr{
\vspace{-5pt}
}
\begin{algorithm}[t]
\begin{algorithmic}
\REQUIRE Dataset of corrupted observations $\cD_Y=\crl*{Y\sid{1},\cdots,Y\sid{N}}$, likelihood $\Qd{X}$, and a initialization for the conditional model $\theta\ind{0}$.
\FOR{$k=0,1,\cdots,K-1$}
\STATE \algcomment{E-step:}
\FOR{$i\in[N]$}
\STATE Generate the reconstruction $X\sid{i}\sim \qm_{\theta\ind{k}}(\cdot|Y\sid{i})$ using the current conditional model $\theta\ind{k}$.
\ENDFOR
\STATE \algcomment{M-step:} 
\STATE Train a new conditional diffusion model using the dataset $\cD\ind{k}_X=\set{X\sid{1},\cdots,X\sid{N}}$ by minimizing the objective provided in \cref{def:cond-score-loss}:
\begin{align*}
    \theta\ind{k+1}=\argmin_\theta \LSM{k}(\theta).
\end{align*}
\ENDFOR
\ENSURE (1) The conditional diffusion model $\theta\ind{K}$, and
\ENSURE (2) An \emph{unconditional} diffusion model $\hth$ trained on the dataset $\cD_X\ind{K-1}$.
\end{algorithmic}
\caption{DiffEM: Expectation-Maximization with a conditional diffusion model}\label{alg:EM}
\end{algorithm}
\iclr{
}

{
In the E-step, the algorithm generates the dataset $\cD_X\ind{k}=\set{X\sid{1},\cdots,X\sid{N}}$ consisting of the reconstruction $X\sid{i}\sim \qm_{\theta\ind{k}}(\cdot|Y\sid{i})$.
Then, in the M-step, the algorithm uses the dataset $\cD_X\ind{k}$  to train the conditional diffusion model $\theta\ind{k+1}$, so that it learns to sample from $\widehat{P}\ind{k}(X|Y)$, the posterior of $\widehat{P}\ind{k}(X,Y)$ which samples $X\sim \cD_X\ind{k}$ and then samples $Y\sim \Qd{X}$. }
To train this model, we consider the following conditional score matching loss:
\begin{align}\label{def:cond-score-loss}
    \ict
    \LSM{k}(\theta)= \int_{0}^1 \lambda_t \EE_{X\sim \cD_X\ind{k}, Y\sim \Qd{X}} \EE_{X_t=X+\sigma_t Z}\nrm{ \scf_\theta(X_t,t|Y) +Z }^2 \d t,
\end{align}
where $Z\sim \normal{0,\id}$ is the unit noise, and $\lambda_t\geq 0$ is a weight sequence. It is straightforward to verify that, assuming the network $\scf_\theta$ is expressive enough, the minimizer $\theta^\star$ of $\LSM{k}$ satisfies %
$\scf_{\theta^\star}(x,t|y)=\frac{\cond{X_0}{ X_t=x, Y=y } - x}{\sigma_t^2}$,
where the conditional expectation is taken with respect to the distribution sampling variables as $X_0\sim \cD_X\ind{k}$, $Y\sim \Qd{X_0}$, $X_t\sim \normal{X_0,\sigma_t^2\id}$. Therefore, as long as the M-step is done successfully, we expect to have $\qm_{\theta\ind{k+1}}(X|Y)\approx \widehat{P}\ind{k}(X|Y)$ (cf. \cref{sec:EM-converge}). %

\paragraph{The advantage of conditional diffusion model}
Unlike approaches that rely on ad hoc approximation schemes for the posterior score function using unconditional diffusion models~\citep{rozet2024learning,bai2024expectation}, our framework directly employs a conditional diffusion model. Both the \emph{data distribution} and the \emph{likelihood function} are implicitly encoded in this model through the minimization of the conditional score matching loss~\eqref{def:cond-score-loss}. In experiments (\cref{sec:experiments}), we observe that DiffEM consistently outperforms EM methods with diffusion priors. As predicted by our theoretical analysis (\cref{sec:EM-converge}), this improvement is largely due to the fact that conditional models avoid the approximation bottleneck inherent in heuristic posterior sampling schemes.

\paragraph{Output: Posterior sampler $\theta\ind{K}$ and diffusion prior $\hth$} 
Our framework is designed to address two complementary goals: (1) posterior sampling and (2) \textcolor{black}{unconditional generation} (cf. \cref{sec:prelim}). The conditional diffusion model trained by DiffEM naturally serves as a posterior sampler. For \textcolor{black}{unconditional generation}, we leverage the reconstructed dataset $\cD_X\ind{K-1}$ generated during the final EM iteration, and train an unconditional diffusion prior on this dataset. In particular, when the target application requires only a diffusion prior~\citep{daras2023ambient,rozet2024learning,bai2024expectation}, we may directly use $\hth$. In such cases, the conditional model adopted by our approach primarily serves as a means to accelerate EM convergence.

\newcommand{\Ni}{T_{\sf init}}
\newcommand{\Nf}{T_{\sf ft}}
\newcommand{\Nu}{T_{\sf u}}

\paragraph{Computational efficiency of DiffEM}
The computational cost of DiffEM can be decomposed as
\begin{align}\label{eq:comp-cost}
    \text{Total Time}=\Ni+K\cdot \Nf+\Nu,
\end{align}
where
$K$ is the number of EM iterations, 
$\Ni$ is the time of training a standard conditional diffusion model from scratch, 
$\Nf\leq \Ni$ is the average time of \emph{fine-tuning} the conditional diffusion model for each M-step,
and $\Nu$ is the cost of training an unconditional model to output. The cost $\Ni\geq \Nf$ of training diffusion model is intrinsic to diffusion-based learning methods. Thus, DiffEM can be interpreted as increasing the training cost by a multiplicative factor of $K$ (the number of EM iterations), which we view as the \emph{unavoidable} cost of working with only corrupted data. 

In general, the computational cost of EM-based methods~\citep{rozet2024learning,bai2024expectation} can always be decomposed as \cref{eq:comp-cost}. In our experiments, we compare the computation time $K$, $\Ni$, and $\Nf$ of DiffEM and EM-MMPS in CIFAR-10 experiments (\cref{tab:comp-cost}).

\section{Monotonic Improvement Property and Convergence}\label{sec:EM-converge}

\newcommand{\epsx}{\widetilde{\varepsilon}_{\sf SM}}
\newcommand{\epskl}{\varepsilon_{\sf SM}}
\newcommand{\Post}{P}

In this section, we analyze the convergence properties of the EM iteration. As observed by \citet{aubin2022mirror}, when the iteration \eqref{eq:latent-EM-prior} is \emph{exact}, i.e., when the sample size is infinite and the conditional model $\qm_{\theta\ind{k+1}}$ learns the mixture posterior exactly in each M-step, the EM iteration is equivalent to \emph{mirror descent} in the space of measures. Therefore, the convergence of the \emph{exact} EM iteration follows immediately from the guarantees of mirror descent. 

We study the DiffEM iteration, taking the \emph{score-matching error} introduced by the M-step into account. 
{For simplicity, we analyze the EM iteration with \emph{fresh} corrupted samples. Specifically, we consider the variant of \cref{alg:EM} where, at each iteration $k=0,1,\cdots,K-1$, a new dataset of corrupted observations $\cD_Y\ind{k}=\crl*{Y\sid{1},\cdots,Y\sid{N}}\sim \Pys$ is drawn in the E-step. We continue to refer to this procedure as DiffEM throughout this section. 

Under this variant, for each $k$, the reconstructed dataset $\cD_X\ind{k}=\set{X\sid{1},\cdots,X\sid{N}}$ consists of i.i.d samples from the posterior mixture distribution $\pi\ind{k}=\EE_{Y\sim\Pys}[\qm_{\theta\ind{k}}(\cdot|Y)]$. 
We let $\Post\ind{k}$ be the joint probability distribution of $(X,Y)$ under $X\sim \pi\ind{k}, Y\sim\Qd{X}$, and write $\piy\ind{k}$ for the marginal of $Y$.} The convergence is measured in terms of $\KLd{ \Pys }{ \piy\ind{k} }$, the Kullback-Leibler (KL) divergence between the true observation distribution $\Pys$ and the distribution $\piy\ind{k}$. %
Intuitively, this measures how plausible the prior $\pi\ind{k}$ is by comparing the induced observation distribution $\piy\ind{k}$ to $\Pys$.  \footnote{Here, the convergence is not measured as the divergence between the data distribution $\Pxs$ and $\pi\ind{k}$ because in general, the problem~\eqref{eq:Y-dist} might not be \emph{identifiable}, i.e., there can exist a distribution $P_X'\neq \Pxs$ that induces the same observation distribution $\Qp{P_X'}=\Pys$. Therefore, convergence of the data distribution can only be obtained under the additional assumption of \emph{identifiability} (cf. \cref{asmp:RSI}). }

\paragraph{Score-matching error}
We define the \emph{score-matching error} of the $k$th M-step as
\begin{align*}
    \epskl\ind{k}\defeq \EE_{Y\sim \Pys} \KLd{ \qmk{k+1}(\cdot|Y) }{ \Post\ind{k}(\cdot|Y) },
\end{align*}
which measures the KL divergence between the conditional diffusion model $\qmk{k+1}$ learned in the $k$th M-step and the true posterior $\Post\ind{k}(\cdot|Y)$. This error can be decomposed into two components: (1) the error of the learned score function, which is the statistical error of score matching~\eqref{def:cond-score-loss} with a finite sample size, and (2) the sampling error, which comes from the discretized backward diffusion process~\eqref{eqn:cond-diffusion-process-rev} starting from a noisy Gaussian. When the denoiser network is sufficiently expressive, the score matching error can be upper bounded through statistical learning theory~\citep[etc.]{dou2024optimal,zhang2024minimax,wibisono2024optimal,chen2024learning,gatmiry2024learning}. The sampling error is addressed by existing work on backward diffusion sampling \citep[see e.g.,][]{chen2022sampling,conforti2023score, conforti2025kl}). Therefore, under appropriate conditions, it can be shown that the score-matching error $\epskl\ind{k}\to 0$ as the sample size $N$ increases.

\paragraph{Monotonicity of EM}
Our first result (shown in~\cref{appendix:proof of lem:EM-improve}) is the following approximate \emph{monotonicity} property of the EM iteration in terms of the statistical error $\epskl\ind{k}$.
\begin{lemma}[Monotonic improvement]\label{lem:EM-improve}
For any $k\geq 0$, it holds that
\begin{align*}
    \underbrace{ \KLd{ \Pys }{ \piy\ind{k+1} } }_{\text{error of prior $\pi\ind{k+1}$}}
    \leq \underbrace{ \KLd{ \Pys }{ \piy\ind{k} } }_{\text{error of prior $\pi\ind{k}$}}- \underbrace{ \KLd{ \pi\ind{k+1} }{ \pi\ind{k} } }_{ \text{difference between priors} }+ \underbrace{ \epskl\ind{k} }_{ \text{score-matching error of }\qmk{k+1} }.
\end{align*}
\end{lemma}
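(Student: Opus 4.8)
The plan is to recast the claim as the classical EM \emph{monotone-likelihood} statement and then carry the score-matching error as an additive slack. Writing $\cL(\mu)\defeq \EE_{Y\sim\Pys}[\log \Qp{\mu}(Y)]$ for the population log-likelihood of the observation induced by a prior $\mu$ on $\cX$, I first record that, since $\piy\ind{k}=\Qp{\pi\ind{k}}$,
\begin{equation*}
\KLd{\Pys}{\piy\ind{k}} = \EE_{Y\sim\Pys}[\log\Pys(Y)] - \cL(\pi\ind{k}).
\end{equation*}
The first term is a constant independent of $k$, so the target inequality is \emph{equivalent} to the likelihood-ascent bound $\cL(\pi\ind{k+1}) - \cL(\pi\ind{k}) \geq \KLd{\pi\ind{k+1}}{\pi\ind{k}} - \epskl\ind{k}$, which is what I would actually prove.

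The workhorse is the exact ELBO identity. For any prior $\mu$ and any conditional law $r(\cdot|y)$, letting $P_\mu(x|y)\propto \mu(x)\Q(y|x)$ denote the posterior under $\mu$ (so that $P_{\pi\ind{k}}=\Post\ind{k}$), expanding $\log P_\mu(X|y)=\log\mu(X)+\log\Q(y|X)-\log\Qp{\mu}(y)$ inside the KL term gives
\begin{equation*}
\cL(\mu) = \underbrace{\EE_{Y\sim\Pys}\,\EE_{X\sim r(\cdot|Y)}\!\left[\log\frac{\mu(X)\,\Q(Y|X)}{r(X|Y)}\right]}_{=:\ \cF(\mu,r)} + \EE_{Y\sim\Pys}\KLd{r(\cdot|Y)}{P_\mu(\cdot|Y)}.
\end{equation*}
I would apply this identity with the \emph{single} reference $r=\qmk{k+1}$ at both $\mu=\pi\ind{k+1}$ and $\mu=\pi\ind{k}$, then subtract. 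The posterior-matching term at $\mu=\pi\ind{k}$ is exactly $\EE_{Y\sim\Pys}\KLd{\qmk{k+1}(\cdot|Y)}{\Post\ind{k}(\cdot|Y)}=\epskl\ind{k}$, while the one at $\mu=\pi\ind{k+1}$ is nonnegative and is dropped, yielding
\begin{equation*}
\cL(\pi\ind{k+1}) - \cL(\pi\ind{k}) \geq \cF(\pi\ind{k+1},\qmk{k+1}) - \cF(\pi\ind{k},\qmk{k+1}) - \epskl\ind{k}.
\end{equation*}

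The crucial step, which I expect to carry the real content, is evaluating the ELBO difference. Because $\cF(\mu,r)$ depends on $\mu$ only through the $\EE_{X\sim r(\cdot|Y)}[\log\mu(X)]$ piece, the $\log\Q$ and $\log r$ contributions cancel and the difference collapses to $\EE_{Y\sim\Pys}\,\EE_{X\sim\qmk{k+1}(\cdot|Y)}[\log\pi\ind{k+1}(X)-\log\pi\ind{k}(X)]$. Here I invoke the defining property of the mixture posterior, $\pi\ind{k+1}(x)=\EE_{Y\sim\Pys}[\qmk{k+1}(x|Y)]$, which lets me replace the outer double expectation $\EE_{Y}\EE_{X\sim\qmk{k+1}(\cdot|Y)}$ by the single expectation $\EE_{X\sim\pi\ind{k+1}}$; the difference then becomes precisely $\KLd{\pi\ind{k+1}}{\pi\ind{k}}$, completing the bound. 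This alignment between the reference conditional used in the ELBO and the very mixture defining $\pi\ind{k+1}$ is what manufactures the negative KL gap that drives monotonicity, and it is the analogue of the Bregman/strong-convexity term in the mirror-descent view of \citet{aubin2022mirror}. The remaining work is routine bookkeeping: verifying the ELBO identity and confirming finiteness of the expectations so the subtraction of likelihoods is justified.
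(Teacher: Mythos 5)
Your proof is correct, but it takes a genuinely different route from the paper's. The paper argues by direct computation: it writes $\KLd{\Pys}{\piy\ind{k}}-\KLd{\Pys}{\piy\ind{k+1}}=\EE_{Y\sim\Pys}\log\bigl(\piy\ind{k+1}(Y)/\piy\ind{k}(Y)\bigr)$, uses Bayes' rule to express the ratio $\piy\ind{k+1}(y)/\piy\ind{k}(y)$ as $\EE_{X\sim \qmk{k+1}(\cdot|y)}\bigl[\tfrac{\Post\ind{k}(X|y)}{\qmk{k+1}(X|y)}\cdot\tfrac{\pi\ind{k+1}(X)}{\pi\ind{k}(X)}\bigr]$, and then applies Jensen's inequality once; the resulting expression splits into $\KLd{\pi\ind{k+1}}{\pi\ind{k}}-\epskl\ind{k}$ via the mixture property of $\pi\ind{k+1}$, exactly as in your last step. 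You instead recast the claim as likelihood ascent and run the classical EM free-energy argument: the exact ELBO decomposition with a single reference conditional $r=\qmk{k+1}$, evaluated at both priors and subtracted, where the only inequality is dropping the nonnegative term $\EE_{Y\sim\Pys}\KLd{\qmk{k+1}(\cdot|Y)}{P_{\pi\ind{k+1}}(\cdot|Y)}$, i.e., the posterior mismatch under the \emph{new} prior. The two arguments are in fact dual: a short calculation shows that the Jensen gap in the paper's proof equals precisely your dropped KL term, so both proofs discard the same slack. What your route buys: it exhibits the lemma as an exact identity (the stated inequality is tight up to exactly that posterior-mismatch term), which is more informative and connects cleanly to classical EM monotonicity and the mirror-descent view of \citet{aubin2022mirror}. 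What the paper's route buys: it is shorter and more self-contained, requiring neither the likelihood functional $\cL$ nor the ELBO machinery, and it skips your bookkeeping step of cancelling the constant $\EE_{Y\sim\Pys}[\log\Pys(Y)]$ (which must be finite for your reduction to be valid; the paper's difference-of-KLs manipulation carries the same implicit regularity assumption, so this is a cosmetic difference rather than a gap).
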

\vspace{-10pt}
Therefore, when the statistical error $\epskl\ind{k}\to 0$, the divergence $\KLd{ \Pys }{ \piy\ind{k} }$ is monotonically decreasing. In other words, in the EM iteration, the observation distribution induced by prior $\pi\ind{k+1}$ is always closer to $\Pys$ compared to the observation distribution induced $\pi\ind{k}$, modulo the score-matching error $\epskl\ind{k}$. In \cref{ssec:CIFAR-improve}, we corroborate this property in experiments, showing that DiffEM can improve upon the learned prior produced by EM-MMPS~\citep{rozet2024learning}. 

\paragraph{Convergence rate} Beyond monotonicity, we show that the EM iteration enjoys a convergence rate guarantee. However, this guarantee requires that the conditional model achieves small approximation error measured in the latent space. Specifically, for each $k\geq 0$, we define the error
\begin{align*}
    \ict
    \epsx\ind{k}
    =\EE_{(X,Y)\sim \Ps}\brk*{\log\frac{\Post\ind{k}(X|Y)}{\qmk{k+1}(X|Y)}},
\end{align*}
which measures the closeness of the posterior likelihoods computed under $\Post\ind{k}$ and $\qmk{k+1}$ with respect to samples $(X,Y)\sim \Ps$.
The error $\epsx\ind{k}$ can be larger than the $\epskl\ind{k}$ since it is measured under the unknown prior distribution $\Pxs$. Nevertheless, we show that $\epsx\ind{k}$ can be related to $\epskl\ind{k}$ under appropriate assumptions (detailed in \cref{appdx:X-to-KL}). Below, we state the convergence guarantee of the EM iteration. The proof is in~\cref{appdx:converge}.
\begin{proposition}[Convergence of EM iteration]\label{prop:EM-converge}
For each $K\geq 0$, we have
\begin{align*}
    \ict
    \min_{k\leq K} \KLd{ \Pys }{ \piy\ind{k} }\leq \frac{1}{K+1}\sum_{i=0}^K \KLd{ \Pys }{ \piy\ind{k} }\leq \frac{\KLd{ \Pxs }{ \pi\ind{0} }}{ K+1 } + \max_{k\leq K} \epsx\ind{k}.
\end{align*}
\end{proposition}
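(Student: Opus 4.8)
The plan is to run the standard mirror-descent/EM telescoping argument with the potential $\Phi_k\defeq \KLd{\Pxs}{\pi\ind{k}}$, reducing everything to the per-step descent inequality
\begin{align*}
    \KLd{\Pys}{\piy\ind{k}} \leq \Phi_k - \Phi_{k+1} + \epsx\ind{k}.
\end{align*}
Granting this, the first inequality of the statement is immediate (a minimum is at most an average), and the second follows by summing over $k=0,\dots,K$: the potential telescopes to $\Phi_0-\Phi_{K+1}$, one drops $\Phi_{K+1}\geq 0$, and bounds $\sum_{k=0}^K\epsx\ind{k}\leq (K+1)\max_{k\leq K}\epsx\ind{k}$, then divides by $K+1$.

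To prove the per-step inequality I would first record an exact identity from the chain rule for KL divergence applied to the joint laws $\Ps$ and $\Post\ind{k}$, both of which factor through the same channel $\Q$. Decomposing $\KLd{\Ps}{\Post\ind{k}}$ in the order $X$-then-$Y$ yields $\KLd{\Pxs}{\pi\ind{k}}$ (the conditional-on-$X$ terms are both $\Q(\cdot|X)$ and cancel), whereas decomposing it in the order $Y$-then-$X$ yields $\KLd{\Pys}{\piy\ind{k}}+\EE_{Y\sim\Pys}\KLd{\Ps(\cdot|Y)}{\Post\ind{k}(\cdot|Y)}$. Equating the two gives
\begin{align*}
    \KLd{\Pys}{\piy\ind{k}} = \Phi_k - \EE_{Y\sim\Pys}\KLd{\Ps(\cdot|Y)}{\Post\ind{k}(\cdot|Y)}.
\end{align*}
It therefore suffices to show $\Phi_{k+1}\leq \EE_{Y\sim\Pys}\KLd{\Ps(\cdot|Y)}{\Post\ind{k}(\cdot|Y)}+\epsx\ind{k}$.

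The key step is to exploit that both marginals are $\Pys$-mixtures of posteriors over the same mixing measure: $\pi\ind{k+1}(x)=\EE_{Y\sim\Pys}[\qmk{k+1}(x|Y)]$ by definition, and $\Pxs(x)=\EE_{Y\sim\Pys}[\Ps(x|Y)]$ since averaging the true posterior against $\Pys$ recovers the true prior. Joint convexity of KL then gives
\begin{align*}
    \Phi_{k+1}=\KLd{\Pxs}{\pi\ind{k+1}} \leq \EE_{Y\sim\Pys}\KLd{\Ps(\cdot|Y)}{\qmk{k+1}(\cdot|Y)}.
\end{align*}
Writing out the two conditional divergences and using $\EE_{Y\sim\Pys}\EE_{X\sim\Ps(\cdot|Y)}[\cdot]=\EE_{(X,Y)\sim\Ps}[\cdot]$, the gap between the $\qmk{k+1}$ and $\Post\ind{k}$ versions is exactly $\EE_{(X,Y)\sim\Ps}\log\frac{\Post\ind{k}(X|Y)}{\qmk{k+1}(X|Y)}=\epsx\ind{k}$, so $\EE_{Y}\KLd{\Ps(\cdot|Y)}{\qmk{k+1}(\cdot|Y)}=\EE_{Y}\KLd{\Ps(\cdot|Y)}{\Post\ind{k}(\cdot|Y)}+\epsx\ind{k}$; combining with the identity above closes the per-step bound. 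I expect the main obstacle to be the joint-convexity step: recognizing that $\Pxs$ and $\pi\ind{k+1}$ share the mixture representation against $\Pys$ is exactly what forces the latent-space error $\epsx\ind{k}$ (rather than the observation-space error $\epskl\ind{k}$) to be the quantity that enters, and this dependence on the unknown prior $\Pxs$ is precisely why the rate, unlike the monotonicity of \cref{lem:EM-improve}, cannot be phrased purely in terms of $\epskl\ind{k}$.
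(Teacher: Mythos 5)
Your proof is correct. It establishes exactly the per-step inequality on which the paper's argument rests,
\begin{align*}
    \KLd{ \Pys }{ \piy\ind{k} }\leq \KLd{ \Pxs }{ \pi\ind{k} }-\KLd{ \Pxs }{ \pi\ind{k+1} }+\epsx\ind{k},
\end{align*}
and then telescopes identically, but your route to this inequality is genuinely different. The paper introduces the auxiliary exact-EM iterate $\tpi\ind{k+1}(x)=\EE_{Y\sim \Pys}\Post\ind{k}(x|Y)$ and splits the bound into two pieces: (i) $\KLd{ \Pys }{ \piy\ind{k} }\leq \KLd{ \Pxs }{ \pi\ind{k} }-\KLd{ \Pxs }{ \tpi\ind{k+1} }$, proved by Bayes' rule plus Jensen, and (ii) $\KLd{ \Pxs }{ \pi\ind{k+1} }-\KLd{ \Pxs }{ \tpi\ind{k+1} }\leq \epsx\ind{k}$. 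You never form $\tpi\ind{k+1}$: your first step is the \emph{exact} chain-rule identity $\KLd{ \Pys }{ \piy\ind{k} }=\KLd{ \Pxs }{ \pi\ind{k} }-\EE_{Y\sim\Pys}\KLd{ \Ps(\cdot|Y) }{ \Post\ind{k}(\cdot|Y) }$ (both joints factor through the same channel $\Q$), and your second step converts the conditional-KL term into $\KLd{ \Pxs }{ \pi\ind{k+1} }$, up to exactly $\epsx\ind{k}$, via joint convexity of KL applied to the shared mixture representations $\Pxs=\EE_{Y\sim\Pys}[\Ps(\cdot|Y)]$ and $\pi\ind{k+1}=\EE_{Y\sim\Pys}[\qmk{k+1}(\cdot|Y)]$. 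Both of your steps are standard, fully justified facts.

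The difference buys more than elegance. The paper's step (ii) is strictly stronger than your convexity step: joint convexity gives $\KLd{ \Pxs }{ \tpi\ind{k+1} }\leq \EE_{Y\sim\Pys}\KLd{ \Ps(\cdot|Y) }{ \Post\ind{k}(\cdot|Y) }$, so (ii) demands an upper bound on $\KLd{ \Pxs }{ \pi\ind{k+1} }$ by a smaller quantity than the one you prove — and (ii) is asserted in a single line in the paper and can actually fail. For instance, take $\cX=\cY=\set{0,1}$, the binary symmetric channel with flip probability $0.1$, $\Pxs=(0.5,0.5)$, $\pi\ind{k}=(0.9,0.1)$, and let $\qmk{k+1}(\cdot|y)$ equal $\Post\ind{k}(\cdot|y)$ for $y=0$ but equal the \emph{true} posterior $\Ps(\cdot|y)$ for $y=1$; then $\epsx\ind{k}\approx -0.184$ while $\KLd{ \Pxs }{ \pi\ind{k+1} }-\KLd{ \Pxs }{ \tpi\ind{k+1} }\approx -0.132$, violating (ii). (Note $\epsx\ind{k}$ is not a KL divergence and is negative precisely when the learned posterior overshoots toward the truth — a regime the paper's one-line bound ignores.) The proposition itself survives, and your argument shows why: because your first step is an equality rather than the paper's inequality (i), the weaker convexity bound in your second step is enough to close the telescope. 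So your decomposition is not merely an alternative; as written, it is the argument that holds up line by line.
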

\vspace{-5pt}
Therefore, as the number of EM iterations increases, $\piy\ind{k}$ converges to $\Pys$ at the rate of $\frac{1}{k}$, up to the statistical error $\epsx\ind{k}$. Furthermore, we can also derive the following last-iterate convergence by invoking \cref{lem:EM-improve}:
\begin{align*}
    \ict
    \KLd{ \Pys }{ \piy\ind{K} }\leq \frac{\KLd{ \Pxs }{ \pi\ind{0} }}{ K+1 } + \max_{k\leq K} \epsx\ind{k}+\sum_{k=0}^{K} \epskl\ind{k}, \qquad \forall K\geq 0.
\end{align*}
Given that each EM update is computationally expensive, the above convergence rate is most relevant in the regime where $\KLd{ \Pxs }{ \pi\ind{0} }\leqsim 1$, i.e., where the initial diffusion model provides a prior that is not too far from the ground-truth $\Pxs$. Such a \emph{warm start} model can be trained using existing methods~\citep{daras2023ambient} that are computationally cheaper.

\paragraph{Stronger convergence under identifiability}
Under the assumption that the latent variable problem \eqref{eq:Y-dist} is \emph{identifiable}, we show that EM achieves \emph{linear} convergence in terms of $\KLd{ \Pxs }{ \pi\ind{k} }$. %
\begin{assumption}[Identifiability]\label{asmp:RSI}
    There exists parameter $\kappa\geq 1, R\geq 0$ such that for any distribution $P(x)$ with $\KLd{\Pxs}{P}\leq R$, it holds that
\begin{align*}
    \KLd{\Pxs}{P}\leq \kappa\cdot \KLd{\Pys}{\Qp{P}},
\end{align*}
where $\Qp{P}$ is the distribution of $Y$ under $X\sim P, Y\sim \Qd{X}$.
\end{assumption}
In other words, \cref{asmp:RSI} requires that for any prior $P$ whose induced observation distribution $\Qp{P}$ is close to $\Pys$, $P$ itself must be close to the true data distribution $\Pxs$. Intuitively, \cref{asmp:RSI} quantifies the \emph{identifiability} of the latent variable problem \eqref{eq:Y-dist}. We show the following in~\cref{appendix:proof of prop:EM-linear}.
\begin{proposition}[Linear convergence of EM]\label{prop:EM-linear}
Suppose that \cref{asmp:RSI} holds, $\KLd{ \Pxs }{ \pi\ind{0} }\leq R$, and $\epsx\ind{k}\leq \frac{R}{\kappa}$ for each $k\geq 0$. Then it holds that
\begin{align*}
    \ict
    \KLd{ \Pxs }{ \pi\ind{K} }\leq \exp\paren{-\frac{K}{\kappa+1}}\cdot \KLd{ \Pxs }{ \pi\ind{0} }+(\kappa+1)\max_k \epsx\ind{k}.
\end{align*}
\end{proposition}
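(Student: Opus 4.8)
The plan is to promote the one-step descent inequality behind \cref{prop:EM-converge} into a geometric contraction by feeding in \cref{asmp:RSI}, and then to unroll. Abbreviate $D_k\defeq\KLd{\Pxs}{\pi\ind{k}}$ and $E_k\defeq\KLd{\Pys}{\piy\ind{k}}$. The core estimate, already established in the proof of \cref{prop:EM-converge} (\cref{appdx:converge}), is the per-iteration bound
\begin{align*}
  D_{k+1}\leq D_k-E_k+\epsx\ind{k}.
\end{align*}
I would recall its derivation for completeness: expand $D_k-D_{k+1}=\EE_{X\sim\Pxs}\log\paren{\pi\ind{k+1}(X)/\pi\ind{k}(X)}$, substitute $\pi\ind{k+1}(x)=\EE_{Y\sim\Pys}[\qmk{k+1}(x|Y)]$, and apply Jensen's inequality with the channel $\Qd{X}$ as the variational distribution over $Y$; the resulting lower bound rearranges exactly into $E_k-\epsx\ind{k}$ once the leftover log-ratio is identified with the posterior-likelihood error $\epsx\ind{k}$.

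The key new step is to invoke identifiability. As long as the current iterate satisfies $D_k\leq R$, \cref{asmp:RSI} applied to $P=\pi\ind{k}$ (for which $\Qp{\pi\ind{k}}=\piy\ind{k}$ by definition) gives $E_k\geq D_k/\kappa$. Substituting this into the descent inequality yields the affine recursion
\begin{align*}
  D_{k+1}\leq\paren{1-\tfrac1\kappa}D_k+\epsx\ind{k}.
\end{align*}
The main obstacle is that this recursion, and hence the lower bound $E_k\geq D_k/\kappa$, is only licensed while $D_k\leq R$, so I must certify that the iterates never leave the radius-$R$ ball. I would do this by induction: $D_0\leq R$ holds by hypothesis, and if $D_k\leq R$ then the recursion together with the assumed error control $\epsx\ind{k}\leq R/\kappa$ gives $D_{k+1}\leq(1-\tfrac1\kappa)R+R/\kappa=R$, closing the induction. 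It is important that the genuine factor $1-1/\kappa$ is used at this point, since it is precisely what makes the invariant close (any strict relaxation of it would not).

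With the invariant secured, the affine recursion holds at every step and unrolls to
\begin{align*}
  D_K\leq\paren{1-\tfrac1\kappa}^KD_0+\sum_{j=0}^{K-1}\paren{1-\tfrac1\kappa}^j\epsx\ind{K-1-j}.
\end{align*}
Bounding the first term by $\exp(-K/\kappa)\,D_0$ and the geometric sum by $\kappa\max_k\epsx\ind{k}$ already implies the stated conclusion, since $\exp(-K/\kappa)\leq\exp(-K/(\kappa+1))$ and $\kappa\leq\kappa+1$; alternatively, relaxing $1-1/\kappa\leq\kappa/(\kappa+1)=1-1/(\kappa+1)$ only at this final unrolling stage reproduces the constants $\exp(-K/(\kappa+1))$ and $(\kappa+1)$ verbatim. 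The remaining work is routine: verifying the Jensen step (with its absolute-continuity caveat) and the algebraic identification of $\epsx\ind{k}$ in the descent inequality, both of which mirror the computations carried out in \cref{appdx:converge}.
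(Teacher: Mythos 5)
Your proof is correct, and it shares the paper's overall skeleton—combine the one-step descent inequality from the proof of \cref{prop:EM-converge} with \cref{asmp:RSI}, maintain the invariant $\KLd{\Pxs}{\pi\ind{k}}\leq R$ by induction, and unroll the affine recursion—but you instantiate the key step differently, and in a way that is actually more careful than the paper's own argument. The paper invokes identifiability at iteration $k+1$: it cites the one-step inequality in the form $\KLd{\Pxs}{\pi\ind{k+1}}+\KLd{\Pys}{\piy\ind{k+1}}\leq\KLd{\Pxs}{\pi\ind{k}}+\epsx\ind{k}$ and applies \cref{asmp:RSI} to $\pi\ind{k+1}$, yielding $(1+\kappa^{-1})\KLd{\Pxs}{\pi\ind{k+1}}\leq\KLd{\Pxs}{\pi\ind{k}}+\epsx\ind{k}$. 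Read literally, this step needs $\KLd{\Pxs}{\pi\ind{k+1}}\leq R$ at a point where only $\KLd{\Pxs}{\pi\ind{k}}\leq R$ is known, so the paper's induction is mildly circular; moreover, the $E_{k+1}$-indexed inequality it cites matches the mis-indexed first display of \cref{appdx:converge}, whereas what that proof actually derives is the $E_k$-indexed bound $\KLd{\Pys}{\piy\ind{k}}\leq\KLd{\Pxs}{\pi\ind{k}}-\KLd{\Pxs}{\pi\ind{k+1}}+\epsx\ind{k}$. You use exactly this correctly-derived inequality and apply \cref{asmp:RSI} to $\pi\ind{k}$, which the induction hypothesis already certifies to lie in the radius-$R$ ball; this gives the contraction $\KLd{\Pxs}{\pi\ind{k+1}}\leq(1-\kappa^{-1})\KLd{\Pxs}{\pi\ind{k}}+\epsx\ind{k}$, closes the invariant cleanly via $(1-\kappa^{-1})R+R/\kappa=R$, and is slightly stronger since $1-\kappa^{-1}\leq\kappa/(\kappa+1)$, so relaxing only at the final unrolling recovers the stated constants $\exp(-K/(\kappa+1))$ and $\kappa+1$. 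Your one-shot Jensen derivation of the descent inequality (tilting by the channel $\Qd{X}$ and identifying the leftover log-ratio with $\epsx\ind{k}$) is also a valid, and arguably cleaner, route to the same estimate as the paper's two-step argument via the exact mixture posterior $\tpi\ind{k+1}$. In short: correct, same high-level approach, but your placement of the identifiability step repairs a subtle gap in the paper's own induction.
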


\section{Experiments}\label{sec:experiments}

We evaluate the proposed method, DiffEM, through a series of experiments. We begin with a synthetic manifold learning task (\cref{sec:curve}), where we show that the conditional diffusion model yields more accurate posterior samples than existing approximate posterior sampling schemes~\citep{rozet2024learning}. We then conduct \textcolor{black}{distributional learning and} image reconstruction experiments on CIFAR-10 (\cref{ssec:cifar-10}) and CelebA (\cref{ssec:celebA}), demonstrating that DiffEM outperforms prior approaches for learning diffusion models from corrupted data.

\subsection{Corrupted CIFAR-10}\label{ssec:cifar-10}

\newcommand{\cDrecon}{\cD_{\rm recon}}
\newcommand{\DMuncond}{p_{\theta_{\rm uncond}}}
\newcommand{\ISc}{IS $\uparrow$}
\newcommand{\FIDc}{FID $\downarrow$}
\newcommand{\DINO}{FD$_{\rm DINOv2}$}
\newcommand{\FDINF}{FD$_{\infty}$}
\newcommand{\DINOc}{FD$_{\rm DINOv2}$ $\downarrow$}
\newcommand{\FDINFc}{FD$_{\infty}$ $\downarrow$}

We next evaluate our method on the CIFAR-10 dataset~\citep{Krizhevsky2009LearningML}, treating the $50000$ training images as samples from the latent distribution $\Pxs$.

\paragraph{Masked corruption}
Following \citep{daras2023ambient,rozet2024learning}, we consider randomly masking each pixel with probability $\rho$, i.e., the matrix $A\sim \PA$ in \cref{eq:linear-corr} is diagonal with entries independently drawn from Bernoulli$(1-\rho)$. 
In this setting, the observation is generated as $Y=(AX+\epsilon,A)$, with $A\sim \PA$, $X\sim \Pxs$, $\epsilon\sim \normal{0,\sigy^2\id}$. 
In  other words, each image is corrupted by (1) first randomly deleting every pixel independently with probability $\rho$, and then (2) adding isotropic Gaussian noise with variance $\sigy^2$.

In our experiments, we set $\rho=0.75$, $\sigy^2=10^{-6}$, i.e., each image has $75\%$ of the pixels deleted and is corrupted by negligible Gaussian noise. We also perform experiments with corruption level $\rho = 0.9$ and report the results in \cref{tab:cifar-90-results}. 

\paragraph{Experiment setup}
Our conditional diffusion model $\qm_{\theta}(x|y)$ is parametrized by a denoiser network $d_\theta(x_t, t, y)$ with U-net architecture. 
We train the model for $21$ DiffEM iterations, initializing with a Gaussian prior (detailed in \cref{app:experiments}). For each iteration, we train the denoiser network with conditional score matching~\eqref{def:cond-score-loss} to learn the conditional mean $\E[X_0|X_t, Y]$.
We then compare DiffEM to prior methods~\citep{daras2023ambient,rozet2024learning} under the following evaluation metrics, which correspond to the \emph{posterior sampling} task and \emph{\textcolor{black}{unconditional generation}} task (cf. \cref{sec:prelim}).

\begin{table}[t]
\centering
\renewcommand{\arraystretch}{1.1}
\begin{tabular}{c|c|cccc}
\hline
Task & Method & \ISc & \FIDc & \DINOc & \FDINFc \\
\hline

\multirow{4}{*}{\shortstack{Posterior\\Sampling}}
& Ambient-Diffusion & 7.70 & 30.76 & 260.23 & 256.11\\\cline{2-6}
& EM-MMPS & 9.77 & 6.49 & 237.02 & 231.80 \\ \cline{2-6}
& DiffEM (Ours) & \textbf{9.81} & \textbf{4.68} & \textbf{220.97} & \textbf{216.53} \\ \cline{2-6}
& DiffEM (Warm-started) & 9.66 & \textbf{4.66} & \textbf{186.90} & \textbf{180.70} \\ 

\hline
\hline

\multirow{4}{*}{\textcolor{black}{\shortstack{Unconditional\\Generation}}}
& Ambient-Diffusion & 6.88 & 28.88 & 1068.00 & 1062.98 \\ \cline{2-6}
& EM-MMPS & 8.14 & 13.18 & 643.59 & 640.14\\ \cline{2-6}
& DiffEM (Ours) & \textbf{8.57} & \textbf{10.24} & \textbf{598.18} & \textbf{594.75}\\ \cline{2-6}
& DiffEM (Warm-started) & 8.49 & 10.33 & \textbf{546.07} & \textbf{541.53}\\ 

\hline
\end{tabular}
\vspace{5pt}
\caption{Posterior sampling and \textcolor{black}{unconditional generation} performance on CIFAR-10 with random masking with corruption rate of $\rho = 0.75$ compared to Ambient-Diffusion~\citep{daras2023ambient} and EM-MMPS~\citep{rozet2024learning}. The details of DiffEM with warm-start are described in \cref{ssec:CIFAR-improve}.}
\label{tab:Masking-results-merged}
\end{table}

\begin{table}[t]
\centering
\renewcommand{\arraystretch}{1.1}
\begin{tabular}{c|c|cccc}
\hline
Task & Method & density & recall & precision & coverage \\
\hline

\multirow{4}{*}{\shortstack{Posterior\\Sampling}}
& Ambient-Diffusion & 0.87616 & 0.75420 & \textbf{0.79210} & 0.67930 \\\cline{2-6}
& EM-MMPS & 0.68918 & 0.83780 & 0.72770 & 0.67160 \\ \cline{2-6}
& DiffEM (Ours) & 0.58080 & \textbf{0.87110} & 0.70150 & 0.64080 \\ \cline{2-6}
& DiffEM (Warm-started) & 0.72216 & 0.86300 & 0.76320 & \textbf{0.72490} \\

\hline
\hline

\multirow{4}{*}{\textcolor{black}{\shortstack{Unconditional\\Generation}}}
& Ambient-Diffusion & 1.40812 & 0.0825 & \textbf{0.79370} & 0.08170 \\ \cline{2-6}
& EM-MMPS & 0.80986 & 0.4895 & 0.64740 &  0.24380 \\ \cline{2-6}
& DiffEM (Ours) & 0.81284 & 0.50490 & 0.64900 & 0.25640 \\ \cline{2-6}
& DiffEM (Warm-started) & 0.75816 & \textbf{0.52560} & 0.65980 & \textbf{0.29370} \\ 

\hline
\end{tabular}
\vspace{5pt}
\caption{
\textcolor{black}{Additional metrics (density, recall, precision and coverage) following the work ~\citep{stein2023exposing} for posterior sampling 
and \textcolor{black}{unconditional generation} on CIFAR-10 with random masking at corruption rate $\rho = 0.75$, 
complementing the main quantitative evaluations comparing DiffEM, EM-MMPS~\citep{rozet2024learning}, 
and Ambient-Diffusion~\citep{daras2023ambient}.}}

\label{tab:additional-evals}
\end{table}

\paragraph{Eval 1: Posterior sampling performance}
The final model returned by DiffEM is a \emph{conditional diffusion model}, i.e., given any corrupted observation $Y$, the model samples a reconstructed image $X\sim q_\theta(\cdot|Y)$. Therefore, to evaluate the performance of posterior sampling, for each observation $Y\sid{i}$ in our dataset, we use the trained model to generate a reconstructed image $X\sid{i}\sim q_{\theta}(\cdot|Y\sid{i})$ and obtain the reconstructed dataset $\cDrecon=\set{X\sid{1},\cdots,X\sid{50000}}$ (similar to the E-step of \cref{alg:EM}). 
We then evaluate the quality of $\cDrecon$ by computing the Inception Score (IS) \citep{salimans2016improvedtechniquestraininggans} and the Fréchet distance to the uncorrupted dataset in various representation spaces\footnote{
    The Fréchet distance measures discrepancies at the \emph{distributional} level. Under severe corruption ($75\%$ of pixels deleted), the posterior distribution $\Ps_{X|Y}$ may not concentrate around a single ground-truth. As a result, classical reconstruction metrics such as PSNR and LPIPS are less appropriate in this setting~\citep{rozet2024learning}.
} to obtain the metrics FID \citep{NIPS2017_8a1d6947}, \DINO~\citep{oquab2023dinov2,stein2023exposing}, and \FDINF~\citep{chong2020effectively}. The results are reported in \cref{tab:Masking-results-merged}. \textcolor{black}{Furthermore, we evaluate Precision, Coverage, Recall, and Density following (\citet{stein2023exposing}). The results are provided in Table~\ref{tab:additional-evals}.}

\paragraph{Eval 2: \textcolor{black}{Unconditional generation} performance}
We also note that the models trained by existing works~\citep{daras2023ambient,rozet2024learning,bai2024expectation} are \emph{unconditional} diffusion models, which can be regarded as the reconstruction of the ground-truth data distribution $P_X$. In DiffEM, the reconstructed data distribution is implicitly described by the conditional diffusion model $q_\theta$.
Therefore, to evaluate the data distribution recovered by DiffEM, we use the reconstructed dataset $\cDrecon$ to train a new (unconditional) diffusion model $\DMuncond$, which learns to sample from the data distribution induced by $\qm_\theta$. \textcolor{black}{We then evaluate the metrics (IS, FID, \FDINF, \DINO, Precision, Recall, Density, Coverage) of the model $\DMuncond$ as our performance on the \emph{\textcolor{black}{unconditional generation}} task. We report the metrics in \cref{tab:Masking-results-merged} and \cref{tab:additional-evals}.}

\paragraph{Discussion and comparison}
We compare DiffEM to Ambient-Diffusion~\citep{daras2023ambient}\footnote{We note that the Ambient-Diffusion model was trained on a dataset with corruption level $\rho'=0.6$, an easier setting than ours ($\rho=0.75$). } and EM-MMPS~\citep{rozet2024learning} under the above metrics in \cref{tab:Masking-results-merged} (higher IS and lower
FID/FD scores indicate better performance) and \cref{tab:additional-evals} (higher recall, precision and coverage indicates better performance). 
To evaluate the diffusion prior trained by these baselines, we apply their approximate posterior sampling scheme and report the metrics evaluated on the reconstructed dataset. 
Under all four metrics, the diffusion models trained by DiffEM outperform both Ambient-Diffusion and EM-MMPS, demonstrating the power of our pipeline.\footnote{
    We note that \citet{bai2024expectation} proposed EM-Diffusion and reported FID score $21.08$ (corruption level $\rho'=0.6$ and initialized with a diffusion prior trained on 50 clean images). However, we cannot reproduce their experiments to evaluate other metrics. Given that EM-MMPS~\citep{rozet2024learning} achieves a much better FID score than EM-Diffusion~\citep{bai2024expectation}, we believe it is sufficient to compare DiffEM to EM-MMPS.
} 
\cref{fig:Masked-CIFAR-pictures} shows qualitative results comparing the corrupted observations and reconstructions from our model.

\textcolor{black}{We also compare the computational cost of DiffEM and EM-MMPS in \cref{tab:comp-cost} and in the \cref{fig:performance_vs_compute} following our discussion in \cref{sec:cond-DM}.}

\begin{table}[t]
\centering
\begin{tabular}{c|cccc}
\hline
Method & $K$ & $\Ni$ & $\Nf$  & $\Nu$  \\
\hline
EM-MMPS & 32 & 43.0 $\pm$ 0.8 & 86.3 $\pm$ 0.7 & N/A \\ 
\hline
DiffEM & 21 & 63.5 $\pm$ 0.4 & 70.3 $\pm$ 0.2 & 74.54 $\pm$ 0.09 \\
\hline
\end{tabular}
\vspace{5pt}
\caption{Comparison of computation time (cf. \cref{sec:cond-DM}), with $\Ni, \Nf, \Nu$ measured in minutes using 4$\times$ H200. The cost of EM-MMPS~\citep{rozet2024learning} can similarly be decomposed as $\Ni+K \cdot \Nf$ (it does not incur the cost $\Nu$). As shown, DiffEM is more computationally efficient.}
\label{tab:comp-cost}
\end{table}

\begin{figure}
  \centering
  \begin{minipage}[b]{0.6\textwidth}
    \centering
    \includegraphics[width=\textwidth]{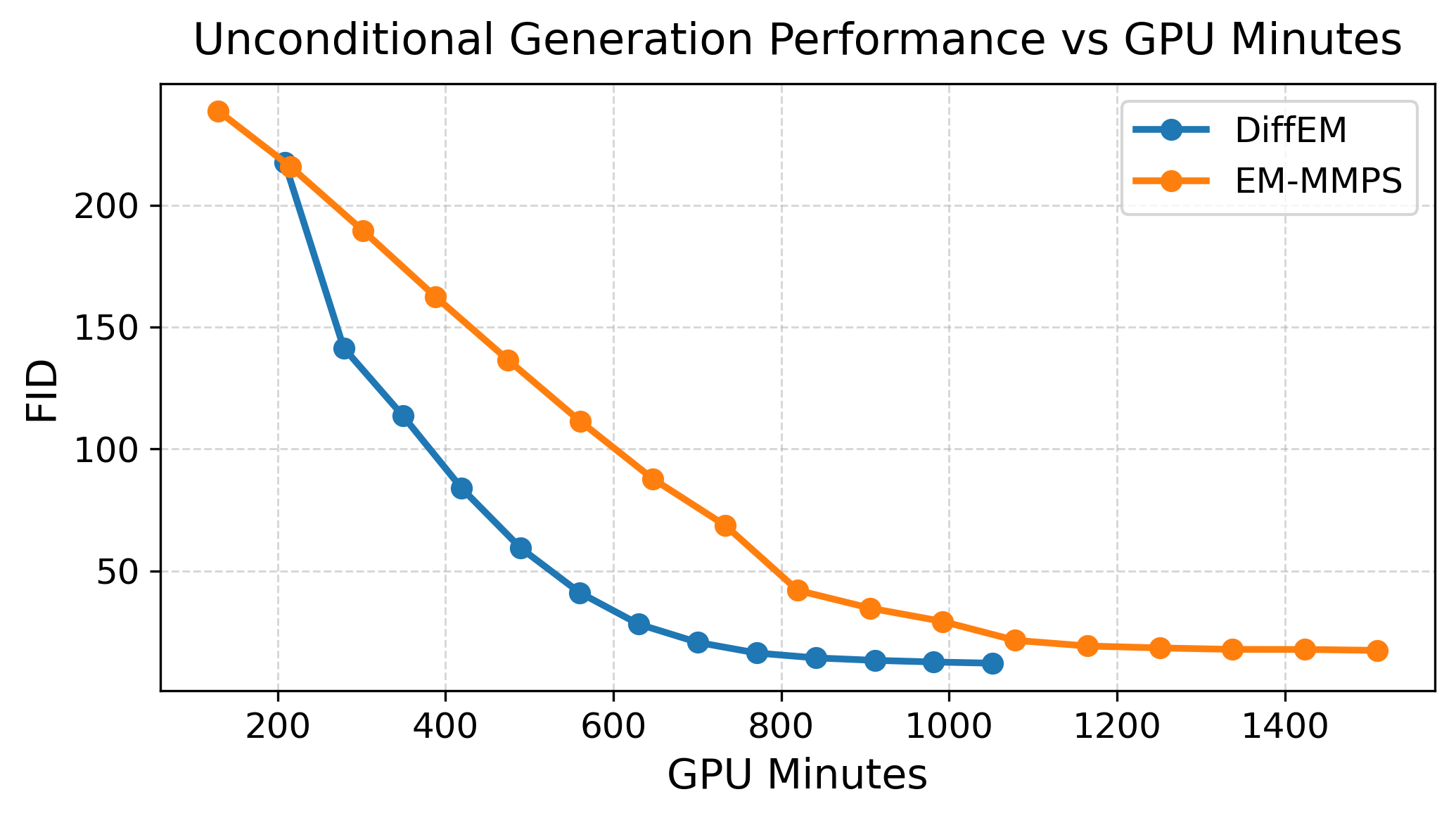}
  \end{minipage}
  \caption{\textcolor{black}{Evolution of the performance of EM-MMPS and DiffEM as a function of GPU hours. Both methods are trained on 4×H200 GPUs. DiffEM converges substantially faster and achieves superior performance for any fixed compute budget.}
}
  \label{fig:performance_vs_compute}
\end{figure}

\subsubsection{DiffEM with warm-start}\label{ssec:CIFAR-improve}

Additionally, we perform experiments on the masked CIFAR-10 dataset with \emph{warm-started} DiffEM. %
Specifically, we take the diffusion prior trained by 32 iterations of EM-MMPS~\citep{rozet2024learning}, and perform 10 DiffEM iterations starting from this prior. We evaluate the final posterior sampling performance and \textcolor{black}{unconditional generation} quality (reported in \cref{tab:Masking-results-merged} and \cref{tab:additional-evals}). 

The results show that using a high-quality initial prior accelerates the convergence of DiffEM: only 10 DiffEM iterations are needed. This observation is consistent with our theoretical results (\cref{sec:EM-converge}). Furthermore, warm-started DiffEM outperforms DiffEM with an initial Gaussian prior in terms of the scores \DINO~and \FDINF, indicating that DiffEM can converge to a better distribution when starting from an informed prior.\footnote{However, it is worth noting that warm-started DiffEM is computationally more expensive, as the warm-start prior requires training with 32 iterations of EM-MMPS.}
We also plot the evolution of the IS, FID, DINO, and FD$_\infty$~scores in \cref{fig:cifar-monotonic-improvement}, which corroborates the monotonic improvement property of DiffEM (\cref{lem:EM-improve}).

\newcommand{\sigblur}{\sigma_{\rm kernel}}
\newcommand{\deconv}{Richardson-Lucy deconvolution}

\subsubsection{Additional experiment: CIFAR-10 under Gaussian blur}\label{ssec:CIFAR-blurred}

In addition to the masked corruption experiment, we perform experiments on the \emph{blurred CIFAR-10} dataset.
In the Gaussian blur model, each observation $Y\sim \normal{AX,\sigy^2}$ is generated by applying a Gaussian blur kernel on $X$ with standard deviation $\sigblur$ (represented by the matrix $A$), and then adding isotropic Gaussian noise $\epsilon\sim \normal{0,\sigy^2\id}$. In the experiment, we set $\sigblur=2$ and $\sigy^2=10^{-6}$ and follow the same training procedure as in the masked CIFAR-10 experiment (details in \cref{appdx:CIFAR-10-masking}).

\subsection{Corrupted CelebA}\label{ssec:celebA}

We perform experiments on the CelebA dataset~\citep{liu2018large}, with images cropped to $64\times64$ pixels following \citep{wang2023patch,daras2023ambient}.
We consider the setting in \cref{ssec:cifar-10} with masking probability $\rho\in\{0.5,0.75\}$ and noise level $\sigy^2=0$, i.e., the corruption level is moderate. We initialize the first iteration for DiffEM with the Gaussian prior (cf. \cref{app:experiments}).
We evaluate the diffusion models trained by DiffEM following the protocol of \cref{ssec:cifar-10} (\cref{tab:celebA-results}). 
As shown in \cref{tab:celebA-results}, DiffEM significantly outperforms EM-MMPS. 
We also present sample reconstructed images in \cref{fig:CelebA-pictures} and an illustration of the pipeline in \cref{fig:celeba_data}.

\begin{table}[t]
\centering
\renewcommand{\arraystretch}{1.1}
\begin{tabular}{c|c|c|cccc}
\hline
Task & $\rho$ & Method & \ISc & \FIDc & \DINOc & \FDINFc \\
\hline
\multirow{4}{*}{Posterior sampling} & \multirow{2}{*}{0.5} &  EM-MMPS & 3.237&0.61&9.36& 6.07\\ 
\cline{3-7}
& & DiffEM & \textbf{3.239} & \textbf{0.33} & \textbf{5.07} & \textbf{2.07} \\
\cline{2-7}
& \multirow{2}{*}{0.75} &  EM-MMPS & 2.96 &31.22&113.09& 109.41\\ 
\cline{3-7}
& & DiffEM & \textbf{3.16} & \textbf{1.43} & \textbf{39.34} & \textbf{36.26} \\
\hline
\hline
\multirow{4}{*}{\textcolor{black}{Unconditional generation}} & \multirow{2}{*}{0.5} &  EM-MMPS & 2.50&11.44& \textbf{186.16} & \textbf{182.90}\\ 
\cline{3-7}
& & DiffEM & \textbf{2.52} & \textbf{10.11} & 344.60 & 340.97 \\
\cline{2-7}
& \multirow{2}{*}{0.75} &  EM-MMPS & 2.35 &61.40& \textbf{321.90} & \textbf{319.58}\\ 
\cline{3-7}
& & DiffEM & \textbf{2.50} & \textbf{10.75} & 423.95 & 420.76 \\
\hline
\end{tabular}
\vspace{5pt}
\caption{Performance of DiffEM and EM-MMPS~\citep{rozet2024learning} on masked CelebA with masking probability $\rho\in\{0.5,0.75\}$. }
\label{tab:celebA-results}
\end{table}

\arxiv{

\section*{Acknowledgments}

This research has been supported by NSF Awards CCF-1901292, ONR grants N00014-25-1-2116,  N00014-25-1-2296, a Simons Investigator Award, and the Simons Collaboration on the Theory of Algorithmic Fairness.
FC acknowledges support from ARO through award W911NF-21-1-0328, Simons Foundation and the NSF through awards DMS-2031883 and PHY-2019786, and DARPA AIQ award.

}

\bibliographystyle{plainnat}
\bibliography{references}

\clearpage
\appendix

\iclr{
\section{Related work}\label{sec:related-work}

}

\newpage

\section{Proofs from \cref{sec:EM-converge}}

\subsection{Proof of \cref{lem:EM-improve}} \label{appendix:proof of lem:EM-improve}

Note that
\begin{align*}
    \KLd{ \Pys }{ \piy\ind{k} } - \KLd{ \Pys }{ \piy\ind{k+1} }
    =&~ \EE_{Y\sim \Pys} \log \frac{\piy\ind{k+1}(Y)}{\piy\ind{k}(Y)}.
\end{align*}
By definition and Bayes' rule,
\begin{align*}
    \piy\ind{k+1}(y)=&~ \int \Qxy{y}{x} \pi\ind{k+1}(x) \d x 
    =\int \Qxy{y}{x} \pi\ind{k}(x) \cdot \frac{\pi\ind{k+1}(x)}{\pi\ind{k}(x)} \d x \\
    =&~ \int \Post\ind{k}(x,y)\cdot \frac{\pi\ind{k+1}(x)}{\pi\ind{k}(x)} \d x \\
    =&~ \int \piy\ind{k}(y)\cdot \Post\ind{k}(x|y)\cdot \frac{\pi\ind{k+1}(x)}{\pi\ind{k}(x)} \d x \\
    =&~ \piy\ind{k}(y) \cdot \EE_{X\sim \qmk{k+1}(\cdot|y)} \brac{\frac{\Post\ind{k}(X|y)}{\qmk{k+1}(X|y)}\cdot \frac{\pi\ind{k+1}(X)}{\pi\ind{k}(X)}}.
\end{align*}
Therefore, by Jensen's inequality, we have
\begin{align*}
    &~\KLd{ \Pys }{ \piy\ind{k} } - \KLd{ \Pys }{ \piy\ind{k+1} } \\
    =&~ \EE_{Y\sim \Pys} \log \frac{\piy\ind{k+1}(Y)}{\piy\ind{k}(Y)} \\
    =&~ \EE_{Y\sim \Pys} \log \EE_{X\sim \qmk{k+1}(\cdot|Y)} \brac{\frac{\pi\ind{k}(X|Y)}{\qmk{k+1}(X|Y)}\cdot \frac{\pi\ind{k+1}(X)}{\pi\ind{k}(X)}} \\
    \geq&~ \EE_{Y\sim \Pys}\EE_{X\sim \qmk{k+1}(\cdot|Y)}  \brac{\log\paren{\frac{\Post\ind{k}(X|Y)}{\qmk{k+1}(X|Y)}\cdot \frac{\pi\ind{k+1}(X)}{\pi\ind{k}(X)}}} \\
    =&~ \EE_{Y\sim \Pys}\EE_{X\sim \qmk{k+1}(\cdot|Y)} \log \frac{\pi\ind{k+1}(X)}{\pi\ind{k}(X)} - \EE_{Y\sim \Pys}\EE_{X\sim \qmk{k+1}(\cdot|Y)}  \log\frac{\qmk{k+1}(X|Y)}{\Post\ind{k}(X|Y)}\\
    =&~ \KLd{ \pi\ind{k+1} }{ \pi\ind{k} } - \EE_{Y\sim \Pys} \KLd{ \qmk{k+1}(\cdot|Y) }{ \Post\ind{k}(\cdot|Y) }.
\end{align*}
Rearranging the terms completes the proof.
\qed

\subsection{Proof of \cref{prop:EM-converge}}\label{appdx:converge}
We first show that: For each $k\geq 0$, it holds that
\begin{align*}
    \KLd{ \Pys }{ \piy\ind{k+1} }\leq \KLd{ \Pxs }{ \pi\ind{k} }-\KLd{ \Pxs }{ \pi\ind{k+1} }+\epsx\ind{k}.
\end{align*}

To simplify the presentation, we define $\tpi\ind{k+1}(x)=\EE_{Y\sim \Pys} \Post\ind{k}(x|Y).$
Then, by definition, we have
\begin{align*}
    \tpi\ind{k+1}(x)
    =&~ \EE_{Y\sim\Pys} \Post\ind{k}(x|Y) \\
    =&~ \EE_{Y\sim\Pys} \brac{ \frac{\pi\ind{k}(x) \Qxy{Y}{x} }{ \piy\ind{k}(Y) }  } \\
    =&~ \pi\ind{k}(x) \cdot \EE_{Y\sim \Qd{x}} \brac{ \frac{ \Pys(Y) }{ \piy\ind{k}(Y) }  }.
\end{align*}
Therefore, it follows that
\begin{align*}
    \KLd{ \Pxs }{ \pi\ind{k} } - \KLd{ \Pxs }{ \tpi\ind{k+1} }
    =&~ \EE_{X\sim \Pxs} \log \frac{\tpi\ind{k+1}(X)}{\pi\ind{k}(X)} \\
    =&~ \EE_{X\sim \Pxs} \log \EE_{Y\sim \Qd{x}} \brac{ \frac{ \Pys(Y) }{ \piy\ind{k}(Y) } } \\
    \geq&~ \EE_{X\sim \Pxs} \EE_{Y\sim \Qd{x}} \brac{ \log \frac{ \Pys(Y) }{ \piy\ind{k}(Y) }   } \\
    =&~ \EE_{Y\sim \Pys} \brac{ \log \frac{ \Pys(Y) }{ \piy\ind{k}(Y) }   }
    = \KLd{ \Pys }{ \piy\ind{k} }.
\end{align*}
Furthermore, we have
\begin{align*}
    &~\KLd{ \Pxs }{ \pi\ind{k+1} }-\KLd{ \Pxs }{ \tpi\ind{k+1} } \\
    =&~ \EE_{X\sim \Pxs} \brac{ \log\tpi\ind{k+1}(X)-\log\pi\ind{k+1}(X) } \\
    =&~ \EE_{X\sim \Pxs} \brk*{ \log\EE_{Y\sim \Pys}[ \Post\ind{k}(X|Y) ] - \log\EE_{Y\sim \Pys} [ \qmk{k+1}(X|Y) ] } \\
    \leq&~ \EE_{(X,Y)\sim \Pxs}\brk*{\log\frac{\Post\ind{k}(X|Y)}{\qmk{k+1}(X|Y)}}
    =\epsx\ind{k}.
\end{align*}
Combining the above equations, we have shown that
\begin{align*}
    \KLd{ \Pys }{ \piy\ind{k} }\leq \KLd{ \Pxs }{ \pi\ind{k} }-\KLd{ \Pxs }{ \pi\ind{k+1} }+\epsx\ind{k}.
\end{align*}
This is the desired upper bound. Taking the summation over $k=0,1,\cdots,K$ completes the proof. For the last-iterate convergence rate, we only need to use the fact that $\KLd{ \Pys }{ \piy\ind{k} }\leq \KLd{ \Pys }{ \piy\ind{K} }+\sum_{\ell=k}^{K}\epskl\ind{\ell}$ (by \cref{lem:EM-improve}).
\qed

\subsection{Proof of \cref{prop:EM-linear}} \label{appendix:proof of prop:EM-linear}

By \cref{prop:EM-converge}, we have
\begin{align*}
    \KLd{ \Pxs }{ \pi\ind{k+1} }+\KLd{ \Pys }{ \piy\ind{k+1} }\leq \KLd{ \Pxs }{ \pi\ind{k} }+\epsx\ind{k}.
\end{align*}
Using \cref{asmp:RSI}, we know that as long as $\KLd{ \Pxs }{ \pi\ind{k} }\leq R$, we have
\begin{align*}
    (1+\kappa^{-1}) \KLd{ \Pxs }{ \pi\ind{k+1} } \leq \KLd{ \Pxs }{ \pi\ind{k} }+\epsx\ind{k}.
\end{align*}
Denote $\epsx=\max_k \epsx\ind{k}$.
Therefore, using the fact that $\epsx\ind{k}\leq \epsx\leq \frac{R}{\kappa}$, we can show by induction that $\KLd{ \Pxs }{ \pi\ind{k} }\leq R$ for each $k\geq 0$, and hence
\begin{align*}
    (1+\kappa^{-1}) \KLd{ \Pxs }{ \pi\ind{k+1} } \leq \KLd{ \Pxs }{ \pi\ind{k} }+\epsx.
\end{align*}
Applying this inequality recursively, we obtain
\begin{align*}
    \KLd{ \Pxs }{ \pi\ind{k} }\leq&~ \frac{\kappa}{1+\kappa} \KLd{ \Pxs }{ \pi\ind{k-1} } +\epsx \\
    \leq&~ \paren{\frac{\kappa}{1+\kappa}}^2 \KLd{ \Pxs }{ \pi\ind{k-2} } +\paren{\frac{\kappa}{1+\kappa}}\epsx +\epsx \\
    \leq&~ \cdots \\
    \leq&~ \paren{\frac{\kappa}{1+\kappa}}^k \KLd{ \Pxs }{ \pi\ind{0} } + \sum_{i=0}^{k-1} \paren{\frac{\kappa}{1+\kappa}}^{k-1-i}\epsx \\
    \leq&~ e^{-k/(\kappa+1)} \KLd{ \Pxs }{ \pi\ind{0} } +(1+\kappa)\epsx,
\end{align*}
where the last inequality follows from $\frac{\kappa}{1+\kappa}=1-\frac{1}{1+\kappa}\leq \exp\paren{-\frac{1}{1+\kappa}}$.
\qed

\subsection{Relation between the Score-Matching errors}\label{appdx:X-to-KL}

In this section, we provide the following upper bound for $\epsx\ind{k}$ in terms of $\epskl\ind{k}$. Recall that $\epsx\ind{k}$ is defined as
\begin{align*}
    \epsx\ind{k}
    =\EE_{(X,Y)\sim \Ps}\brk*{\log\frac{\Post\ind{k}(X|Y)}{\qmk{k+1}(X|Y)}},
\end{align*}

\begin{proposition}\label{prop:epsx-to-kl}
Suppose that $\EE_{Y\sim \Pys}\chis{\Ps(\cdot|Y)}{\qmk{k+1}(\cdot|Y)}\leq C<+\infty$. Then it holds that $\epsx\ind{k}\leq 2\sqrt{(C+1)\epskl\ind{k}}$.
\end{proposition}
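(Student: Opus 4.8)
The plan is to reduce the bound to two applications of Cauchy--Schwarz, the first in the latent variable $X$ and the second in the observation $Y$, after a linearization of the logarithm that is the heart of the argument. For fixed $Y$, I regard $P=\Ps(\cdot\,|\,Y)$ (the true posterior), $\Post\ind{k}=\Post\ind{k}(\cdot\,|\,Y)$, and $q=\qmk{k+1}(\cdot\,|\,Y)$ as densities in $x$, and first rewrite
\[
  \epsx\ind{k} = \EE_{Y\sim\Pys}\EE_{X\sim P}\brk*{\log\frac{\Post\ind{k}(X)}{q(X)}},
\]
using $\EE_{(X,Y)\sim\Ps} = \EE_{Y\sim\Pys}\EE_{X\sim\Ps(\cdot|Y)}$. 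A direct change of measure from $P$ to $q$ followed by Cauchy--Schwarz is tempting but fails: it leaves the factor $\EE_q[(\log(\Post\ind{k}/q))^2]$, the second moment of the log-likelihood ratio, which in general cannot be controlled by the first-order divergence $\epskl\ind{k}$.

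To get around this, the key step is the elementary inequality $\log t \le 2(\sqrt{t}-1)$ (apply $\log s\le s-1$ with $s=\sqrt t$), used with $t = \Post\ind{k}/q$, so that
\[
  \log\frac{\Post\ind{k}}{q} \le 2\paren{\sqrt{\tfrac{\Post\ind{k}}{q}}-1} = 2\,\frac{\sqrt{\Post\ind{k}}-\sqrt q}{\sqrt q}.
\]
Substituting into the inner expectation and applying Cauchy--Schwarz in $X$, with the integrand $\frac{P}{\sqrt q}\paren{\sqrt{\Post\ind{k}}-\sqrt q}$ split into the pair $\frac{P}{\sqrt q}$ and $\sqrt{\Post\ind{k}}-\sqrt q$, I get for each fixed $Y$
\[
  \EE_{X\sim P}\brk*{\log\tfrac{\Post\ind{k}}{q}} \le 2\paren{\textstyle\int \tfrac{P^2}{q}}^{1/2}\paren{\textstyle\int(\sqrt{\Post\ind{k}}-\sqrt q)^2}^{1/2} = 2\sqrt{\chis{P}{q}+1}\,\paren{\textstyle\int(\sqrt{\Post\ind{k}}-\sqrt q)^2}^{1/2},
\]
using $\int P^2/q = \chis{P}{q}+1$. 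Here the first factor is exactly $\sqrt{\chi^2+1}$ (the source of the eventual $C+1$), and the second is twice the squared Hellinger distance, for which the standard comparison $\int(\sqrt{\Post\ind{k}}-\sqrt q)^2 \le \KLd{q}{\Post\ind{k}}$ holds (Hellinger being symmetric, the KL direction is immaterial).

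Finally I would take $\EE_{Y\sim\Pys}$ and apply Cauchy--Schwarz once more, this time over $Y$:
\[
  \epsx\ind{k} \le 2\sqrt{\EE_Y\brk*{\chis{P}{q}+1}}\cdot\sqrt{\EE_Y\brk*{\textstyle\int(\sqrt{\Post\ind{k}}-\sqrt q)^2}} \le 2\sqrt{C+1}\cdot\sqrt{\EE_Y\KLd{q}{\Post\ind{k}}} = 2\sqrt{(C+1)\,\epskl\ind{k}},
\]
where the first factor invokes the hypothesis $\EE_Y\chis{\Ps(\cdot|Y)}{q}\le C$ and the second uses the Hellinger--KL bound together with the definition of $\epskl\ind{k}$. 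The only non-routine ingredient is the logarithm linearization $\log t\le 2(\sqrt t-1)$: it is precisely what turns the log-ratio into a square-root difference, so that Cauchy--Schwarz produces a Hellinger distance (controlled by KL) rather than the uncontrollable squared log-ratio. Everything else is bookkeeping with the two Cauchy--Schwarz steps, the identity $\int P^2/q = \chis{P}{q}+1$, and the Hellinger--KL comparison.
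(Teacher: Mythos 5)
Your proof is correct and reaches the paper's bound with the same constant. It is built from exactly the same ingredients as the paper's argument---the change-of-measure Cauchy--Schwarz step that produces the $\sqrt{1+\chi^2}$ factor, the elementary inequality $\log t\le 2(\sqrt{t}-1)$, the Hellinger--KL comparison, and a final Cauchy--Schwarz over $Y$---but you assemble them in a different order, which gives a genuinely different decomposition. The paper first truncates to the positive part, bounding $\epsx\ind{k}\le \EE\,(\log(\Post\ind{k}(X|Y)/\qmk{k+1}(X|Y)))_+$, then applies Cauchy--Schwarz with the splitting $\Ps/\qmk{k+1}$ against $(\log\cdot)_+$, and finally invokes a standalone second-moment lemma (\cref{lem:KL+-to-KL}), $\EE_{X\sim Q}(\log(P(X)/Q(X)))_+^2\le 4\KLd{Q}{P}$, whose proof is where $\log t\le 2(\sqrt{t}-1)$ and Hellinger actually enter. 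You instead linearize the logarithm \emph{before} any Cauchy--Schwarz and split the integrand as $P/\sqrt{q}$ against $\sqrt{\Post\ind{k}}-\sqrt{q}$, so the squared Hellinger distance appears immediately and neither the positive-part truncation nor the auxiliary lemma is needed. Your opening diagnosis is also exactly right: the unrestricted second moment $\EE_{q}[(\log(\Post\ind{k}/q))^2]$ cannot be controlled by KL (mass where $q\gg\Post\ind{k}$ makes it arbitrarily larger than $\KLd{q}{\Post\ind{k}}$), and the paper's positive-part device is precisely its repair for the same obstruction that your early linearization sidesteps. What each approach buys: yours is marginally more streamlined, handling everything in one pass; the paper's isolates \cref{lem:KL+-to-KL} as a reusable statement and keeps its Cauchy--Schwarz in the standard importance-ratio-times-test-function form.
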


\begin{proof}[Proof of \cref{prop:epsx-to-kl}]
By definition,
\begin{align*}
    \epsx\ind{k}
    \leq&~ \EE_{(X,Y)\sim \Pxs} \paren{ \log\frac{\Post\ind{k}(X|Y)}{\qmk{k+1}(X|Y)} }_+ \\
    =&~ \EE_{Y\sim \Pys}\EE_{X\sim \Pxs(\cdot|Y)} \paren{ \log\frac{\Post\ind{k}(X|Y)}{\qmk{k+1}(X|Y)} }_+ \\
    \leq&~ \EE_{Y\sim \Pys}\sqrt{\prn*{1+\chis{\Ps(\cdot|Y)}{\qmk{k+1}(\cdot|Y)}}\cdot \EE_{X\sim \qmk{k+1}(\cdot|Y)} \paren{ \log\frac{\Post\ind{k}(X|Y)}{\qmk{k+1}(X|Y)}}_+^2 } \\
    \leq&~ \EE_{Y\sim \Pys}\sqrt{\prn*{1+\chis{\Ps(\cdot|Y)}{\qmk{k+1}(\cdot|Y)}}\cdot 4\KLd{\qmk{k+1}(\cdot|Y)}{\Post\ind{k}(\cdot|Y)} } \\
    \leq&~ 2\sqrt{ (C+1)\epskl\ind{k} },
\end{align*}
where we apply \cref{lem:KL+-to-KL}. 
This yields the desired upper bound.
\end{proof}

\begin{lemma}\label{lem:KL+-to-KL}
For any distributions $P$ and $Q$, it holds that
\begin{align*}
    \EE_{X\sim Q}\paren{\log P(X)-\log Q(X)}_{+}^2\leq 4\KLd{Q}{P}.
\end{align*}
\end{lemma}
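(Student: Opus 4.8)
The plan is to reduce the claim to a single pointwise inequality in the likelihood ratio $u(x) = P(x)/Q(x)$ and then integrate against $Q$, exploiting the normalization $\EE_{X\sim Q}[u(X)] = \int P = 1$. Writing $\paren{\log P(x) - \log Q(x)}_+ = \paren{\log u(x)}_+$, the goal becomes $\EE_{X\sim Q}\bigl[(\log u)_+^2\bigr] \le 4\KLd{Q}{P}$, and since $\KLd{Q}{P} = \EE_{X\sim Q}[-\log u]$, this is an inequality purely about the scalar random variable $u \ge 0$ subject to the constraint $\EE_{X\sim Q}[u] = 1$.

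First I would establish the pointwise bound, valid for every $u > 0$,
\[
    \paren{\log u}_+^2 \;\le\; 4(u-1) - 4\log u .
\]
The constant $4$ and the affine correction $4(u-1)$ are chosen so that both sides, together with their first derivatives, agree at $u=1$; crucially the term $4(u-1)$ integrates to zero under $Q$, which is precisely what lets the positive excursions of $\log u$ be compensated by the negative ones.

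Second I would verify this bound by splitting at $u=1$. On $(0,1]$ the left-hand side vanishes and the claim reduces to the standard convexity inequality $u - 1 - \log u \ge 0$. On $[1,\infty)$, substituting $s = \log u \ge 0$ reduces the claim to $g(s) \defeq 4e^s - 4 - 4s - s^2 \ge 0$; this follows because $g(0) = g'(0) = 0$ while $g''(s) = 4e^s - 2 > 0$ for all $s \ge 0$, so $g'$ is nonnegative and hence $g$ is nonnegative.

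Finally I would integrate the pointwise inequality against $Q$. Using $\EE_{X\sim Q}[u] = \int P\,\d x = 1$, the term $4\,\EE_{X\sim Q}[u-1]$ vanishes, leaving $\EE_{X\sim Q}\bigl[(\log u)_+^2\bigr] \le 4\,\EE_{X\sim Q}[-\log u] = 4\KLd{Q}{P}$, which is exactly the assertion. The only genuinely nontrivial step, and hence the main obstacle, is guessing the correct pointwise majorant; once the matching-at-$u=1$ heuristic pins down the coefficient $4$, the remaining verification is routine one-variable calculus.
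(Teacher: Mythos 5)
Your proof is correct, but it takes a genuinely different route from the paper's. The paper also argues via a pointwise bound followed by integration, but its pointwise bound is $\log x\leq 2(\sqrt{x}-1)$, whence $\paren{\log x}_+^2\leq 4(\sqrt{x}-1)^2$; integrating under $Q$ turns the majorant into $4\,\EE_{X\sim Q}\bigl(\sqrt{P(X)/Q(X)}-1\bigr)^2=8\,D_{\mathrm{H}}^2\paren{P,Q}$, and the proof concludes with the standard Hellinger--KL comparison $8\,D_{\mathrm{H}}^2\paren{P,Q}\leq 4\KLd{Q}{P}$. You instead majorize $\paren{\log u}_+^2$ by $4(u-1)-4\log u$ and integrate directly, letting the $-\log u$ term produce the KL integrand and the linear term drop out by normalization. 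The two majorants are comparable: since $8(1-\sqrt{u})\leq -4\log u$, the paper's bound $4(\sqrt{u}-1)^2$ sits pointwise below yours, so the paper's route passes through a tighter intermediate quantity and reuses a known inequality, making the verification shorter; your route is fully self-contained, requiring nothing beyond one-variable calculus, which is a genuine advantage if one does not want to invoke the Hellinger--KL bound.

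One small correction to your final step: $\EE_{X\sim Q}[u(X)]=\int_{\{q>0\}}p(x)\,\d x$, which equals $1$ only when $P$ places no mass outside the support of $Q$; in general it is only $\leq 1$. This does not damage the argument---the term $4\,\EE_{X\sim Q}[u-1]$ is then $\leq 0$ rather than $=0$, which only strengthens the conclusion---but the claim of exact normalization should be stated as an inequality.
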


\begin{proof}
Note that $\log x\leq 2(\sqrt{x}-1)$ for any $x\geq 1$, and hence $(\log x)_+^2\leq 4(\sqrt{x}-1)^2$. Applying this inequality, we have
\begin{align*}
    \EE_{X\sim Q}\paren{\log P(X)-\log Q(X)}_{+}^2
    =&~ \EE_{X\sim Q}\paren{\log \frac{P(X)}{Q(X)}}_{+}^2 \\
    \leq&~ 4\EE_{X\sim Q}\paren{\sqrt{\frac{P(X)}{Q(X)}}-1}^2=8\DH{P, Q}\leq 4\KLd{Q}{P}.
\end{align*}
This is the desired upper bound.
\end{proof}

\clearpage

\section{Experiment Details}\label{app:experiments}

\paragraph{Parametrization}
Following \cref{sec:cond-DM}, we adopt the denoiser parametrization $d_\theta(x,t|y)$, and the conditional score function $\scf_\theta$ is thus given by %
\begin{align*}
    \scf_\theta(x,t|y)=\frac{d_\theta(x,t|y)-x}{\sigma_t^2}.
\end{align*}
Therefore, the score-matching loss defined in \eqref{def:cond-score-loss} can be equivalently written as
\begin{align}\label{def:cond-score-loss-equiv}
    \LSM{k}(\theta)= \int_{0}^1 \lambda_t' \EE_{X_0\sim \cD\ind{k}, Y\sim \Qd{X}} \EE_{X_t\sim \normal{X_0, \sigma_t^2\id}}\nrm{ d_\theta(X_t,t|Y)-X_0 }^2 \d t,
\end{align}
where $\lambda_t'=\frac{\lambda_t}{\sigma_t^2}$, and $\lambda_t$ is the weight function from \eqref{def:cond-score-loss}. 

In our experiments, we adopt the following noise schedule:
\begin{align*}
    \sigma_t^2=\exp\paren{(1-t)\log(\sigma_0)+t\log(\sigma_1)},
\end{align*}
where $\sigma_0<\sigma_1$ are appropriate parameters, and the scalar $\sigma_t$ is encoded as a vector embedding. The input to the denoiser network is the concatenation of $X_t$, $Y$, and the vector embedding of the noise $\sigma_t$.
We also choose $\lambda_t=(\sigma_t^2+1)\cdot f(t;\alpha,\beta)$, where $f(t;\alpha,\beta)$ is the density function of the Beta distribution with parameters $(\alpha,\beta)$. 

For the manifold experiment (\cref{appdx:curve}), we choose $\alpha=3.5, \beta=1.5$, $\sigma_0=10^{-3}, \sigma_1=10^1$. For the remaining experiments, we set $\alpha=\beta=3$, $\sigma_0=10^{-3}, \sigma_1=10^2$.

\paragraph{Initialization}
As noted in \cref{sec:EM-converge}, the convergence rate of DiffEM depends on the quality of the initial prior $\pi\ind{0}$ through the quantity $\KLd{\Pxs}{\pi\ind{0}}$, i.e., the KL divergence between the ground-truth data distribution $\Pxs$ and the initial $\pi\ind{0}$. Therefore, a better initial prior may lead to faster convergence. In our experiments, we consider the following initialization strategies:
\begin{enumerate}
    \item[(a)] \textbf{Corrupted prior:} For \cref{eq:linear-corr}, the observation is $Y=(AX+\epsilon,A)$. When $\dy=\dx$, we can consider the \emph{corrupted prior} $\pi\ind{0}$, which is simply the distribution of $X'=AX+\epsilon$. To sample from $\pi\ind{0}$, we can draw $Y=(AX+\epsilon,A)\sim \Pys$ and set $X'=Y[0:\dy]$. 
    \item[(b)] \textbf{Gaussian prior:} In general, we can fit a Gaussian prior $\pi\ind{0}=\normal{\mu_X,\Sigma_X}$ using the observations $\set{Y\sid{1},\cdots,Y\sid{N}}\sim \Pys$. 
    \item[(c)] \textbf{Warm-start:} More generally, we can set $\pi\ind{0}$ to be any pre-trained diffusion prior as the \emph{warm-start} prior. In particular, this can be the diffusion prior trained on corrupted data by existing methods~\citep[etc.]{daras2023ambient,kawar2023gsure,rozet2024learning}.
\end{enumerate}
For the experiments (except \cref{ssec:CIFAR-improve}), we adopt initialization strategy (b). Following the implementation in \citep{rozet2024learning}, the Gaussian prior is fitted efficiently through a few closed-form EM iterations. An exception is the experiment on blurred CIFAR-10, where we adopt strategy (a).
In \cref{ssec:CIFAR-improve}, we perform experiments with strategy (c), applying DiffEM to the warm-start prior trained by EM-MMPS~\citep{rozet2024learning}, demonstrating that DiffEM can monotonically improve upon the initial prior.

\subsection{Additional Experiment: Synthetic manifold in $\R ^ 5$}\label{sec:curve}

We evaluate our method's performance on a synthetic problem introduced by \citep{rozet2024learning}. In this setting, the latent space is $\cX=\R^5$, with the latent distribution $\Pxs$ supported on a one-dimensional curve in $\R^5$. The observation $Y=(AX+\epsilon,A)$ is generated through the following steps: (1) sample a latent point $X\sim \Pxs$, (2) sample a corruption matrix $A \in \R^{2 \times 5}\sim \PA$ with rows drawn uniformly from the unit sphere $\mathbb{S}^4$, and (3) add Gaussian noise $\epsilon\sim \normal{0,\sigy^2\id}$.

Following \citet{rozet2024learning}, we apply our method to a dataset of $65536$ independent observations with noise variance $\sigy^2 = 10^{-4}$. Detailed experimental settings are presented in \cref{appdx:curve}. 
Figure~\ref{fig:curve-2D-compare} illustrates the two-dimensional marginals of the reconstructed latent distribution compared to those obtained by \citep{rozet2024learning}. The results demonstrate that our method achieves better concentration around the ground-truth curve, providing empirical evidence that the conditional diffusion model learns the posterior distribution more accurately than the approximate posterior sampling scheme of \citep{rozet2024learning} (cf. \cref{sec:EM-diffusion-discuss}).

\begin{figure}[t]
    \centering
    \includegraphics[width=0.9\textwidth]{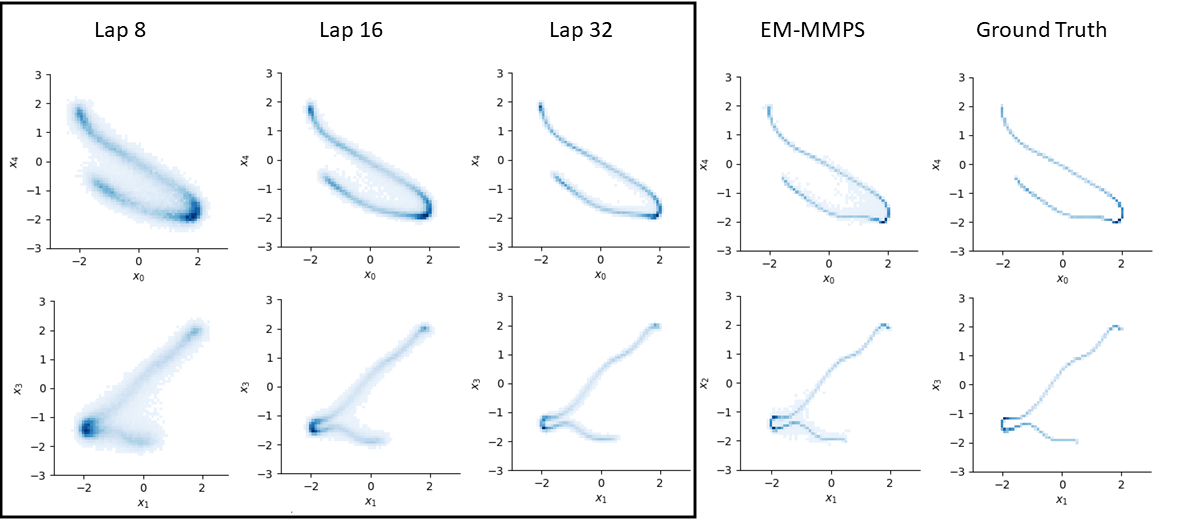}
    \caption{Evolution of the learned latent distribution on the synthetic manifold task. From left to right: reconstructed distributions from our model at DiffEM iterations 8, 16, and 32, followed by the distribution from EM-MMPS (\citep{rozet2024learning}, 32th iteration) and the ground-truth $\Pxs$. Our method shows progressively better concentration around the ground-truth curve, demonstrating more accurate posterior learning compared to previous work.}
    \label{fig:curve-2D-compare}
\end{figure}

\subsection{More details on the experiment in \cref{sec:curve}}\label{appdx:curve}

We implement the denoiser network $d_\theta(x,t|y)$ using a Multi-Layer Perceptron (MLP). The network architecture and training hyperparameters are detailed in Table \ref{tab:architecture-manifold}. 

\begin{table}[h!]
    \centering
    \renewcommand{\arraystretch}{1.2}
    \begin{tabular}{c|c}
    \hline
    Architecture & MLP \\
    Input Shape & $5+2+5\times 2=17$ \\
    Hidden Layers & 3 \\
    Hidden Layer Sizes & 256, 256, 256\\
    Activation & SiLU \\
    Normalization & LayerNorm \\
    \hline
    Optimizer & Adam \\
    Weight Decay & 0 \\
    Scheduler & linear \\
    Initial Learning Rate & $1 \times 10^{-3}$\\
    Final Learning Rate & $1 \times 10^{-6}$ \\
    Gradient Norm Clipping & 1.0 \\
    Batch Size & 1024 \\
    Epochs in each iteration & 65536 \\
    \hline
    Sampler & Predictor-Corrector \\
    Sampler Steps & 4096 \\
    \hline
    Number of EM iterations & 32\\
    \hline
    \end{tabular}
    \vspace{5pt}
    \caption{Network architecture and training hyperparameters for the MLP used in the synthetic manifold experiment. }
    \label{tab:architecture-manifold}
\end{table}

To quantify the quality of the learned distribution, we compute the Sinkhorn divergence $S_\lambda$ \cite{ramdas2015wassersteinsampletestingrelated} with regularization parameter $\lambda = 10^{-3}$ after each epoch. The Sinkhorn divergence is defined as:
\begin{align*}
S_\lambda(\mu,\nu) & := T_\lambda(\mu,\nu) - \frac 12 (T_\lambda(\mu, \mu) + T_\lambda(\nu, \nu)) \\
T_\lambda(\mu,\nu) & := \min_{\gamma \in \Pi(\mu, \nu)} \int_{(\R^d)^2} \|y-x\|_2^2d\gamma(x,y) + 2 \lambda H(\gamma, \mu \otimes \nu)
\end{align*}
We plot the evolution of Sinkhorn divergence over the iterations of DiffEM and EM-MMPS~\citep{rozet2024learning} in \cref{fig:manifold-divergence-comparison}. We also plot the 2D marginals of the distributions reconstructed by DiffEM and EM-MMPS in \cref{fig:manifold-samples-comparison}.

\begin{figure}[t]
    \centering
    \begin{minipage}[b]{0.85\textwidth}
      \centering
      \includegraphics[width=\textwidth]{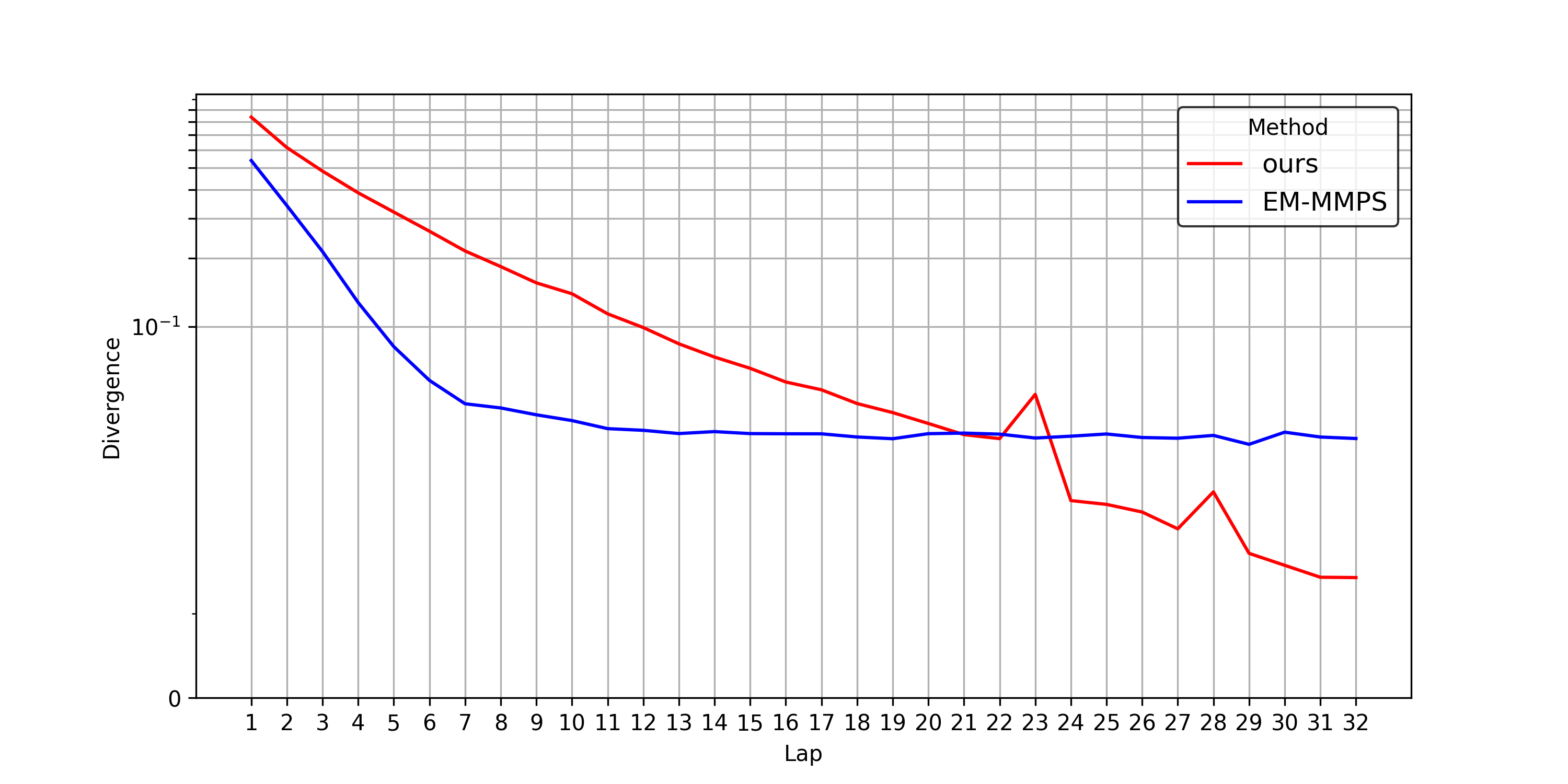}
    \end{minipage}
    \caption{Evolution of Sinkhorn divergence between the ground-truth and reconstructed distributions during training. The red line shows DiffEM, and the blue line shows EM-MMPS.}\label{fig:manifold-divergence-comparison}
\end{figure}

Figure \ref{fig:manifold-divergence-comparison} demonstrates that while EM-MMPS provides effective initialization when the learned distribution is far from the true data distribution, it plateaus quickly and fails to achieve further improvements. This is likely due to the inherent approximation error of the approximate posterior sampling scheme (MMPS). In contrast, DiffEM continues to refine the reconstructed distribution, achieving better concentration around the ground-truth curve.

\begin{figure}[t]
  \centering
  \begin{minipage}[b]{0.7\textwidth}
    \centering
    \includegraphics[width=\textwidth]{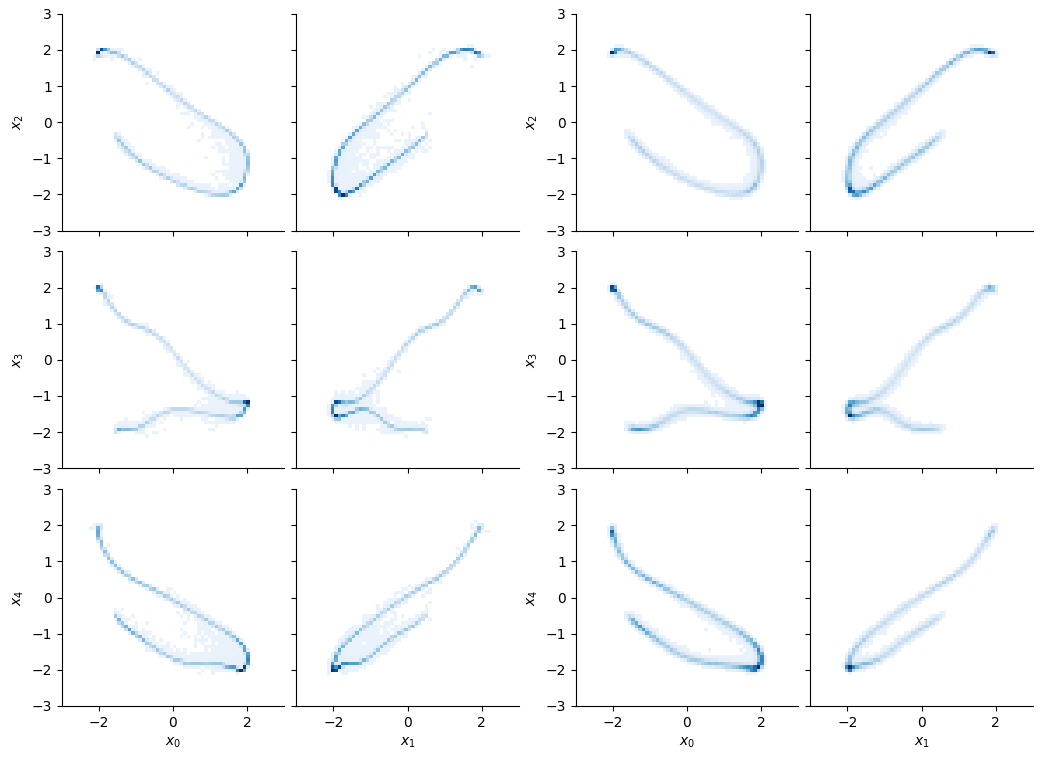}
  \end{minipage}
  \caption{Comparison of 2D marginals of reconstructed distributions after the final iteration. Left: EM-MMPS; Right: DiffEM. DiffEM achieves better concentration around the ground-truth curve, indicating more accurate posterior learning.}
  \label{fig:manifold-samples-comparison}
\end{figure}

\subsection{Details of Masked CIFAR-10 (\cref{ssec:cifar-10})}\label{appdx:CIFAR-10-masking}

\begin{figure}
    \centering
    \includegraphics[width=0.8\linewidth]{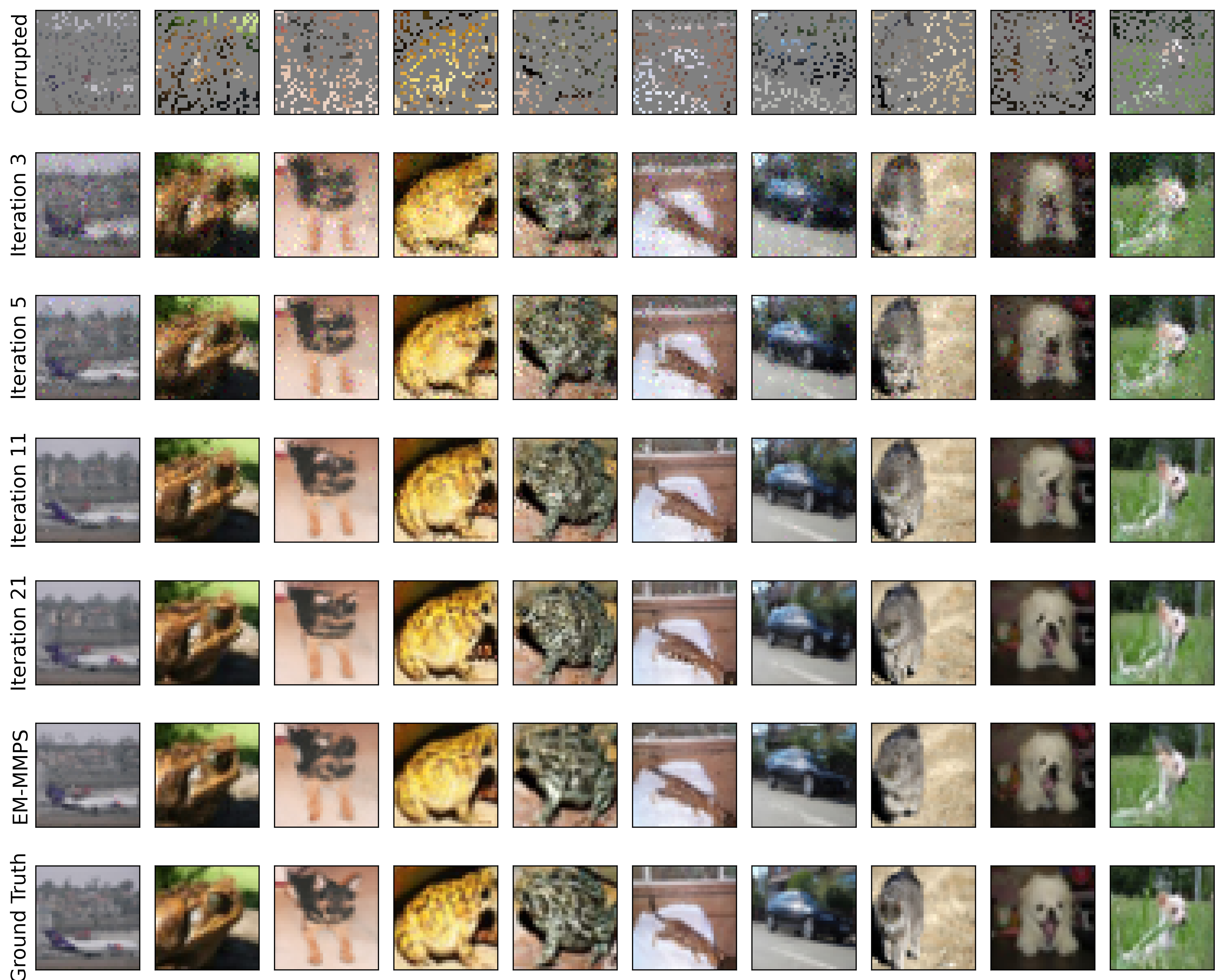}
    \caption{Qualitative comparison of reconstruction results on masked CIFAR-10 images. Top to bottom: corrupted input, EM-MMPS reconstructions, DiffEM reconstructions, and ground truth.}
    \label{fig:Masked-CIFAR-pictures}
\end{figure}

In this experiment, the conditional denoiser network $d_\theta$ is a U-Net \cite{unet}, and we adopt the same experimental setup as \cite{rozet2024learning} for a fair comparison. The only major difference in the architecture arises from the fact that our model is conditional and thus for the input we need to feed two images $X_t$ with shape $(32,32,3)$ and $Y$ with shape $(32, 32, 3)$ to the model, we concatenate the images on the third dimension and thus the input shape for the model is $(32, 32, 6)$, the output is also $(32,32,6)$ but in the whole training process we ignore the last three channels of the output. The details of network architecture and hyperparameters are presented in \cref{tab:architecture-CIFAR}.

\begin{table}[h!]
    \centering
    \renewcommand{\arraystretch}{1}
    \begin{tabular}{c|c|c}
    \hline
    Experiment                  & CIFAR-10                         & CelebA \\
    \hline
    Architecture                & U-Net                             & U-Net\\
    Input Shape                 & (32, 32, 6)                 & (64, 64, 6)\\
    Channels Per Level          & (128, 256, 384)     & (128, 256, 384, 512) \\
    Attention Heads per level   & (0, 4, 0)                   & (0, 0, 0, 4) \\
    Hidden Blocks               & (5, 5, 5)                   & (3, 3, 3, 3) \\
    Kernel Shape                & (3, 3)                            & (3, 3) \\
    Embedded Features           & 256                                  & 256 \\
    Activation                  & SiLU                              & SiLU \\
    Normalization               & LayerNorm                   & LayerNorm \\
    \hline
    Optimizer                   & Adam                           & Adam \\
    Initial Learning Rate       & $2\tenpow{-4}$      & $1\tenpow{-4}$ \\
    Final Leanring Rate         & $1\tenpow{-6}$       & $1\tenpow{-6}$ \\
    Weight Decay                & 0                                & 0 \\
    EMA                         & 0.9999                       & 0.999 \\
    Dropout                     & 0.1                         & 0.1 \\
    Gradient Norm Clipping      & 1.0                          & 1.0 \\
    Batch Size                  & 256                          & 256 \\
    Epochs per EM iteration     & 256                          & 64\\
    \hline
    Sampler                     & DDPM                           & DDPM \\
    \hline
    \end{tabular}
    \vspace{5pt}
    \caption{Network architecture and training hyperparameters for the U-Net models used in the CIFAR-10 and CelebA experiments. Input shape varies by task.}
    \label{tab:architecture-CIFAR}
\end{table}

We apply DiffEM with $K=21$ iterations to train our conditional diffusion model and evaluate its performance for the posterior sampling task as described in \cref{ssec:cifar-10}. To evaluate the quality of the reconstructed data distribution, we also train an unconditional diffusion model with the same architecture on the reconstructed data. We compute the Inception Score (IS) \cite{salimans2016improvedtechniquestraininggans} and the Fréchet Inception Distance (FID) \cite{NIPS2017_8a1d6947} using the torch-fidelity package \citep{obukhov2021high}, and \DINO~\citep{oquab2023dinov2,stein2023exposing} and \FDINF~\citep{chong2020effectively} using the codebase from \citep{stein2023exposing}. The results are presented in \cref{tab:Masking-results-merged} and \cref{tab:additional-evals}. We also note that the results of EM-MMPS are obtained with 32 iterations, following the original setup of \citet{rozet2024learning}.

\begin{figure}[h!]
  \centering
  \begin{minipage}[b]{\textwidth}
    \centering
    \includegraphics[width=0.6\textwidth]{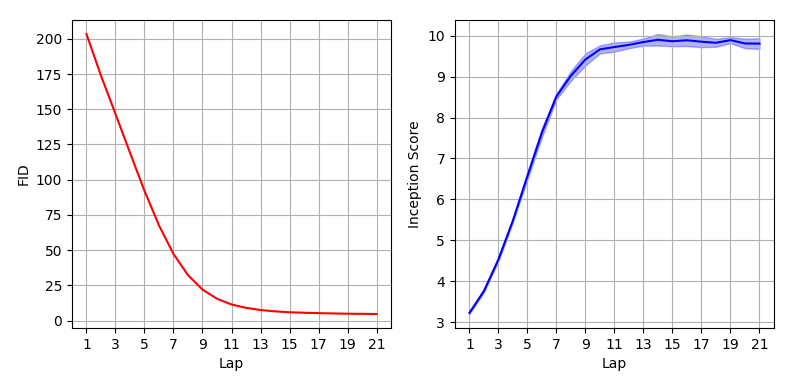}
  \end{minipage}
  \label{fig:cifar10masking-fidplots}
  \caption{Evolution of evaluation metrics for posterior sampling measured during DiffEM training on CIFAR-10 with random masking. Left: FID, Right: Inception Score.}
\end{figure}

\begin{figure}[h!]
    \centering

    \begin{minipage}[b]{0.48\textwidth}
        \centering
        \includegraphics[width=0.9\textwidth]{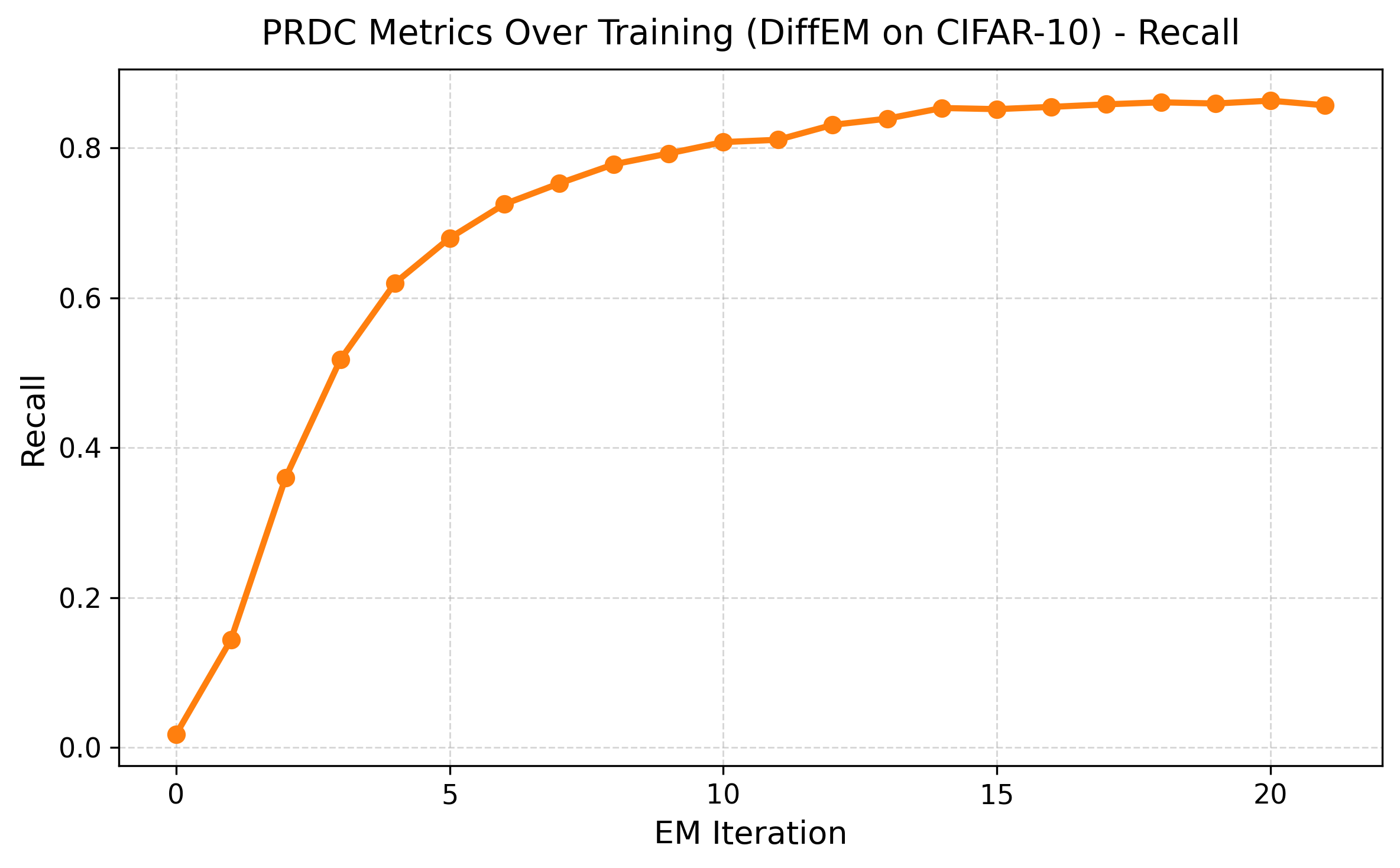}
    \end{minipage}
    \hfill
    \begin{minipage}[b]{0.48\textwidth}
        \centering
        \includegraphics[width=0.9\textwidth]{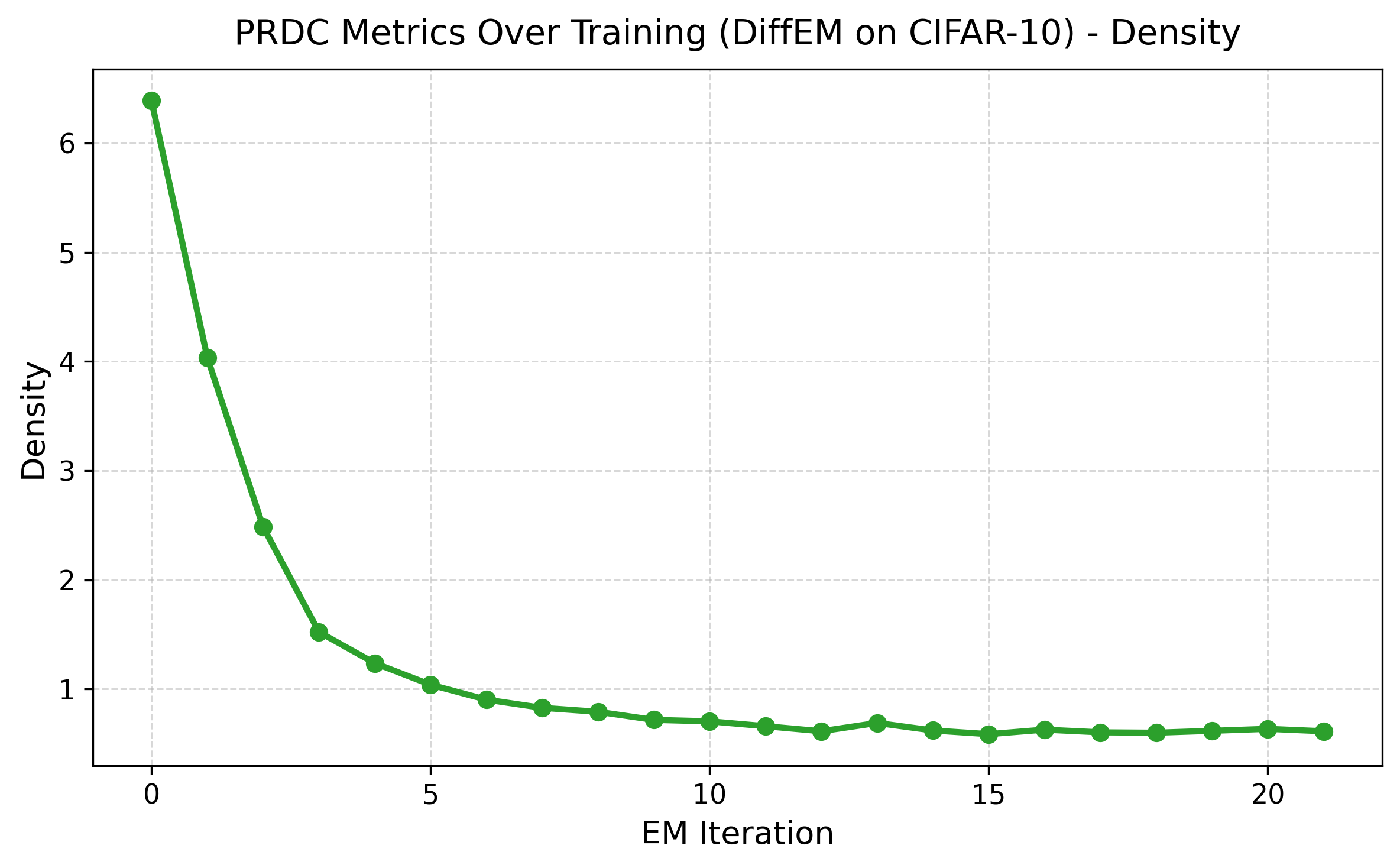}
    \end{minipage}

    \vspace{12pt}

    \begin{minipage}[b]{0.6\textwidth}
        \centering
        \includegraphics[width=\textwidth]{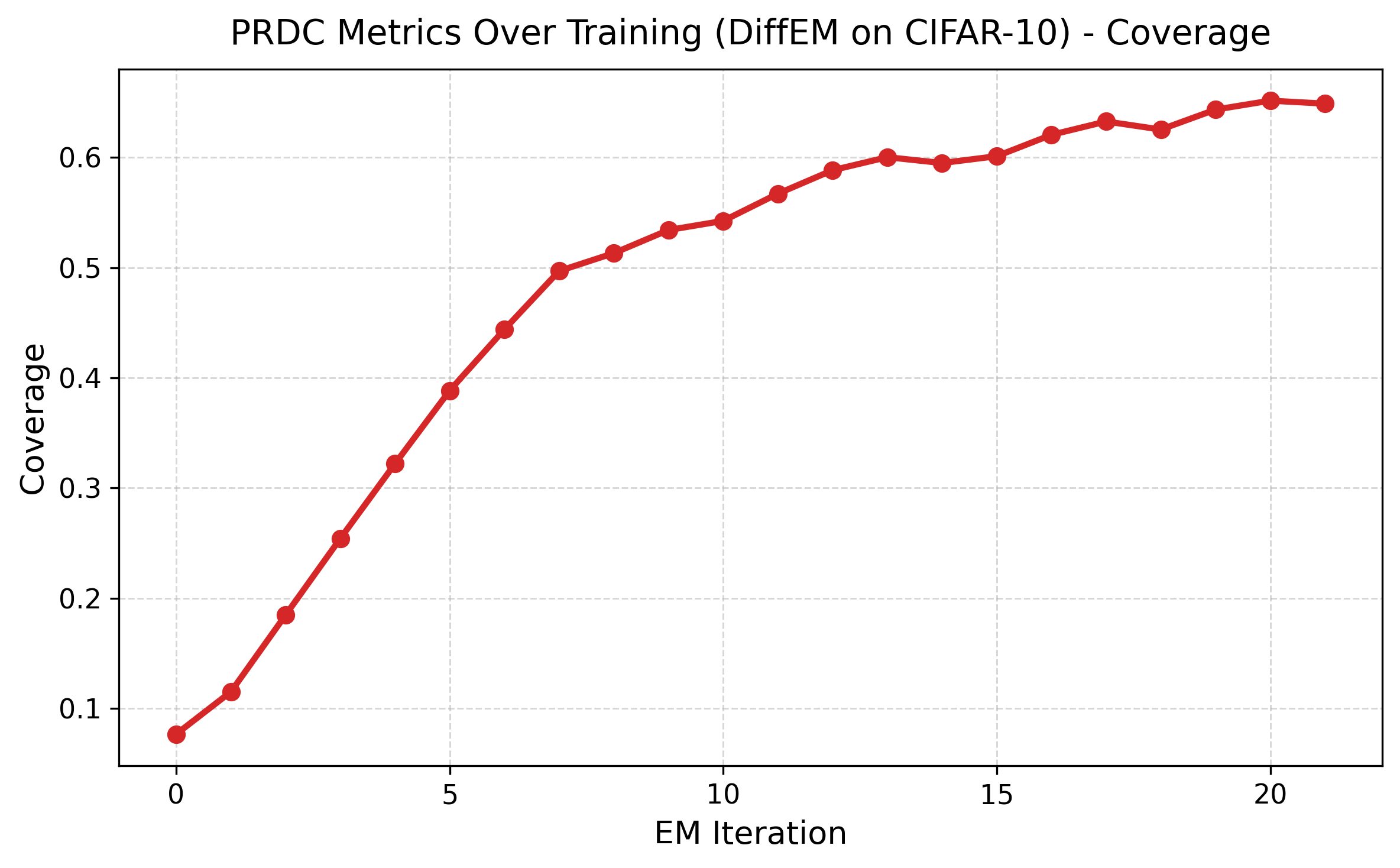}
    \end{minipage}

    \caption{\textcolor{black}{Evolution of evaluation metrics for posterior sampling measured during DiffEM training on CIFAR-10 with random masking.}}
    \label{fig:cifar10masking-fidplots-2}
\end{figure}

As an illustration, we also plot the evolution of the IS and FID during DiffEM iterations, demonstrating that DiffEM monotonically improves the quality of the reconstructed data distribution, in accordance with our theoretical results (\cref{lem:EM-improve}).

\paragraph{Experiments with higher corruption}
In addition, we perform experiments on CIFAR-10 with corruption probability $\rho=0.9$ (i.e., $90\%$ of the pixels are randomly deleted) and present the results in \cref{tab:cifar-90-results}. Under such high corruptions, DiffEM also consistently outperforms EM-MMPS~\citep{rozet2024learning}.

\begin{table}[h!]
\centering
\renewcommand{\arraystretch}{1.2}
\begin{tabular}{c|c|cccc}
\hline
Task & Method & \ISc & \FIDc & \DINOc & \FDINFc \\
\hline
\multirow{2}{*}{Posterior sampling} & EM-MMPS & 5.06&67.97&1045.51 & 1039.82\\ 
\cline{2-6}
& DiffEM & \textbf{5.86} & \textbf{46.13} & \textbf{915.69} & \textbf{912.26} \\
\hline
\hline
\multirow{2}{*}{\textcolor{black}{Unconditional generation}} & EM-MMPS & 4.86 &73.34& 1174.13 & 1168.66\\ 
\cline{2-6}
& DiffEM & \textbf{5.46} & \textbf{49.10} & \textbf{1111.16} & \textbf{1107.64} \\
\hline
\end{tabular}
\vspace{5pt}
\caption{Performance of DiffEM and EM-MMPS on CIFAR-10 with $90\%$ random masking. }
\label{tab:cifar-90-results}
\end{table}

\subsection{DiffEM with warm-start}

We plot the evolution of IS, FID, \DINO~and \FDINF~scores during training in \cref{fig:cifar-monotonic-improvement}.
\begin{figure}[t]
  \centering
  \begin{minipage}[b]{0.9\textwidth}
    \centering
    \includegraphics[width=\textwidth]{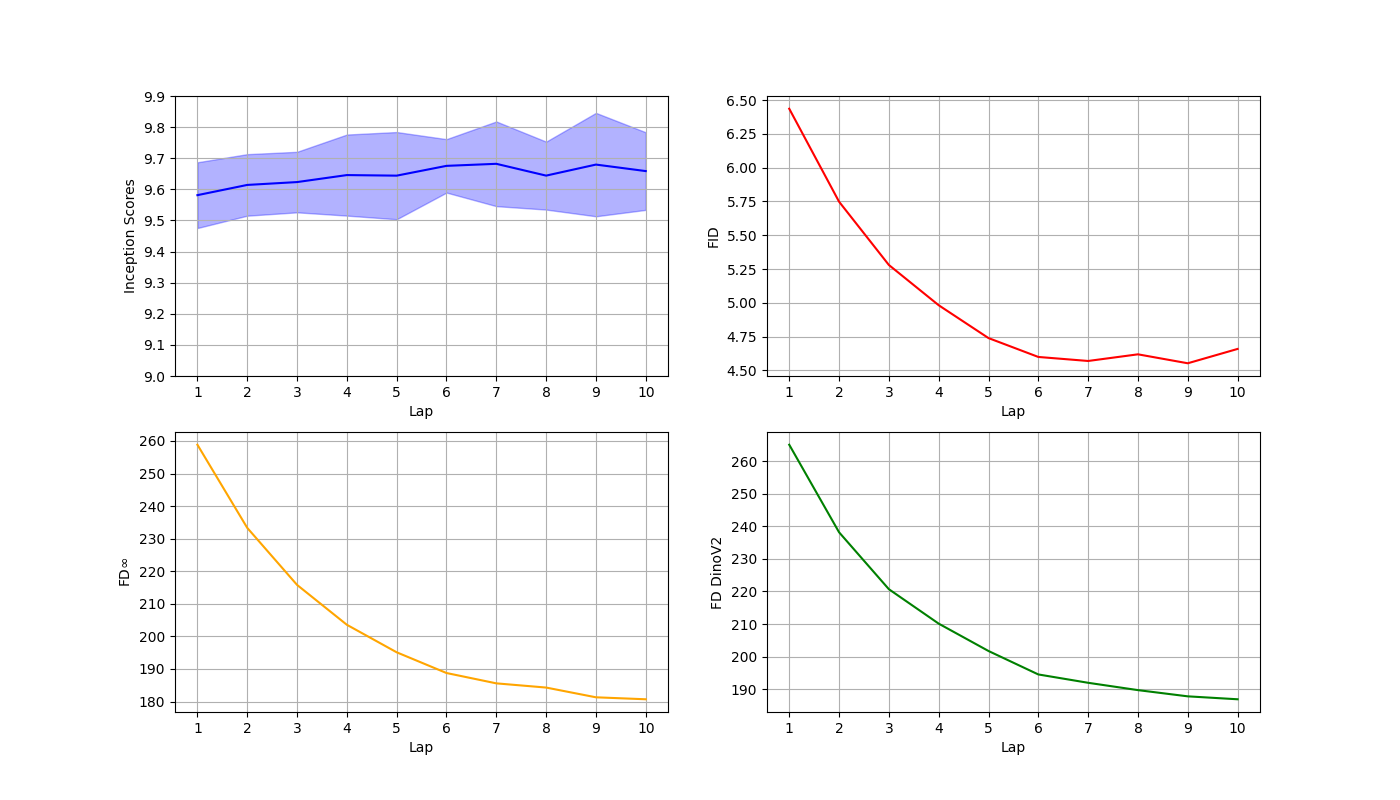}
  \end{minipage}
  \caption{Evolution of IS, FID, DINO, FD$_\infty$~during the 10 DiffEM iterations with the warm-started prior.}
  \label{fig:cifar-monotonic-improvement}
\end{figure}

\subsection{Details of Blurred CIFAR-10}\label{appdx:CIFAR-10-blur}

In the experiment on CIFAR-10 with Gaussian blur, we set $\sigblur=2$ and $\sigy^2=10^{-6}$. We apply DiffEM for $K=21$ iterations, with the same initialization, denoiser network architecture, and hyperparameters as in the masked CIFAR-10 experiment (detailed in \cref{tab:architecture-CIFAR}, \cref{appdx:CIFAR-10-masking}). Due to time constraints, we do not evaluate EM-MMPS~\citep{rozet2024learning}, as the moment-matching steps (based on the conjugate gradient method) are very time-consuming in this setting.

\paragraph{Qualitative study}
To evaluate the quality of the trained conditional model, we sample a set of blurred images from the CIFAR-10 training set and use the trained model to generate a reconstruction for each image. We present the images in \cref{fig:Blur-CIFAR-pictures}.

\paragraph{Quantitative comparison}
For comparison, we use the Richardson-Lucy deblurring algorithm \cite{Richardson:72} as a baseline, which is a widely used method for image deconvolution. We also plot the evolution of the IS and FID during DiffEM iterations in \cref{fig:cifar-monotonic-improvement-simple}.

\begin{table}[h!]
\centering
\renewcommand{\arraystretch}{1.2}
\begin{tabular}{c|cccc}
\hline
Method & \ISc & \FIDc & \DINOc & \FDINFc \\
\hline
\deconv & 3.72 & 131.74 & 1479.79 & 1470.78 \\ 
\hline
DiffEM (Ours) & \textbf{6.12} & \textbf{43.65} & \textbf{404.05} & \textbf{400.65} \\
\hline
\end{tabular}
\vspace{5pt}
\caption{Posterior sampling performance on CIFAR-10 with Gaussian blur ($\sigblur=2$).}
\label{tab:blur-results-posterior}
\end{table}

\begin{table}[h!]
\centering
\renewcommand{\arraystretch}{1.2}
\begin{tabular}{c|cccc}
\hline
Method & \ISc & \FIDc & \DINOc & \FDINFc \\
\hline
DiffEM (Ours) & 11.27 & 51.25 & 772.23 & 768.19 \\
\hline
\end{tabular}
\vspace{5pt}
\caption{\textcolor{black}{unconditional generation} performance on CIFAR-10 with Gaussian blur ($\sigblur=2$).}
\label{tab:blur-results-prior}
\end{table}

\begin{figure}
    \centering
    \includegraphics[width=0.8\linewidth]{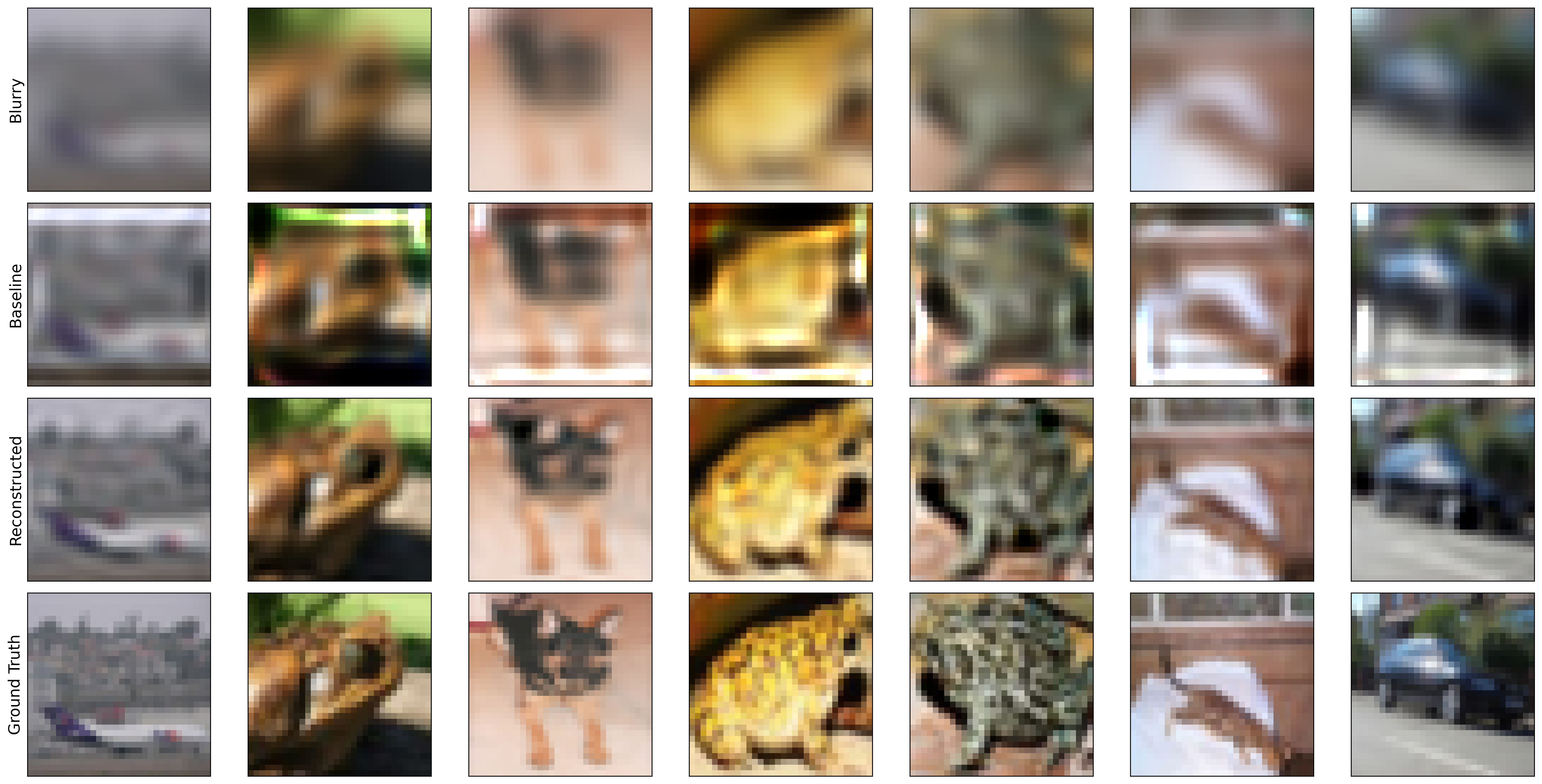}
    \caption{Qualitative results of image reconstruction from Gaussian blur. Top to bottom: blurred image, reconstruction by \deconv, image reconstructed by DiffEM model, and ground truth. DiffEM effectively recovers image details.}\label{fig:Blur-CIFAR-pictures}
\end{figure}

\begin{figure}
  \centering
  \begin{minipage}[b]{0.6\textwidth}
    \centering
    \includegraphics[width=\textwidth]{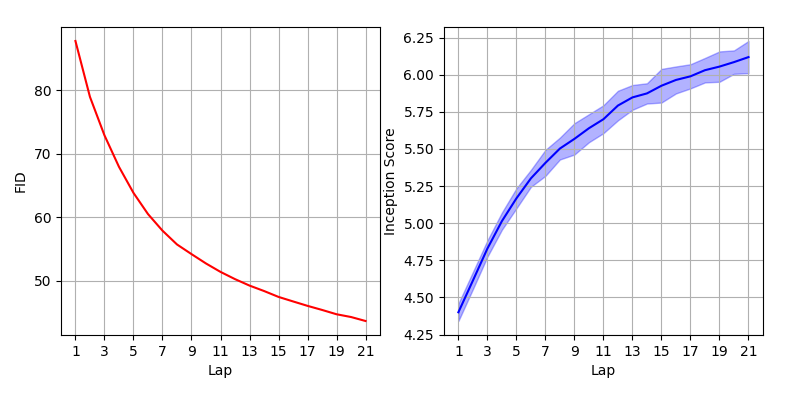}
  \end{minipage}
  \caption{Evolution of evaluation metrics for posterior sampling measured during DiffEM training on CIFAR-10 with Gaussian blur. Left: FID, Right: Inception Score.}
  \label{fig:cifar-monotonic-improvement-simple}
\end{figure}

\subsection{\textcolor{black}{Corruption Model Mismatch}}\label{sec:model_mismatch}
\textcolor{black}{
In many real-world settings, the likelihood function is not known exactly. Instead, one typically works with an estimate $\hat{Q}(\cdot \mid X)$ rather than the true likelihood function $Q(\cdot \mid X)$. Notably, all of our experiments and those in prior work \citep{rozet2024learning, daras2023ambient, daras2025ambientdiffusionomnitraining}, assume access to the exact likelihood function.
}

\textcolor{black}{
In this section, we investigate the more realistic scenario in which the data are corrupted by one channel while the model is trained using a misspecified one. Concretely, we use CIFAR-10 and apply random masking with true corruption probability $\rho = 0.75$ to generate the observations. However, during training we assume a mismatched corruption probability $\hat{\rho} = \rho + \Delta$. For $\Delta \in \{-0.1, -0.05, 0, 0.05, 0.1\}$, we train and evaluate DiffEM to study its robustness under corruption-model misspecification. Based on the results shown Figure ~\ref{fig:model_mismatch}, slightly over estimating the corruption probability, which will make the model be trained on a harder task would yield a better result than slightly underestimating the corruption probability.
}

\begin{figure}
    \centering
    \includegraphics[width=0.8\linewidth]{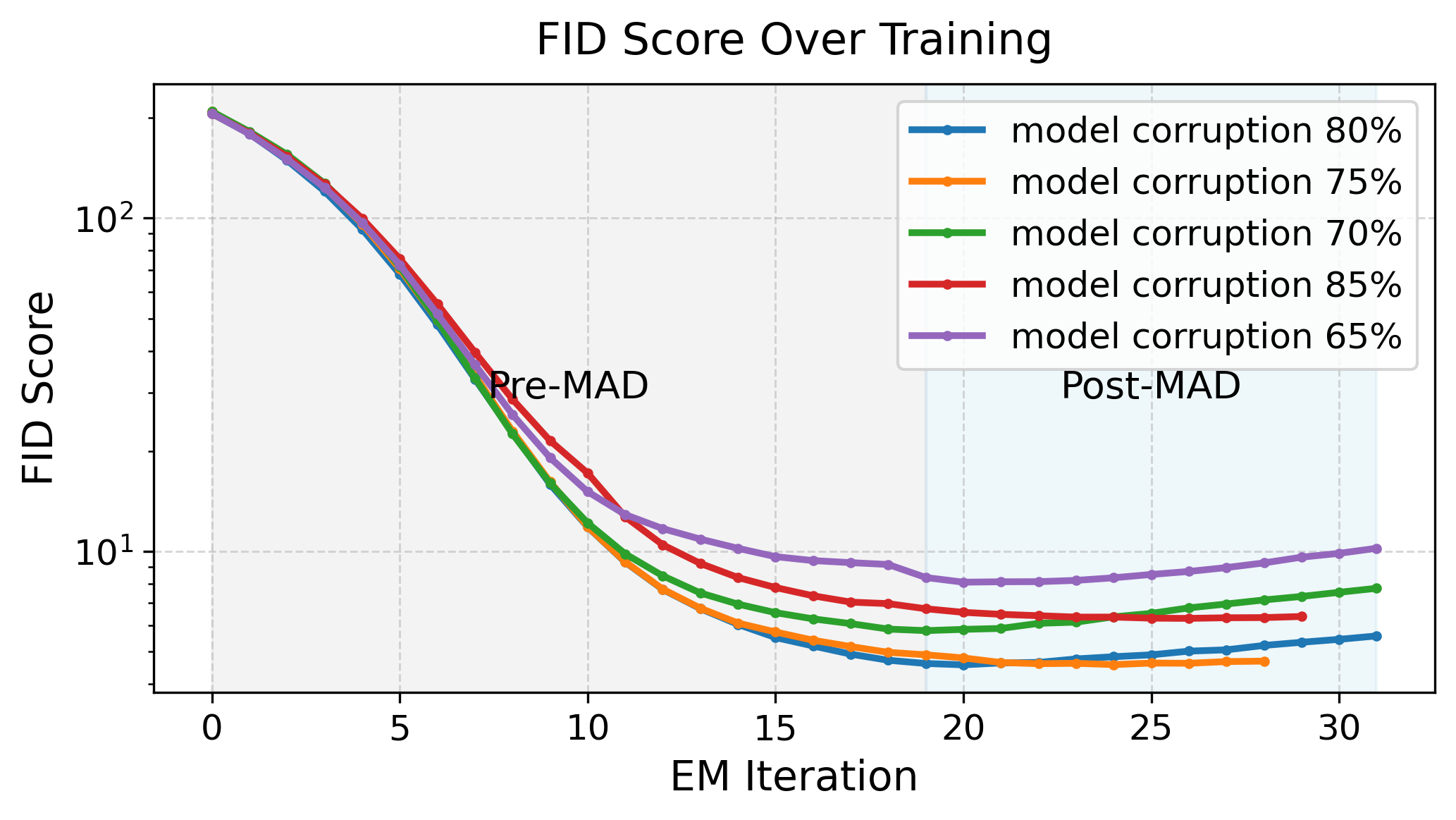}
    \caption{
        \textcolor{black}{FID over different EM iterations are shown for each of the corruptions assumed by the model, details of the experiment are discussed in \ref{sec:model_mismatch}}
    }\label{fig:model_mismatch}
\end{figure}

\subsection{\textcolor{black}{Non-linear Discrete Corruption}}\label{sec:jpeg}
\textcolor{black}{
In this section, we investigate a corruption function that is neither linear nor continuous, but instead exhibits inherently discrete behavior. A canonical example of such corruption is JPEG compression. JPEG applies a sequence of nonlinear operations—including blockwise discrete cosine transforms (DCT), quantization, and rounding—which introduce structured, non-Gaussian artifacts that cannot be modeled as additive noise. This setting is especially relevant, as many real-world image pipelines (e.g., internet images, mobile devices, and storage-limited datasets) rely heavily on JPEG or similar codecs.
}

\textcolor{black}{
To study the effect of such discrete corruption, we compress and decompress all CIFAR-10 images using JPEG with a quality factor of $0.2$. At this low quality level, the quantization step is extremely aggressive, removing a substantial portion of the high-frequency content and producing severe compression artifacts. In practice, this corruption destroys a significant amount of the original information in the dataset. We train our diffusion models directly on these JPEG-compressed images to evaluate the robustness of our method under realistic, non-smooth likelihood functions that differ significantly from the Gaussian processes assumed in prior work.
}

\textcolor{black}{The results in Figure~\ref{fig:jpeg_fid} indicate that under this high level of corruption the model converges rapidly and exhibits the MAD (Model Autophagy Disorder) effect much earlier than in our other experiments. Further discussion of MAD is provided in Section~\ref{sec:mad}. Notably, this experiment also shows that MAD can have a pronounced impact: after sufficient EM iterations, performance may degrade significantly, with the model at iteration~21 performing worse (in terms of distributional metrics) than after a single iteration.}

\begin{figure}[t]
    \centering

    \begin{minipage}[b]{0.48\textwidth}
        \centering
        \includegraphics[width=\linewidth]{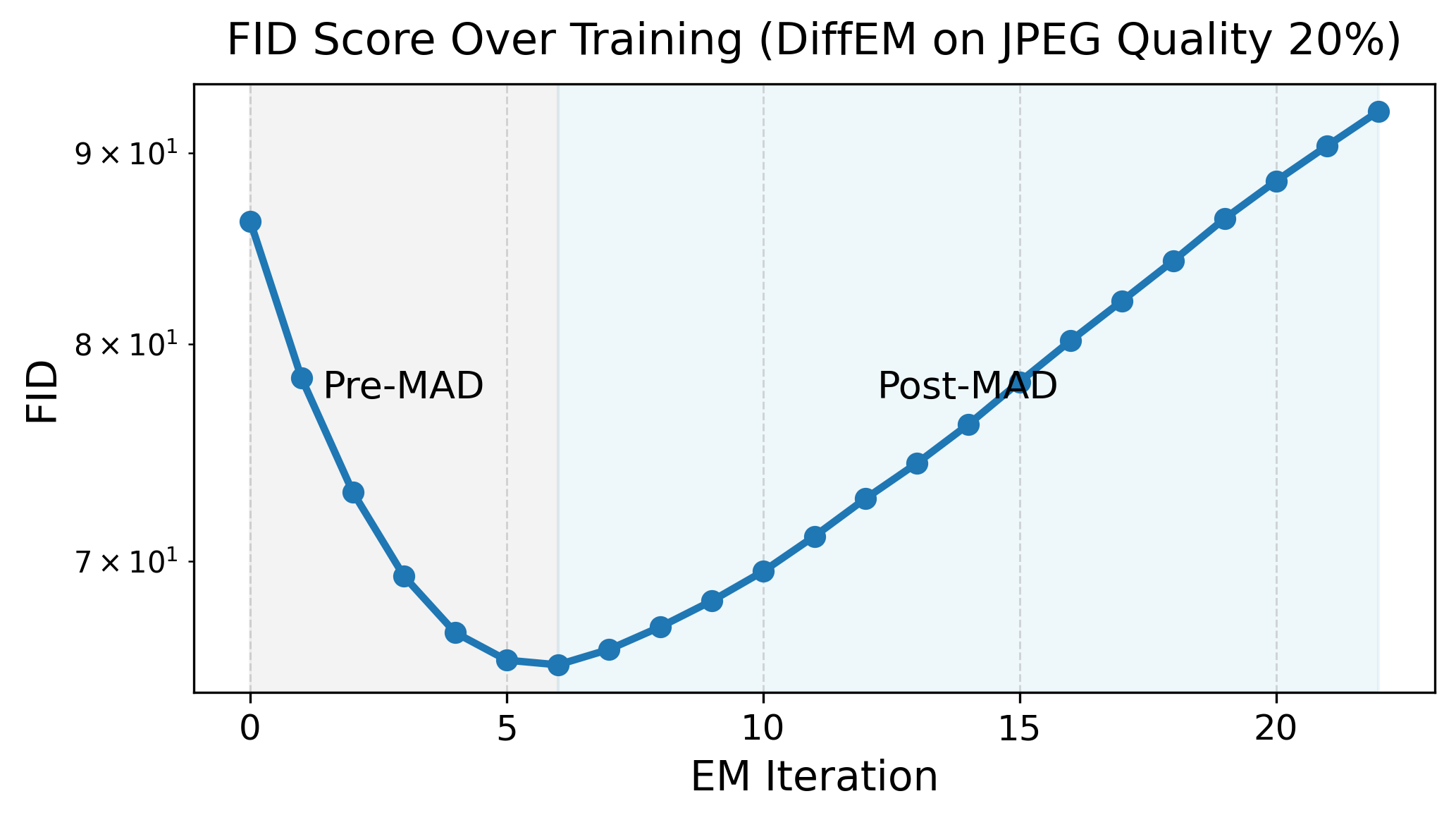}
    \end{minipage}
    \hfill
    \begin{minipage}[b]{0.48\textwidth}
        \centering
        \includegraphics[width=\linewidth]{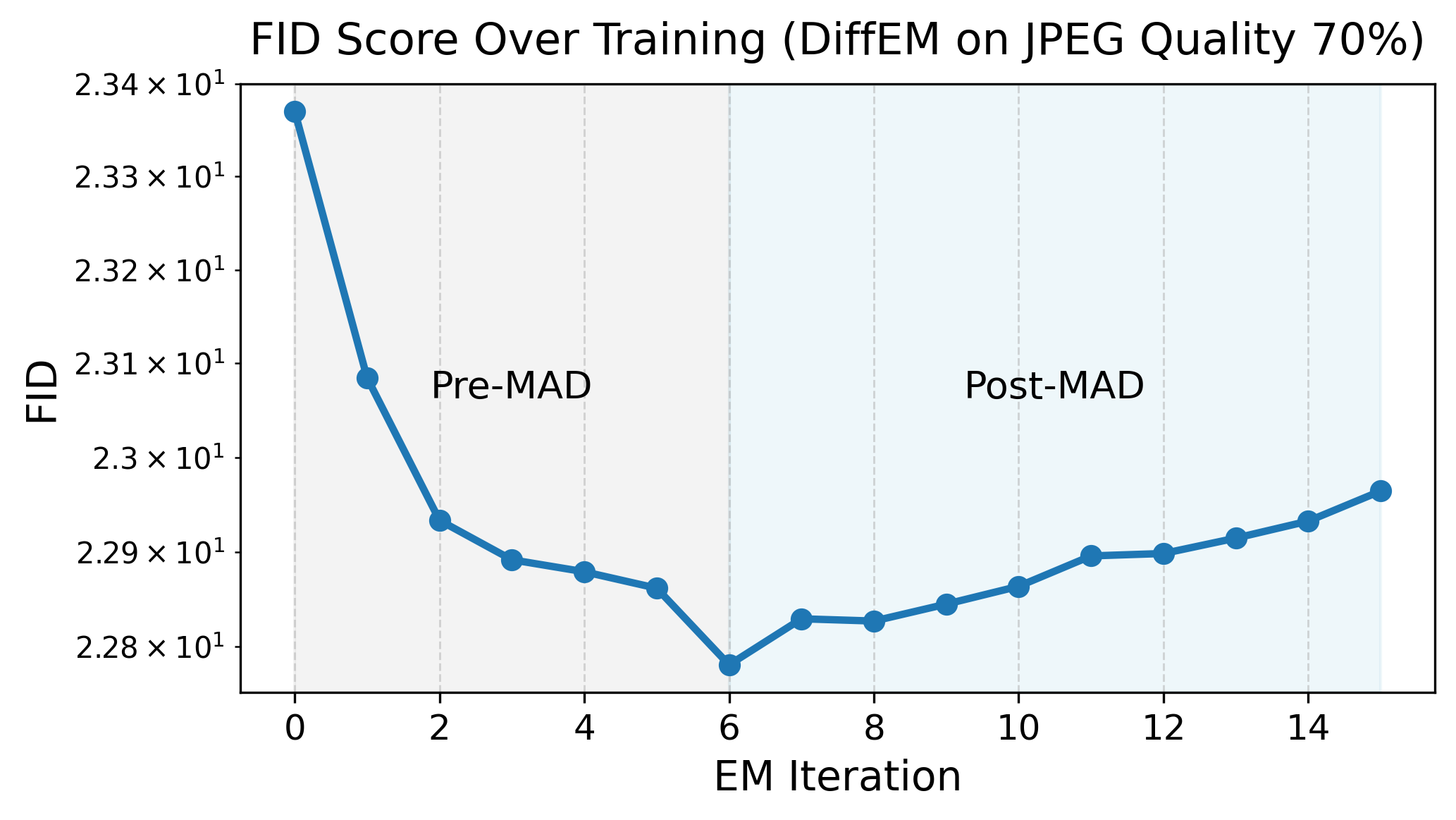} %
    \end{minipage}

    \vspace{0.8em}

    \begin{minipage}[b]{0.7\textwidth}
        \centering
        \includegraphics[width=\linewidth]{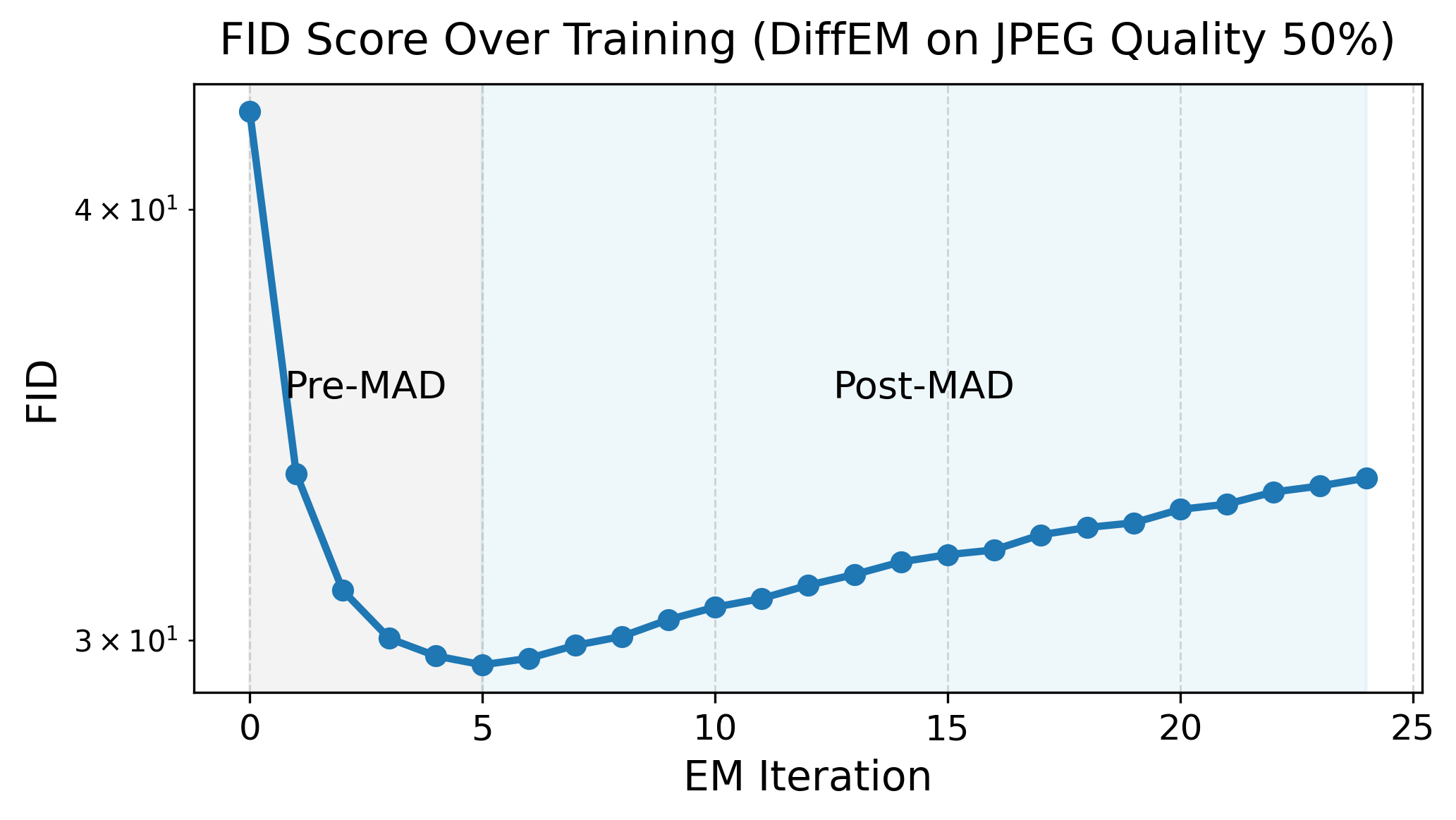} %
    \end{minipage}

    \caption{\textcolor{black}{Evolution of the FID for DiffEM's conditional model under JPEG corruption with quality $20\%$ on CIFAR-10. More details on the experiment are provided in Section~\ref{sec:jpeg}.}}
    \label{fig:jpeg_fid}
\end{figure}

\subsection{\textcolor{black}{MAD: Model Autophagy Disorder}}\label{sec:mad}

\textcolor{black}{We consistently observe the MAD effect \citep{alemohammad2023selfconsuminggenerativemodelsmad} across nearly all of our experiments when the EM procedure is continued for sufficiently many iterations. In the case of CIFAR-10 with random masking at corruption level $\rho = 0.75$, evaluating the conditional model after each EM iteration yields the behavior shown in Figure~\ref{fig:cifar-mad-two-metrics}.}
\textcolor{black}{
Figure~\cref{fig:jpeg_fid} demonstrates the MAD effect for JPEG corruption under three different compression qualities ($20\%$, $50\%$, and $70\%$). The results suggest that stronger corruption leads to more pronounced MAD behavior. An interesting phenomenon emerges when we relate this observation to the theoretical bound in \cref{prop:EM-linear}. In that bound, the second term is non-decreasing in $K$, while the first term becomes negligible for sufficiently large $K$, as it decays exponentially fast. Under the proposition’s assumption
\[
\varepsilon_x^{(k)} \le \frac{R}{\kappa},
\]
we obtain the bound
}
\textcolor{black}{
\begin{align*}
    \ict
    \KLd{\Pxs}{\pi^{(K)}}
    \le
    \exp\!\left(-\frac{K}{\kappa+1}\right) \KLd{\Pxs}{\pi^{(0)}}
    + \frac{\kappa+1}{\kappa} R.
\end{align*}
}
\textcolor{black}{
Because $\kappa \ge 1$ (as guaranteed by the identifiability condition in \cref{asmp:RSI}), we have $\frac{\kappa+1}{\kappa} R \le 2R$. Thus, after sufficiently many EM iterations, the model cannot drift arbitrarily far from the true distribution: once it reaches a distance larger than $2R$, it is forced to move back with an exponential rate. Indeed, \cref{fig:jpeg_fid} shows that under \emph{reasonable} corruption levels, the MAD effect is always present but remains well-controlled.
}

\textcolor{black}{
However, when the corruption becomes very severe (e.g., JPEG quality $20\%$), the model can diverge significantly after many EM iterations. This is expected, because in such regimes the identifiability assumption no longer holds due to substantial information loss introduced by the corruption channel. In summary, the MAD effect always appears, but under moderate corruption—where identifiability is valid—it remains controlled. For high corruption levels, where the corruption channel destroys too much information, no theoretical guarantees prevent the MAD effect from becoming extreme.
}

\begin{figure}[t]
  \centering
  \begin{minipage}[b]{0.48\textwidth}
    \centering
    \includegraphics[width=\textwidth]{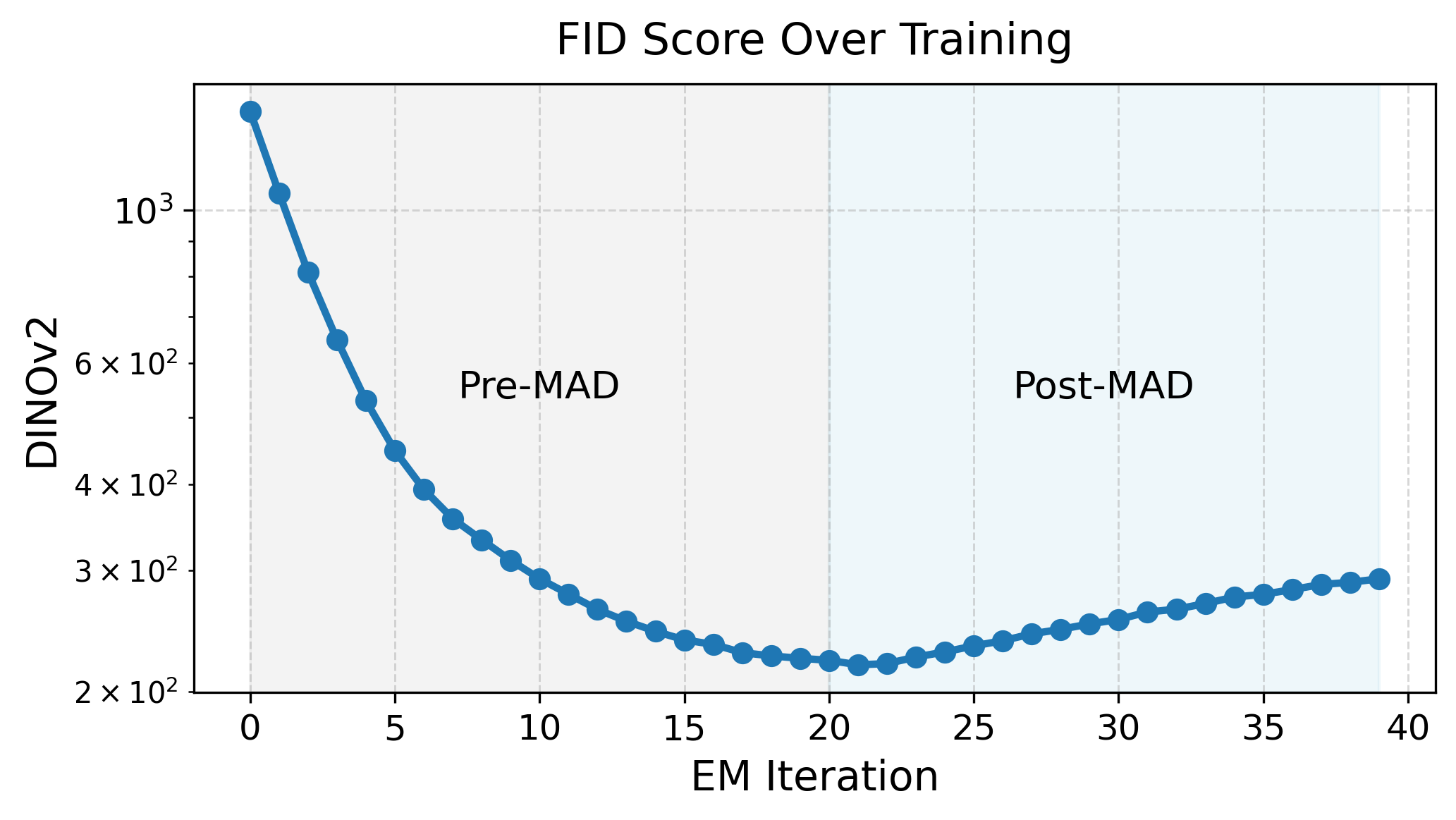}
  \end{minipage}
  \hfill
  \begin{minipage}[b]{0.48\textwidth}
    \centering
    \includegraphics[width=\textwidth]{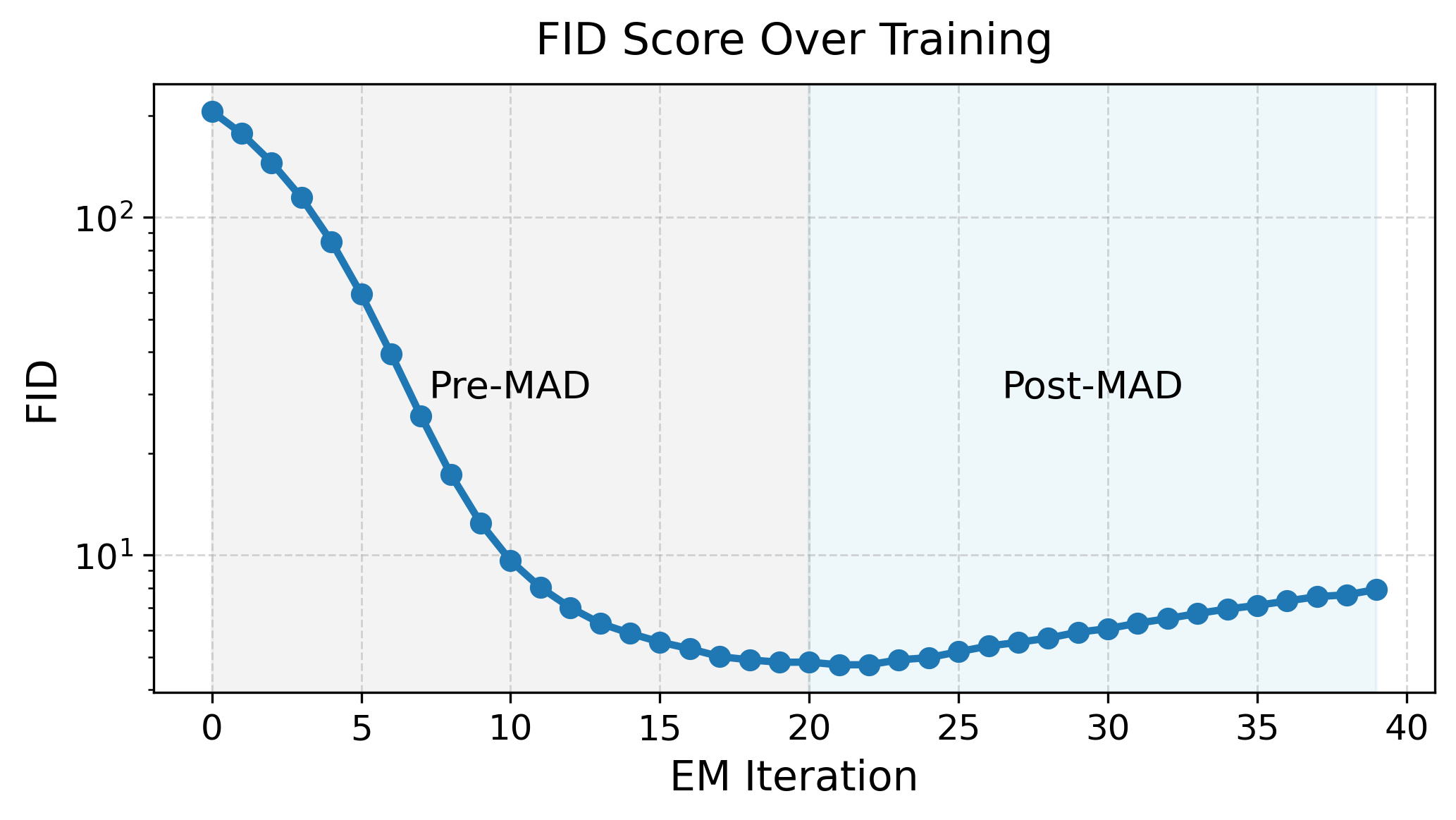}
  \end{minipage}

  \caption{\textcolor{black}{Evolution of DINOv2 and Fréchet Inception Distance across EM iterations for conditional model. After an initial phase of improvement, both metrics begin to gracefully degrade over later iterations. Experiment was done on CIFAR-10 with $\rho=0.75$ and discretization step $128$.}}
  \label{fig:cifar-mad-two-metrics}
\end{figure}

\subsection{\textcolor{black}{Analysis of Discretization Error}}

\textcolor{black}{In Section~\ref{sec:EM-converge}, we decomposed the score-matching error $\varepsilon_{\mathrm{KL}}^{(k)}$ into a discretization error and a learning error. In this section, we examine how the model's performance varies under different discretization choices. We train on randomly masked CIFAR-10 with corruption probability $\rho = 0.75$, and for discretization step counts $N \in \{64, 128, 256, 512\}$ we train the model and evaluate it after each EM iteration. The resulting performance curves are shown in Figure~\ref{fig:discretization_analysis}.
}

\textcolor{black}{
We observe that in the beginning iterations there is a large performance gap and then it closes to a smaller gap. We decomposed the score-matching error $\varepsilon_{\mathrm{KL}}^{(k)}$ into a discretization error and a learning error. In the beginning iterations the learning error is high and is causing the large gaps and when the learning error is decreased we could see the discretization error causing the gaps in performances.
}

\begin{figure}
    \centering
    \includegraphics[width=0.9\linewidth]{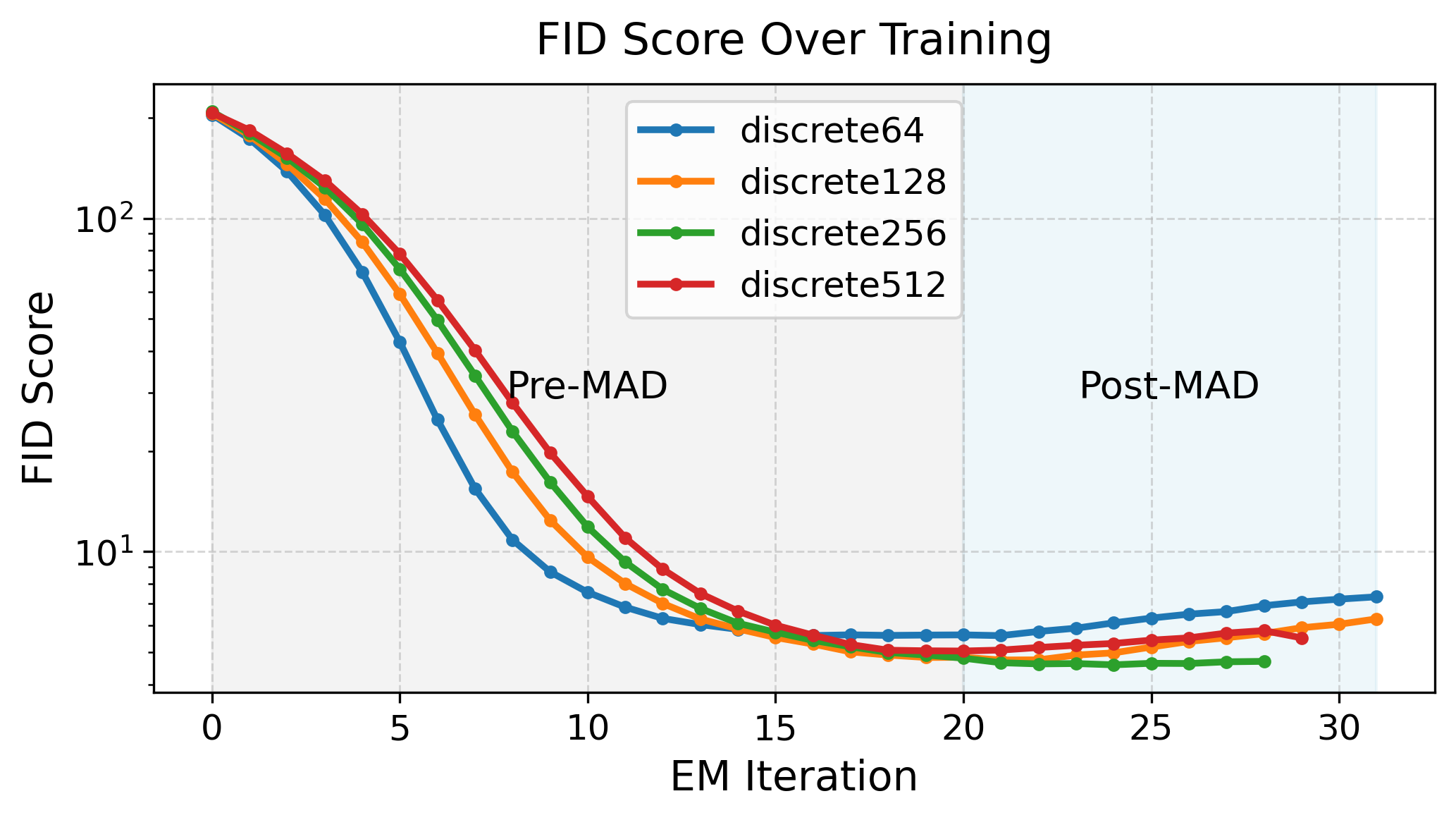}
    \caption{
    \textcolor{black}{Evolution of the conditional model's performance across EM iterations under different discretization choices for the CIFAR-10 masking experiment with $\rho = 0.75$.}
    }
    \label{fig:discretization_analysis}
\end{figure}

\subsection{\textcolor{black}{Masking $+$ Gaussian Noise Corruption}}\label{sec:maskandnoise}

\textcolor{black}{We also evaluate our method under a mixed corruption model combining masking with additive Gaussian noise. Specifically, we run the CIFAR-10 experiment with a masking probability of $\rho = 0.5$, which is milder than the high-corruption setting $\rho = 0.75$, and additionally add Gaussian noise with standard deviation $\sigma = 0.2$. The resulting likelihood function is
\[
Q(Y \mid X) = A \bigl( X + \sigma Z \bigr),
\quad Z \sim \mathcal{N}(0, I), \quad A_{ij} \sim \mathrm{Ber}(0.75),
\]
where $A$ denotes the random masking matrix. Qualitative samples are shown in Figure~\ref{fig:samples_Asimga02}, and the evolution of evaluation metrics across EM iterations is presented in Figure~\ref{fig:fid_Asimga02}.
}

\begin{figure}[t]
    \centering
    
    \includegraphics[width=0.7\linewidth]{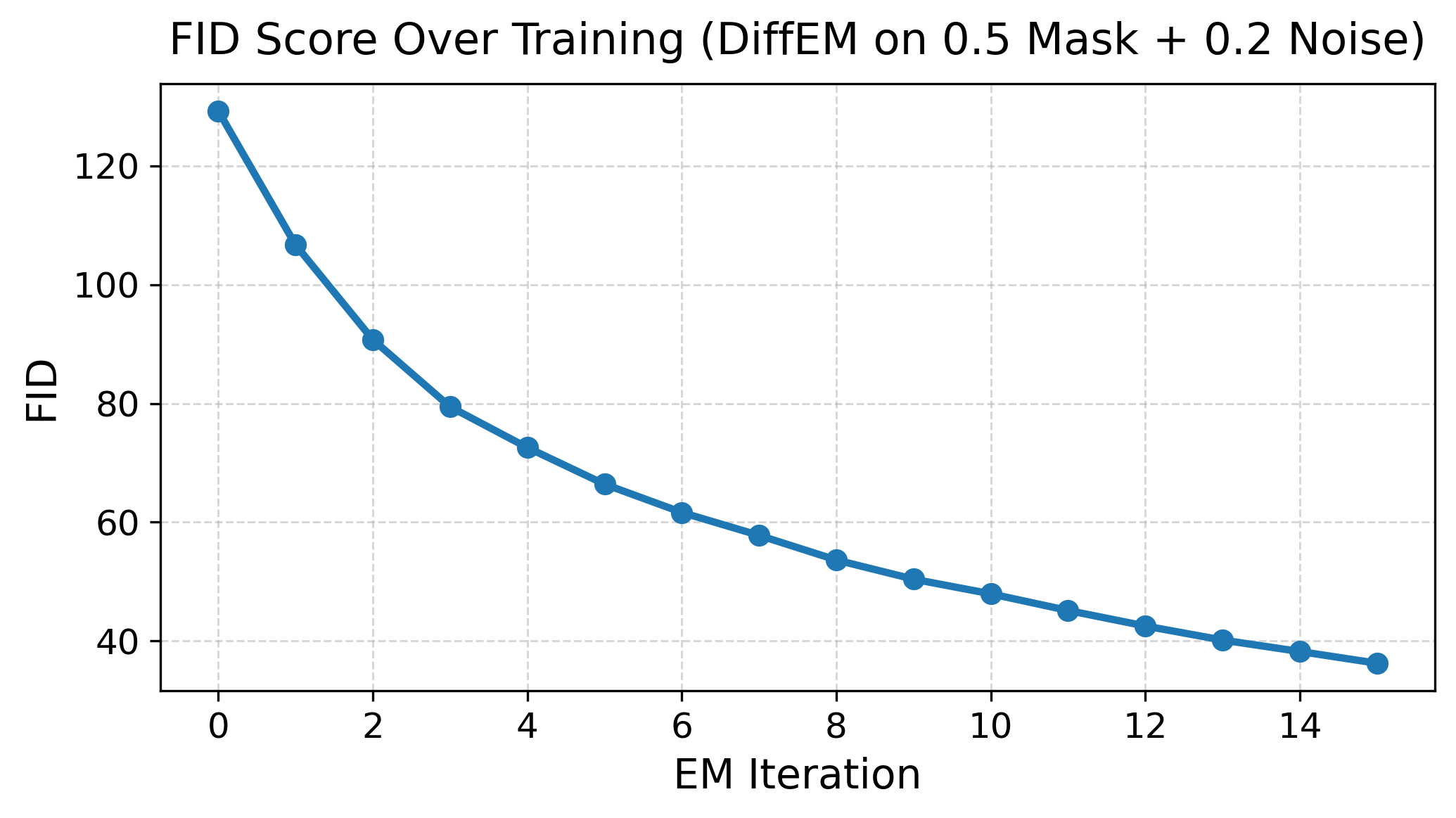}
    
    \caption{
        \textcolor{black}{Conditional model's FID evolution over EM iterations for experiment done in \ref{sec:maskandnoise}. The dataset is corrupted by sampling $A(X + \sigma N)$ where $A_{ij}\sim \operatorname{Ber}(\frac 12)$, $N\sim \mathcal{N}(0,I)$ and $\sigma = 0.2$. Images are normalized so that $X_i \in [-2, 2]$.}
    }
    \label{fig:fid_Asimga02}
\end{figure}

\begin{figure}[t]
    \centering
    
    \includegraphics[width=0.7\linewidth]{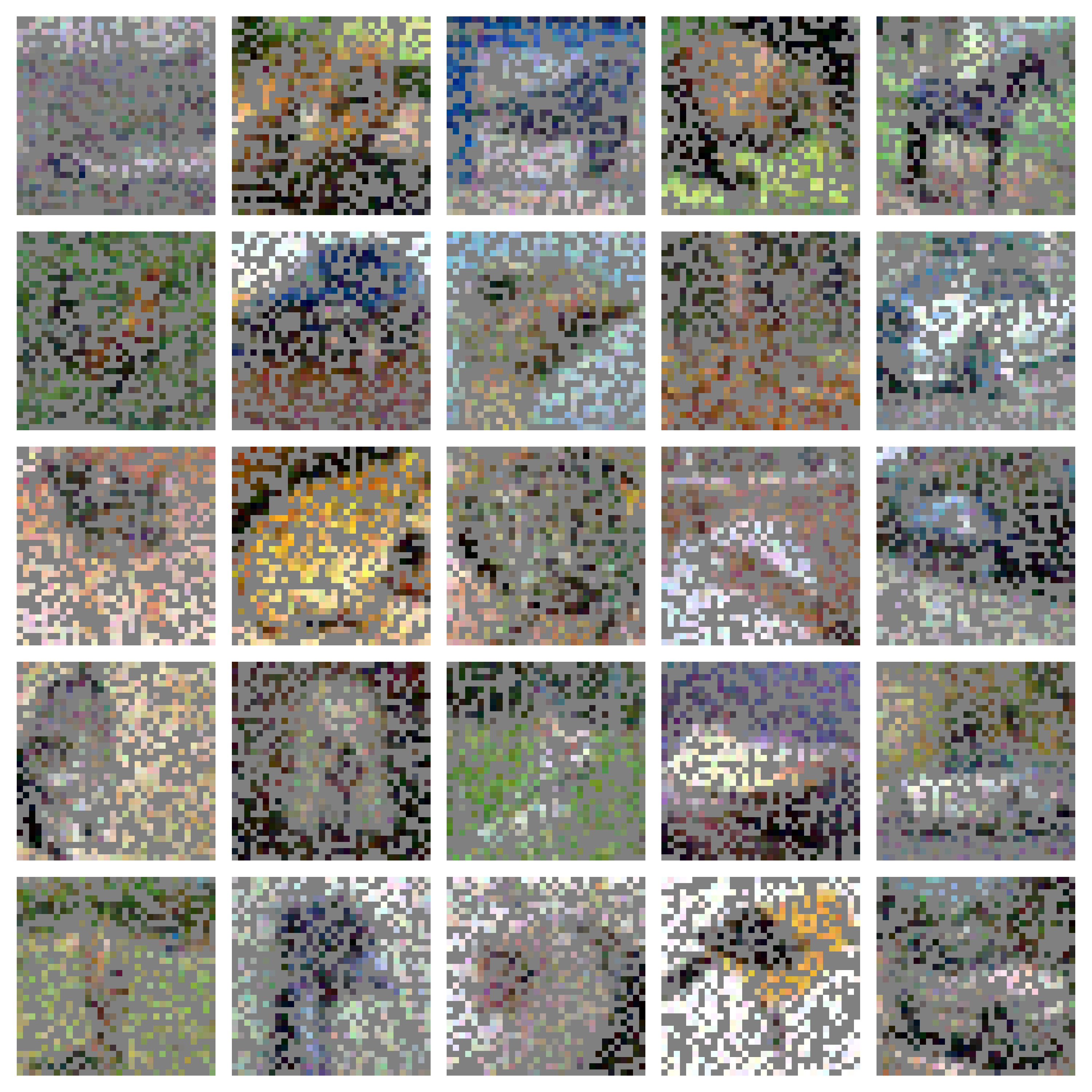}
    
    \includegraphics[width=0.7\linewidth]{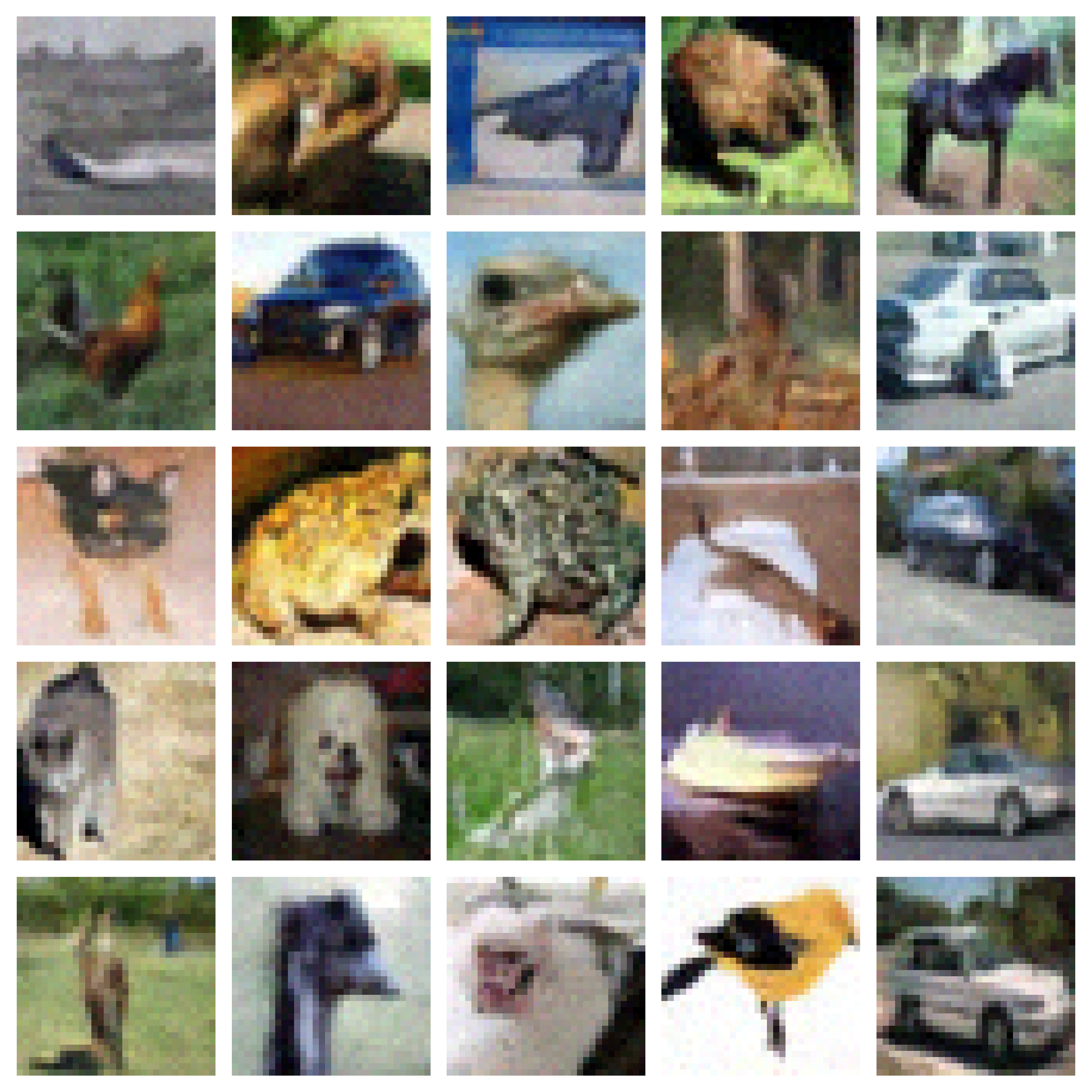}
    
    \caption{
        \textcolor{black}{Samples from the Experiment in section \ref{sec:maskandnoise}, top half showing the samples in the corrupted dataset that the model has access to and the bottome half shows the samples generated by the conditional model.}
    }
    \label{fig:samples_Asimga02}
\end{figure}

\subsection{Masked CelebA}

As a demonstration, we sample seven masked images from the CelebA training set under the $75\%$ corruption setting. Using the trained model, we generate reconstructions for each image after the $1^{\text{st}}$, $2^{\text{nd}}$, $4^{\text{th}}$, $8^{\text{th}}$, and $16^{\text{th}}$ iterations. The results are shown in \cref{fig:CelebA-pictures}.

The denoiser architecture is detailed in \cref{tab:architecture-CIFAR}. For the $50\%$ corruption setting, we trained the conditional diffusion model for 20 EM iterations, while for the $75\%$ corruption setting we trained it for 24 iterations. In both cases, we trained EM-MMPS for 9 iterations. The computational overhead of Moment Matching Posterior Sampling becomes particularly evident in this experiment, as the CelebA dataset is larger (202,599 images) and each image is higher-dimensional ($64 \times 64$) compared to CIFAR-10. We observed that each EM iteration of EM-MMPS required $4.85 \pm 0.02$ hours, whereas each iteration of DiffEM required $1.19 \pm 0.03$ hours.

\begin{figure}
    \centering
    \includegraphics[width=0.9\linewidth]{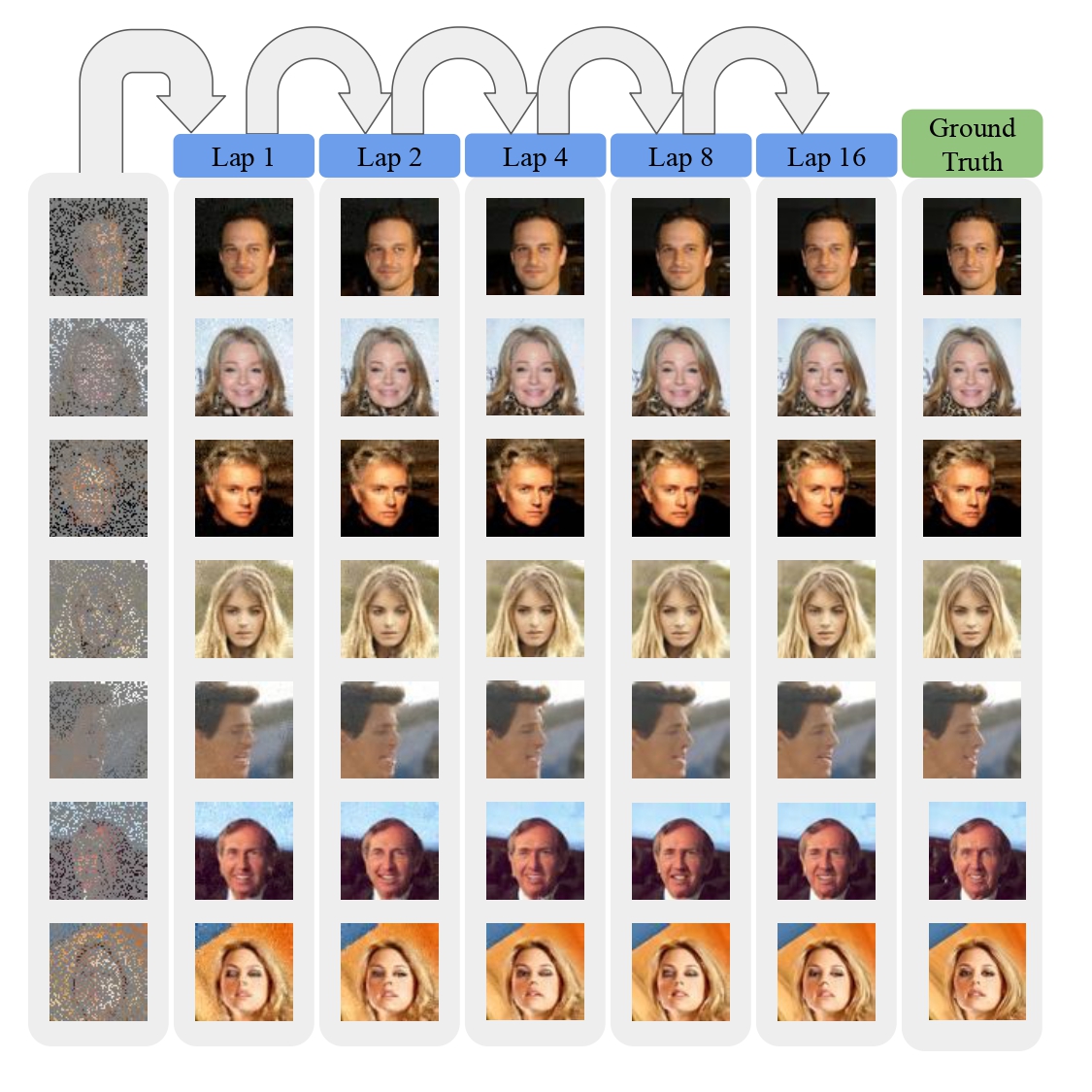}
    \caption{
    Qualitative results on the CelebA experiment under the $75\%$ corruption setting. The leftmost column shows samples from the dataset. The subsequent columns display reconstructions generated by the conditional diffusion model after laps $k=1,2,4,8,16$. The rightmost column shows the ground-truth images.
    }
    \label{fig:CelebA-pictures}
\end{figure}

\begin{figure}
    \centering
    \includegraphics[width=0.9\linewidth]{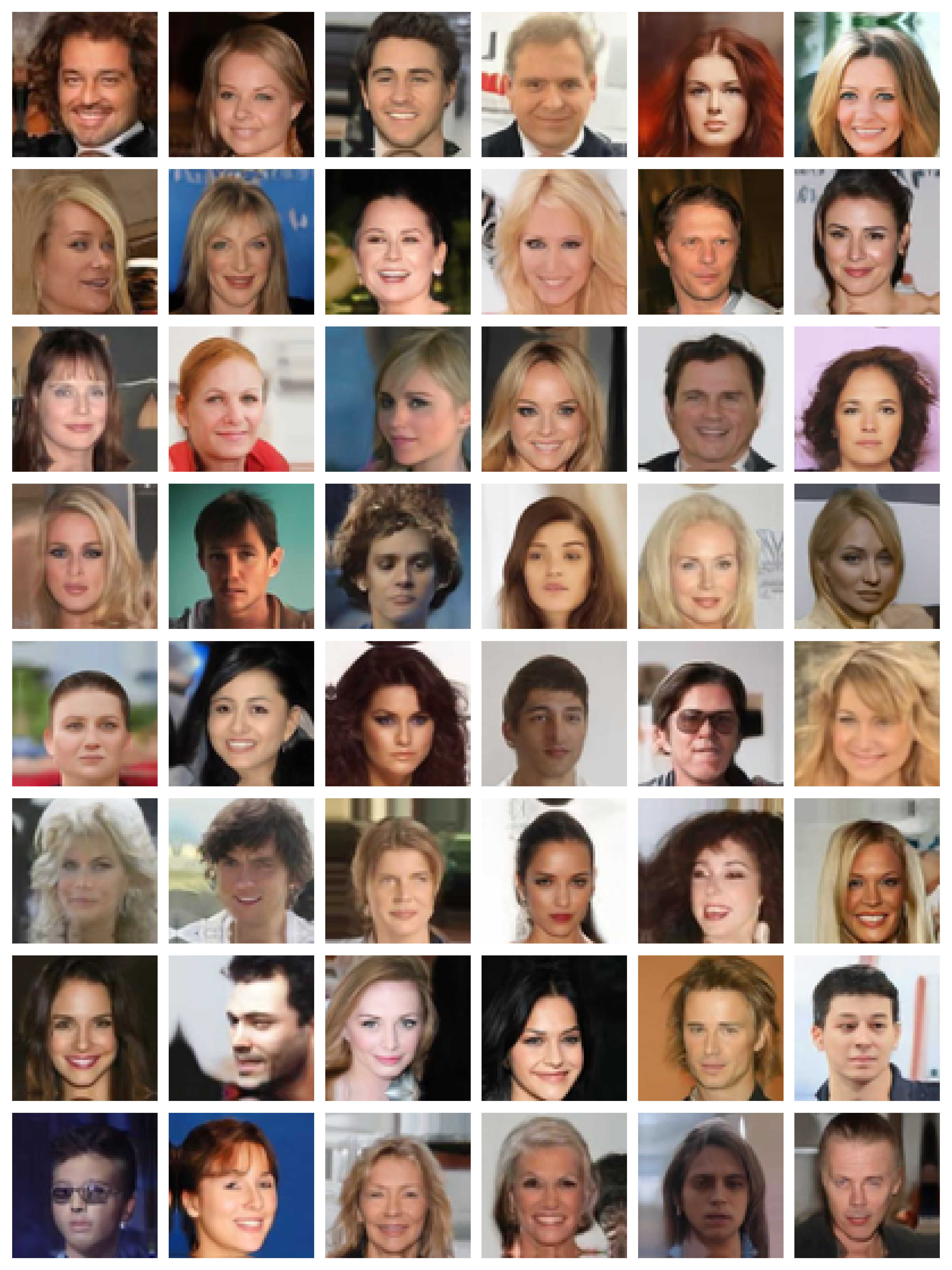}
    \caption{\textcolor{black}{Unconditional samples from the experiment with CelebA dataset with $\rho=0.5$ corruption probability.}}
    \label{fi:CelebA-pictures-2}
\end{figure}

\hide{

\subsection{Accelerated MRI Reconstruction}

Following \citet{kawar2023gsure,rozet2024learning}, we evaluate our method on single-coil knee MRI scans from the fastMRI dataset~\citep{Zbontar2018fastMRI,Knoll2020fastMRI}. In this setting, the likelihood function follows the standard accelerated MRI acquisition, where the observation $Y$ is generated by first applying a discrete Fourier transform and then performing horizontal frequency sub-sampling with acceleration factor $R=6$~\citep{Jalal2021Robust}. In other words, each frequency is observed with probability $\frac{1}{R}$. Finally, we add isotropic Gaussian noise with variance $\sigy^2=10^{-4}$ to match the experimental settings of~\citep{kawar2023gsure,rozet2024learning}.

\newcommand{\FT}{\mathbf{F}}
\newcommand{\IFT}{\mathbf{F}_{i}}

In the fastMRI experiment, the observation is generated as
\begin{align*}
    Y=(\IFT S \FT X+\epsilon, S), \qquad \epsilon\sim \normal{0,\sigy^2\id},~~ S\sim P_S,
\end{align*}
where $X$ is the clean MRI scan, $\FT$ ($\IFT$) is the matrix corresponding to the (inverse) discrete Fourier transform, and $S\sim P_S$ is the subsampling matrix. For accelerated MRI with subsampling factor $R$, $S$ is a diagonal matrix where each diagonal entry is sampled from Bernoulli$(\frac{1}{R})$.

In the experiment, we set $R=6$, $\sigy^2=10^{-4}$ following \citep{kawar2023gsure,rozet2024learning} and apply DiffEM with 50 iterations. The network architecture and hyperparameters are presented in \cref{tab:architecture-CIFAR}. We present reconstructed sample images in \cref{fig:fastMRI-pictures}.

\begin{figure}[t]
    \centering
    \hide{
    \includegraphics[width=\textwidth]{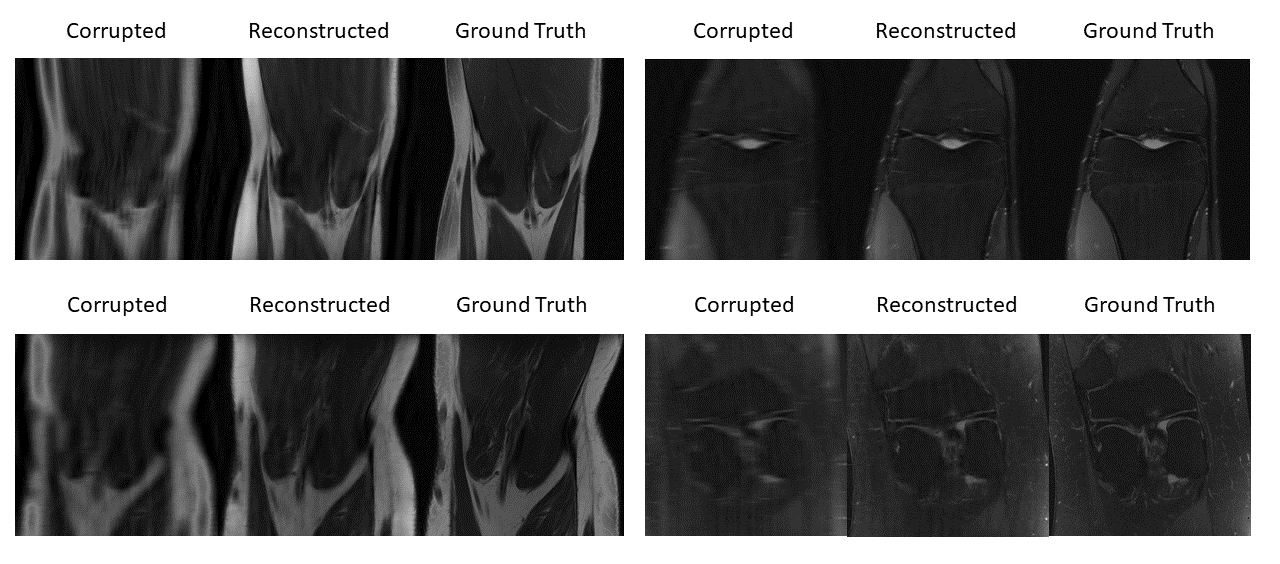}
    }
    \caption{Accelerated MRI reconstruction results. Left: corrupted observations. Middle: reconstructed images from our conditional diffusion model. Right: ground-truth images.}
    \label{fig:fastmri}
\end{figure}

Figure~\ref{fig:fastmri} demonstrates the effectiveness of our method in reconstructing MRI images from sub-sampled measurements.

\begin{figure}
    \centering
    \hide{
    \includegraphics[width=\linewidth]{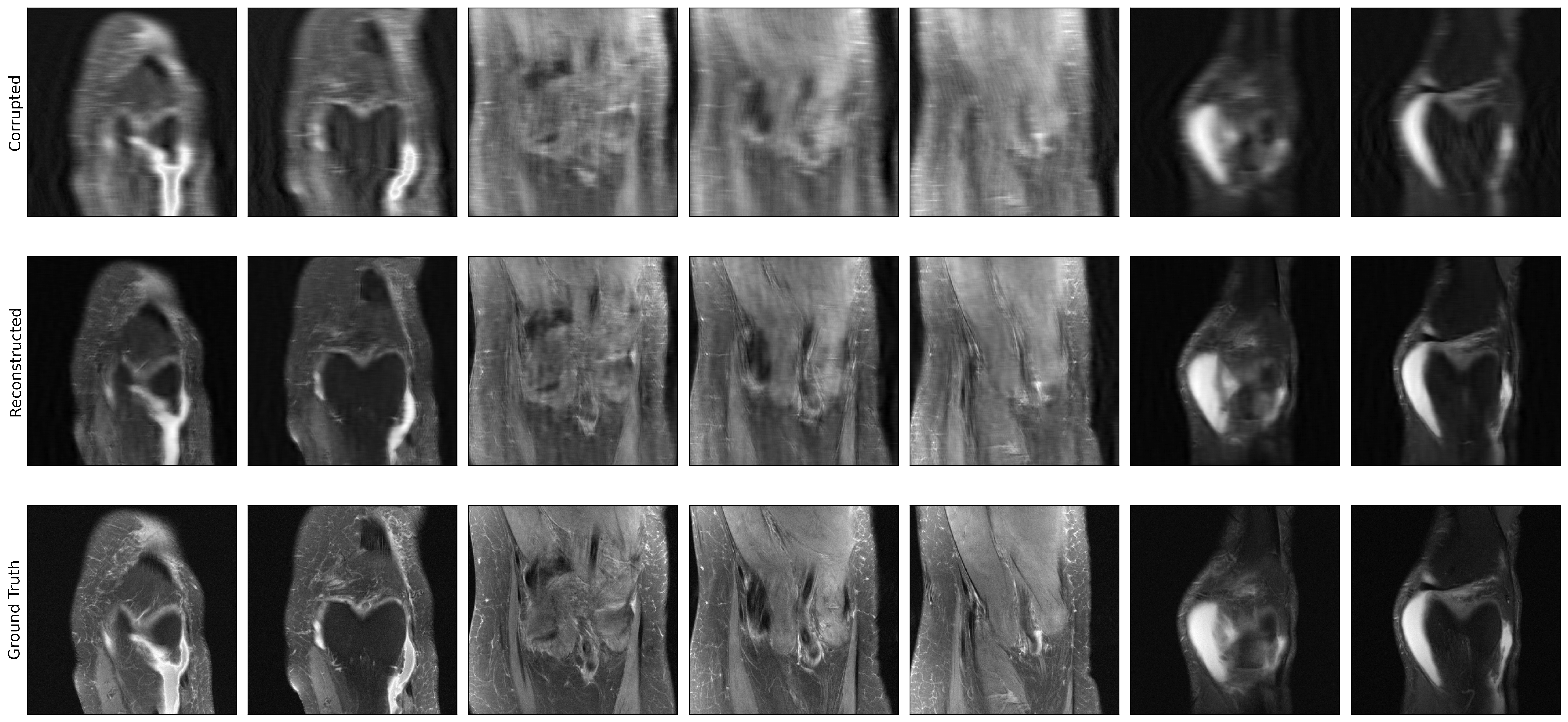}
    \vspace{5pt}
    \includegraphics[width=\linewidth]{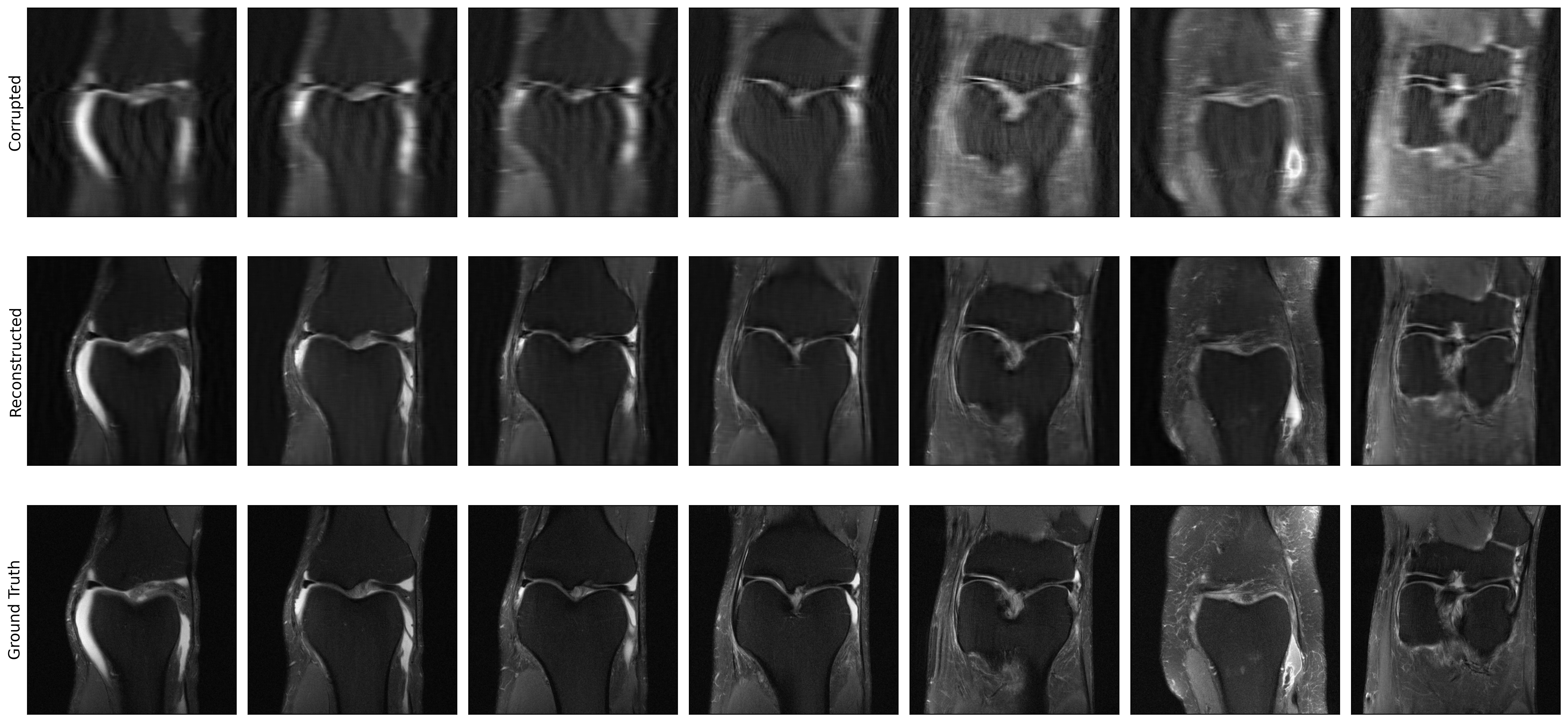}
    \vspace{5pt}
    \includegraphics[width=\linewidth]{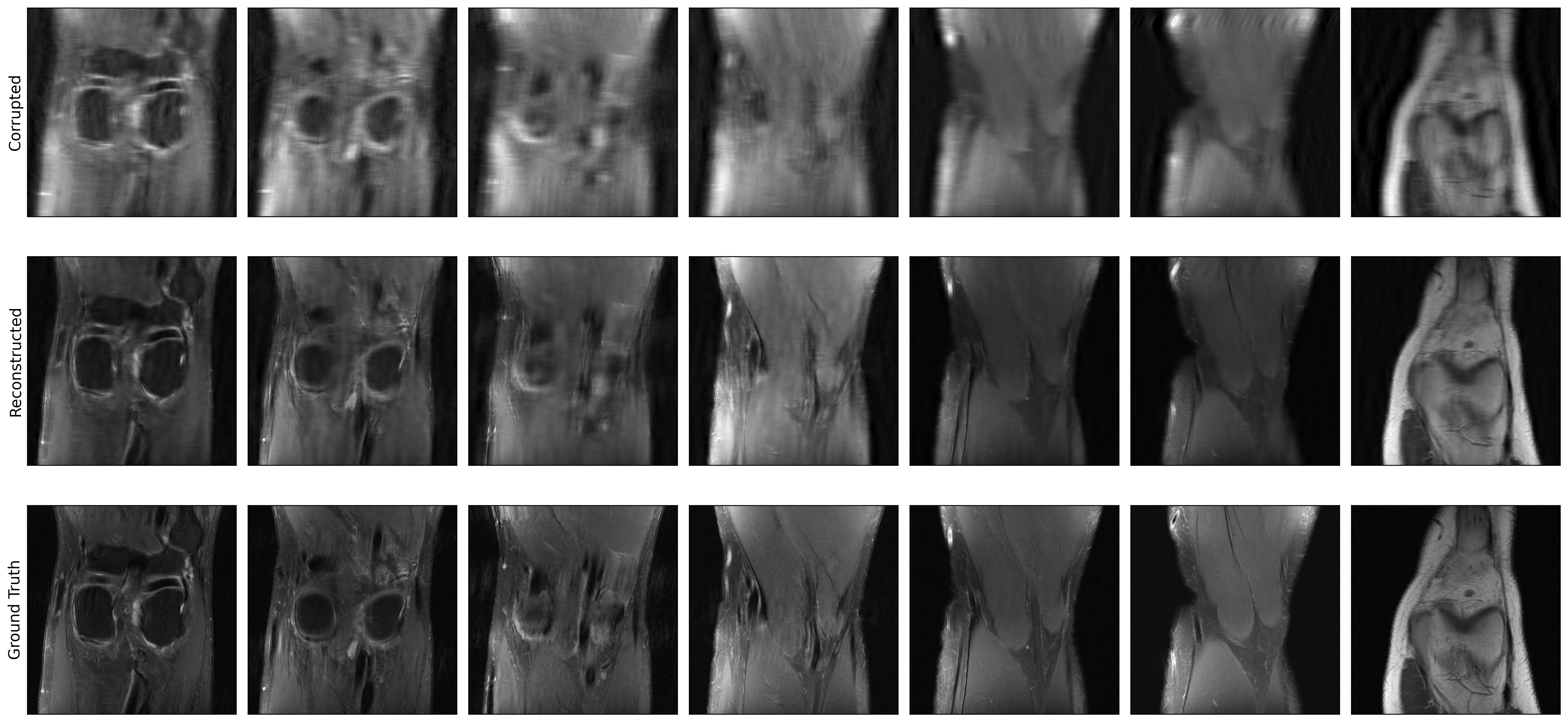}
    }
    \caption{Qualitative results of MRI reconstruction from sub-sampled k-space measurements. From top to bottom: zero-filled reconstruction, one sample from our DiffEM model, and ground truth images.}
    \label{fig:fastMRI-pictures}
\end{figure}

}

\end{document}